\documentclass[10pt,journal,compsoc]{IEEEtran}
\usepackage{amsmath,amsfonts}
\usepackage{algorithmic}
\usepackage{algorithm}
\usepackage{array}
\usepackage{textcomp}
\usepackage{stfloats}
\usepackage{url}
\usepackage{verbatim}
\usepackage{cite}
\usepackage{soul}
\usepackage{balance}
\usepackage{colortbl}
\usepackage{amsthm}
\newtheorem{proposition}{Proposition}
\newtheorem{theorem}{Theorem}

\usepackage{courier}
\usepackage{graphicx, wrapfig, setspace, booktabs}
\usepackage{subcaption}
\usepackage{multirow}
\usepackage{pifont}
\usepackage{comment}
\usepackage{float}
\usepackage[table]{xcolor}
\usepackage{threeparttable}
\usepackage{hyperref}
\usepackage{booktabs}
\usepackage{adjustbox} 
\usepackage{longtable}

\hypersetup{
    colorlinks=true,
    citecolor=black,
    linkcolor=blue,
    filecolor=magenta,      
    urlcolor=blue,
    pdfpagemode=FullScreen,
    }

\begin{document}

\title{A Unified Perspective for Learning Graph Representations Across Multi-Level Abstractions}

\author{Mohamed~Mahmoud~Amar,~Nairouz~Mrabah,~
        Mohamed~Bouguessa,~
        Abdoulaye~Baniré~Diallo
\IEEEcompsocitemizethanks{\IEEEcompsocthanksitem M. M. Amar, N. Mrabah, M. Bouguessa, A. B. Diallo are with the Department
of Computer Science, University of Quebec at Montreal, Montreal, QC, Canada.
E-mails: amar.mohamed\_mahmoud@courrier.uqam.ca,~mrabah.nairouz@gmail.com, ~bouguessa.mohamed@uqam.ca,~diallo.abdoulaye@uqam.ca}
}

\markboth{IEEE Transactions on Knowledge and Data Engineering,~Vol.~x, No.~x, Month~Year}
{Amar \MakeLowercase{\textit{et al.}}: A Unified Perspective for Learning Graph Representations Across Multi-Level Abstractions}

\IEEEtitleabstractindextext{
\begin{abstract}
Graph Self-Supervised Learning (GSSL) has emerged as a powerful paradigm for generating high-quality representations for graph-structured data.
While multi-scale graph contrastive learning has received increasing attention, many existing methods still predominantly focus on a single graph abstraction level. To address this limitation, we propose a unified contrastive framework that can target node-level, proximity-level, cluster-level, and graph-level information and integrate them through a linear combination of similarity scores on positive pairs and dissimilarity scores (i.e., similarity scores on negative pairs). Furthermore, current approaches typically assign uniform penalty strengths to all examples, which reduces optimization flexibility and leads to ambiguous convergence status. To overcome this, we introduce a novel parameter-free fine-grained self-weighting mechanism that adaptively assigns weights to individual similarity and dissimilarity scores. The proposed mechanism emphasizes the scores that deviate significantly from their target values. Our approach not only enhances optimization flexibility but also eliminates the computational overhead of hyperparameter tuning in conventional multi-task GSSL methods. Comprehensive experiments on real-world datasets show that our methods consistently outperform state-of-the-art approaches across downstream tasks, including classification, clustering, and link prediction, in both single-level and multi-level scenarios.
\end{abstract}

\begin{IEEEkeywords}
Graph Self-Supervised Learning; Contrastive Learning; Multi-Task Learning.
\end{IEEEkeywords}}

\maketitle

\IEEEdisplaynontitleabstractindextext

\IEEEpeerreviewmaketitle

\section{Introduction} 

\IEEEPARstart{S}{elf-supervised} learning (SSL) \cite{shaheen2025rethinking} has emerged as a highly effective approach for learning data representations, particularly in scenarios where supervisory signals are unavailable. Unlike supervised learning, which requires large labeled datasets, or reinforcement learning, which depends on repeated trials and feedback, SSL uses inherent signals within the data itself. By designing pretext tasks that challenge models to predict or reconstruct parts of the input data, SSL enables the learning of intrinsic data properties and relationships.
As a result, representations learned through SSL are often more robust and versatile. When finetuned for downstream tasks, they show strong results across various applications \cite{chen2020big, mrabah2023toward, mrabah2023exploring}.

In recent years, the contrastive learning paradigm has been widely adopted in graph self-supervised learning (GSSL) \cite{liu2022graph}. This paradigm aims to bring similar entities closer together and push dissimilar ones farther apart in the representation space. 
Augmented views of a graph are generated through transformations like node feature masking, edge perturbation, and subgraph sampling. These augmentations create diverse yet semantically consistent views. Thus, the model can learn invariant representations by maximizing similarity between positive pairs while distinguishing them from negative pairs. 
By emphasizing intrinsic relationships within the graph, contrastive GSSL methods learn generalizable representations, which can be effectively used in node classification, link prediction, and clustering without requiring extensive retraining.

%Graph Abstraction levels 
GSSL methods operate across multiple abstraction levels to capture different aspects of a graph's structure and semantics. \textit{Node-Level} methods \cite{zhu2020deep, thakoor2021large} focus on local structural and feature information, making them particularly effective for tasks like node classification and link prediction. However, they may struggle to capture global properties, risk overfitting to local structures, and are sensitive to noise. \textit{Proximity-Level} methods \cite{grover2016node2vec, kipf2016variational} emphasize structural relationships within local neighborhoods, excelling in tasks like link prediction and community detection. However, they often fail to capture long-range dependencies and global structures. \textit{Cluster-Level} methods \cite{mrabah2024contrastive, mrabah2023beyond} identify and utilize community structures to capture relationships between node groups. Their performance depends on achieving the right balance in cluster granularity, as overly coarse or fine clusters can lead to overfitting or loss of detail. Moreover, the clustering process is prone to noisy clustering assignments, which can affect the quality of the learned representations. \textit{Graph-Level} methods \cite{velickovic2019deep, hassani2020contrastive} focus on the global structure of the graph, making them ideal for graph classification tasks. However, these methods may overlook important local details.

%Multi-task methods and their limitations
The previous multi-task GSSL methods combine several pretext tasks \textit{simultaneously} to improve task generalization. AutoSSL \cite{jin2021automated} employs a pseudo-homophily mechanism to assess the quality of representations across pretext tasks. Then, evolutionary algorithms or meta-gradient descent are used to identify the best linear combination of GSSL tasks. ParetoGNN \cite{ju2022multi} leverages diverse pretext tasks, including generative reconstruction, whitening decorrelation, and mutual information maximization. This model introduces a multi-gradient descent mechanism that promotes Pareto optimality across pretext tasks and mitigates potential conflicts. However, both approaches rely on an inner optimization process to search the hyperparameters associated with self-supervised losses, resulting in significant computational overhead. Moreover, both methods select the set of candidate pretext tasks based on heuristics and only focus on identifying the optimal combination of these tasks. The synergy across multiple graph abstraction levels
—node-level, proximity-level, cluster-level, and graph-level—remains unexplored in multi-task GSSL. We refer to this paradigm as multi-level GSSL.

\begin{figure}[t]
    \centering
    \begin{subfigure}[b]{0.15\textwidth}
        \centering
        \includegraphics[width=\linewidth]{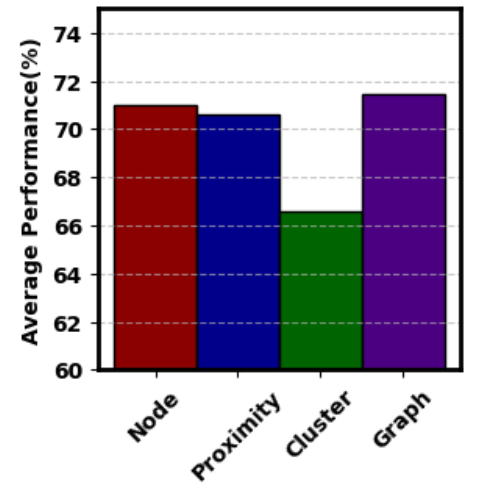}
    \end{subfigure}
    \hfill
    \begin{subfigure}[b]{0.15\textwidth}
        \centering
        \includegraphics[width=\linewidth]{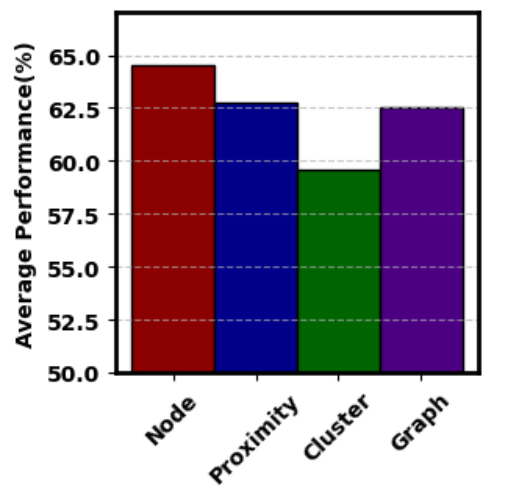}
    \end{subfigure}
    \hfill
    \begin{subfigure}[b]{0.15\textwidth}
        \centering
        \includegraphics[width=\linewidth]{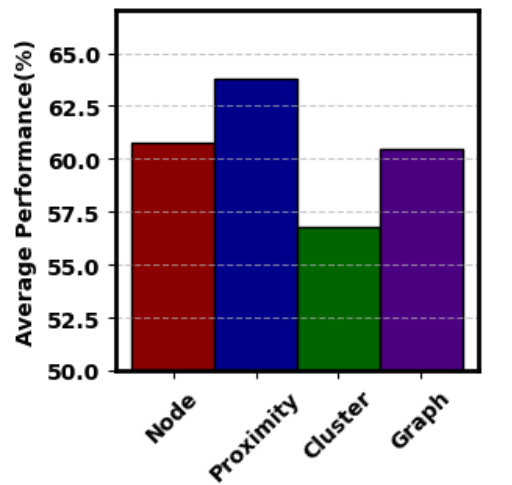}
    \end{subfigure}
    \hfill
    \begin{subfigure}[b]{0.15\textwidth}
        \centering
        \includegraphics[width=\linewidth]{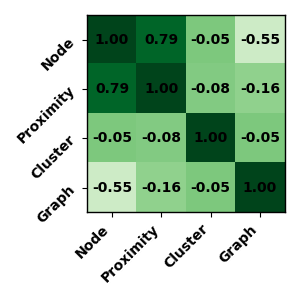}
        \caption{Cora.}
        \label{fig:cos_sim_cora}
    \end{subfigure}
    \hfill
    \begin{subfigure}[b]{0.15\textwidth}
        \centering
        \includegraphics[width=\linewidth]{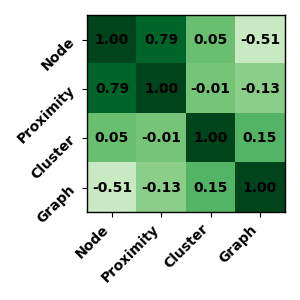}
        \caption{CiteSeer.}
        \label{fig:cos_sim_citeseer}
    \end{subfigure}
    \hfill
    \begin{subfigure}[b]{0.15\textwidth}
        \centering
        \includegraphics[width=\linewidth]{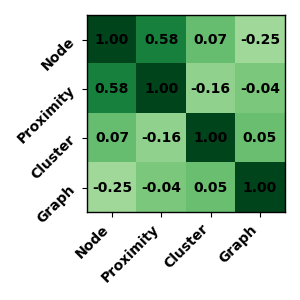}
        \caption{Pubmed.}
        \label{fig:cos_sim_pubmed}
    \end{subfigure}
    \caption{
   Top row: Average performance of our single-level GSSL model (SL-GSSL) across three downstream tasks (node classification, node clustering, and link prediction) at different abstraction levels on Cora, CiteSeer, and Pubmed datasets. Bottom row: Average cosine similarity between the gradients of two abstraction-level GSSL losses when trained using our linear multi-level GSSL model (L-ML-GSSL).}
    \label{fig:two_images}
    \vspace{-5mm}
\end{figure}

We introduce a unified framework that can operate \textit{seamlessly} at different abstraction levels through a linear combination of similarity and dissimilarity scores. The proposed framework yields competitive results compared with its corresponding state-of-the-art methods from each level. Using the unified framework, we analyze the correlation between the graph abstraction levels and their relevance to the downstream tasks. As illustrated in Fig. \ref{fig:two_images}, the GSSL abstraction levels do not perform equally well across datasets, and there is a potential conflict between them. To this end, we extend the linear combination of similarity and dissimilarity scores across all abstraction levels. As an advantage, multi-level learning can capture local and global information.

From another perspective, previous single-task and multi-task GSSL approaches \textit{implicitly} apply uniform penalty strengths to all examples, which limits their optimization flexibility and can lead to ambiguous convergence status \cite{sun2020circle}. We address this by introducing a dynamic weighting mechanism that adaptively assigns weights to individual similarity and dissimilarity scores across examples and abstraction levels, prioritizing those that significantly deviate from the optimum. We implement the weighting coefficients as linear functions w.r.t. their similarity/dissimilarity scores. This results in a hyperspherical decision boundary w.r.t. similarity/dissimilarity scores. This mechanism enhances optimization flexibility and ensures definite convergence status while eliminating the computational cost associated with inner optimization for hyperparameter tuning, as seen in conventional multi-task GSSL.
% \textbf{Contributions.} \begin{enumerate}
%     \item We propose the \textit{multi-level GSSL paradigm}, which exploits the synergy of multiple abstraction levels—node-level, proximity-level, cluster-level, and graph-level. 
%     \item We introduce a unified framework that can operate seamlessly at different abstraction levels and facilitate their integration by combining their similarity and dissimilarity scores. 
%     \item We then extend the unified framework to the multi-task scenario by leveraging multi-level graph information. Our multi-task formulation aligns with the new GSSL paradigm. 
%     \item We propose a dynamic self-weighting mechanism that adaptively assigns weights to individual similarity and dissimilarity scores across examples and abstraction levels, prioritizing the scores that significantly deviate from the optimum. This mechanism enhances optimization flexibility and ensures definite convergence status while eliminating the computational cost associated with training multiple expert networks and inner optimization for hyperparameter tuning. 
%     \item We conduct extensive experiments on several real-world datasets. Our results showcase the effectiveness of our approaches in single-task and multi-task learning. In particular, the proposed self-weighted approach achieves better task generalization by targeting multi-level graph information.
% \end{enumerate}

% Nairouz suggestion
\textbf{Contributions.} This work introduces a principled multi-level graph self-supervised learning (GSSL) framework. It unifies learning objectives across 4 abstraction levels and proposes a score-level self-weighted multi-task formulation with theoretical and empirical support. The contributions are summarized below:
\begin{itemize}
\item \textbf{Multi-level GSSL paradigm with explicit cross-level complementarity and conflict.} We formalize GSSL as learning from four abstraction levels: node, proximity, cluster, and graph.
We empirically show that no single level is consistently optimal across datasets and downstream tasks. Fig.~\ref{fig:two_images} (top row) summarizes these task-dependent behaviors. Fig.~\ref{fig:two_images} (bottom row) further shows that level-wise objectives can have weak gradient alignment under joint training, which motivates principled multi-level integration.

\item \textbf{Unified single-level contrastive objective across four abstraction levels (\textsc{SL-GSSL}).
We develop a single contrastive formulation that is instantiated at each level by changing only the level-specific positive/negative sample generators (Eqs.~(\ref{eq.pos_samples})--(\ref{eq.loss_unified})). The training pipeline is summarized in Algorithm~\ref{alg:slgssl}. We adopt a fixed augmentation and training configuration shared across datasets and downstream tasks (Table~\ref{Table:hyperparameters_indep}), and tune only a small set of data-dependent hyperparameters (Table~\ref{Table:hyperparameters_dep}). Under this unified setup, \textsc{SL-GSSL} achieves competitive or superior within-level performance compared with state-of-the-art single-task baselines (Table~\ref{table_1}). Ablations validate the design of the unified loss and show that margin removal, hinge replacement, and InfoNCE~\cite{chen2020asimple}  substitution degrade performance (Table~\ref{sl_ablation}).}

\item \textbf{Score-level multi-level integration and a unified multi-task formulation (\textsc{L-ML-GSSL} / \textsc{LW-ML-GSSL}).} We extend the unified formulation to multi-level learning by integrating positive and negative similarity scores across levels inside a single objective (Eqs.~(\ref{eq.loss_l_ml_gssl}) and (\ref{eq.loss_lw_ml_gssl})). This yields a multi-task GSSL setup where abstraction levels act as complementary tasks under a shared encoder (Fig.~\ref{fig:Model_Architecture}). Table~\ref{Table:table_3} shows that different levels contribute distinct information and that multi-level variants improve over single-level ones. At the same time, Table~\ref{linear_vs_self_weighted} shows that naive linear integration is not sufficient to consistently surpass the best single-level configuration, which motivates adaptive balancing.

\item \textbf{Dynamic score-level self-weighting with hyperspherical convergence geometry (\textsc{LSW-ML-GSSL}).} We propose a linear self-weighting mechanism that assigns weights to individual positive and negative similarity scores across levels (Eqs.~(\ref{positive_weight}) and (\ref{negative_weight})), resulting in the self-weighted multi-level objective (Eq.~(\ref{eq.loss_lsw_ml_gssl})). The induced decision boundary becomes hyperspherical in similarity-score space (Eqs.~(\ref{eq.decision_boundary}) and (\ref{eq.circle})). Gradient visualizations show improved optimization flexibility and reduced convergence ambiguity compared with uniform weighting (Figs.~\ref{fig:fig1}, \ref{fig:gr_lsw_sn}, and \ref{fig:gr_lsw_sp}). We provide a dedicated convergence analysis in similarity-score space (Theorem~\ref{thm:error_contraction} and Proposition~\ref{prop:no_radial_linear}). The method avoids auxiliary expert networks and inner-loop optimization, and remains computationally efficient (Fig.~\ref{fig:execution_time}).

\item \textbf{Extensive empirical validation, ablations, and robustness studies.} We evaluate on six real-world datasets and three downstream tasks. The proposed \textsc{LSW-ML-GSSL} achieves the best overall performance against state-of-the-art multi-task and multi-scale baselines (Table~\ref{table_2}). It consistently improves over linear multi-level variants and over the strongest single-level configuration (Table~\ref{linear_vs_self_weighted}). It also outperforms representative multi-loss weighting strategies (Table~\ref{lsw_vs_other_weighting_methods}). Sensitivity analyses indicate stable performance across wide hyperparameter ranges (Figs.~\ref{fig:sensitivity} and \ref{fig:sensitivity_revision}).
\end{itemize}

\section{Related Work}
\label{sec_related_work}
\textbf{Single-Task Methods.} 
Conventional GSSL methods construct self-supervision signals by extracting information from different graph abstraction levels, including individual nodes, local proximity, clusters, and the overall graph structure. 
At the \textit{node level}, methods such as GRACE \cite{zhu2020deep} and GraphCL \cite{you2020graph} learn node representations by maximizing agreement between augmented graph views. This agreement is achieved using the InfoNCE loss \cite{oord2018representation}, which serves as a lower-bound estimator of Mutual Information (MI). Building upon this, GCA \cite{zhu2021graph} enhances GRACE by introducing an adaptive augmentation strategy that selectively modifies edges and node features based on their importance, rather than applying random perturbations.
\textit{Proximity-level} GSSL methods differ from node-level approaches by eliminating the need for explicit data augmentation. Instead, these methods leverage neighborhood relationships to encode semantic invariance. Methods such as DeepWalk \cite{perozzi2014deepwalk} and Node2Vec \cite{grover2016node2vec} employ a random walk to capture proximity-based information. Another category of methods focuses on reconstructing the graph to capture proximity-level information. For example, MGAE \cite{wang2017mgae} and MaskGAE \cite{li2023s} randomly mask edges 
and train the model to reconstruct the missing connections. When the observed topology is noisy or imperfect, structure-learning methods aim to refine or infer a task-adaptive graph. For example, SLAPS~\cite{fatemi2021slaps} shows that self-supervised signals can improve graph structure learning for GNNs, leading to better downstream performance, while latent graph inference methods aim to recover an underlying graph structure from limited supervision \cite{lu2023latent}. From another perspective, methods such as NCLA \cite{shen2023neighbor} apply neighbor contrastive learning on learnable graph augmentations, enabling the joint learning of both augmentations and embeddings. \textit{Cluster-level} GSSL methods emphasize capturing intra-cluster similarities and inter-cluster distinctions by integrating clustering with embedding learning. These methods minimize a clustering-oriented loss. 
For instance, DAEGC \cite{wang2019attributed} uses an attention mechanism to generate expressive latent representations while simultaneously performing embedding clustering. GMM-VGAE \cite{hui2020collaborative} leverages a variational graph autoencoder with a latent Gaussian mixture model. \textit{Graph-level} GSSL methods generate representations that encode the global structure of graphs. For example, DGI \cite{velickovic2019deep} maximizes mutual information (MI) between localized patch representations and their corresponding high-level graph summaries. This objective ensures that the encoder captures features relevant across the graph. MVGRL \cite{hassani2020contrastive} builds on this by introducing a contrastive framework that maximizes MI between representations derived from local (i.e., node-level) and global (i.e., graph-level) structural views of graphs. Beyond contrastive and reconstruction-based GSSL, recent work has explored leveraging large language models (LLMs) for graph representation learning by encoding graphs into token or text-like sequences compatible with LLM processing \cite{fatemi2024talk,ICLR2025_ce3e23a2}.

%Multi-scale methods 
\textbf{Multi-Scale Methods.} Recent research has increasingly combined insights from multiple hierarchical levels of graphs \cite{jiao2020sub, jin2021anemone, ijcai2021p204, ijcai2023p246, li2023contrastive}. SUBG-CON \cite{jiao2020sub} is a self-supervised representation learning method that captures regional structural information by leveraging the strong correlation between central nodes and their corresponding sampled subgraphs. ANEMONE \cite{jin2021anemone} is a graph anomaly detection framework that identifies anomalies across multiple graph scales by employing a GNN-based encoder and multi-scale contrastive learning to capture pattern distributions through simultaneous agreement learning at the patch and context levels. MERIT \cite{ijcai2021p204} is another multi-scale graph contrastive learning approach that first generates two augmented views of the input graph, one focusing on local and the other on global perspectives. It then employs two objectives (cross-view and cross-network contrastiveness) to maximize the alignment of node representations across these different views and networks. Based on the idea that nodes can be observed at different abstraction levels, MNCSCL \cite{li2023contrastive} samples multiple node-centered subgraphs to reflect differences across various granularity levels. Then it applies contrastive learning to maximize mutual information between the graph views generated at different abstraction levels. In another graph self-supervised learning method, MSSGCL \cite{ijcai2023p246} generates multi-scale global and local views via subgraph sampling and builds multiple contrastive relations between these abstraction levels. LMGTA \cite{li2025anomaly} similarly adopts a multi-order contrastive strategy that integrates subgraph-level augmentations with a topology-aware global module, enabling the model to capture both local and structural irregularities. However, multi-scale GSSL methods often instantiate "scale" using only local–global or patch–context contrasts, providing limited coverage of intermediate abstractions. They often couple each scale with scale-specific objectives and augmentation heuristics, confounding the effect of abstraction level with objective engineering. Finally, they usually aggregate scale losses via fixed linear weighting, leaving inter-scale interference and overall optimization dynamics insufficiently characterized.

%Multi-level methods
\textbf{Multi-Task Methods.} 
Recently, extensive research has been devoted to learning from multiple tasks \cite{sener2018multi, mahapatra2020multi, liu2021profiling, navon2021learning, yu2020gradient, doersch2017multi, georgescu2021anomaly, yu2021learning}.
The motivation arises from single-pretext tasks favoring task-specific features, which leads to suboptimal performance across diverse downstream objectives. Several multi-task approaches have been developed for graph-structured data. 
For example, AutoSSL~\cite{jin2021automated} introduces pseudo-homophily as a metric to assess the quality of GSSL tasks and searches for an optimal combination of these tasks using evolution algorithms. ParetoGNN \cite{ju2022multi} leverages a multi-gradient descent algorithm to assign task weights that promote Pareto optimality. DyFSS \cite{zhu2024every} introduces a Mixture of Experts (MoE) framework that integrates features derived from various GSSL tasks, with a focus on node clustering. This model trains a gating network to learn node-specific weights for each task. From another perspective, GraphTCM \cite{fang2024exploring} models the correlations between GSSL tasks and exploits these correlations to derive representations that maximize performance. 
WAS \cite{fan2024decoupling} formulates the multi-task learning problem as multi-teacher knowledge distillation. Moreover, the authors of \cite{fan2024decoupling} highlight the significance of selecting a set of tasks based on their compatibility before assigning importance weights to them. 
The previous multi-task learning methods typically require a trainable gating network (e.g., DyFSS), trainable task-specific expert networks (e.g., GraphTCM, DyFSS, and WAS), and inner-optimization algorithms (e.g., the evolution algorithm for AutoSSL and the multi-gradient descent algorithm for ParetoGNN) to learn task-specific importance weights. 
Unlike previous methods, our approach employs a single multi-task network and does not require inner-optimization algorithms. The proposed method introduces a \textit{self-weighting} mechanism based on similarity and dissimilarity scores across different graph abstraction levels. Furthermore, our approach inherits the circle loss \cite{sun2020circle} advantages. In particular, it improves optimization flexibility and reduces convergence uncertainty.

\textbf{Multi-Loss Weighting.} Beyond multi-task learning, a broad body of literature has emerged on strategies for balancing multiple objectives during optimization. Uncertainty weighting \cite{kendall2018multi} learns scalar loss weights from homoscedastic task uncertainty, down-weighting objectives that appear noisier. Instead of modeling uncertainty, GradNorm \cite{chen2018gradnorm} dynamically rescales task losses to equalize gradient norms and encourage comparable learning speeds across tasks. Complementarily, gradient-conflict methods modify the update direction itself: PCGrad \cite{yu2020gradient} projects gradients to reduce pairwise conflicts, while CAGrad \cite{liu2021conflict} computes a conflict-averse direction that explicitly limits interference. More recent approaches address dominance effects over longer training horizons. For instance, AdaTask \cite{yang2023adatask} maintains task-wise accumulators under adaptive optimizers, and IGBv1 \cite{dai2023improvable} prioritizes tasks with larger improvable gaps between current and desired progress. As a recent example, AUTAUT \cite{zhongautomatic} uses LLM-based retrieval to identify candidate auxiliary tasks and adaptively reweights them via gradient alignment. However, these strategies typically operate at the task-loss granularity by assigning a single scalar weight to an entire objective. In contrast, we adopt a more fine-grained weighting strategy that reweights individual positive and negative similarity scores within each abstraction level, thereby controlling the relative influence of each score on the optimization process. 

\section{A Unified GSSL Perspective} 
\label{sec.unified_gssl}
Let \( \mathcal{G} = (\mathcal{V}, \, \mathcal{E}, \, \mathcal{X}) \) be an undirected attributed graph, where \( \mathcal{V} = \{ v_1, \ldots, v_n \} \) represents the set of \( n \) nodes, \( \mathcal{E} \subseteq \mathcal{V} \times \mathcal{V} \) is the set of edges, and \( \mathcal{X} \in \mathbb{R}^{n \times d'} \) is the node feature matrix. The $i^{\text{th}}$ row \( \mathbf{x}_i \) of $\mathcal{X}$ is the feature vector of node $v_{i}$. We define the adjacency matrix of \(\ \mathcal{G} \) as \( \mathbf{A} = (a_{ij}) \in \mathbb{R}^{n \times n} \), such that \( a_{ij} = 1 \) if \( (v_i, v_j) \in \mathcal{E} \) and \( a_{ij} = 0 \) otherwise.   
A graph neural network (GNN) encodes the input graph $\mathcal{G}$ into a $d$-dimensional latent space and generates the node embedding matrix \( \mathbf{H} \in \mathbb{R}^{n \times d} \). The $i$-th row \( \mathbf{h}_i \) of \( \mathbf{H} \) represents the embedding vector of node $v_{i}$. In this section, we introduce a unified contrastive learning framework that operates seamlessly at different graph abstraction levels and facilitates their integration.

We generate three augmented views of $\mathcal{G}$, which are then encoded by the GNN encoder to derive node representations for contrastive learning. At each epoch, the input graph undergoes three augmentations, denoted as 
\( \widetilde{\mathcal{G}}^{(1)} = \{ \widetilde{\mathcal{V}}^{(1)}, \, \widetilde{\mathcal{E}}^{(1)}, \, \widetilde{\mathcal{X}}^{(1)} \} \), 
\( \widetilde{\mathcal{G}}^{(2)} = \{ \widetilde{\mathcal{V}}^{(2)}, \, \widetilde{\mathcal{E}}^{(2)}, \, \widetilde{\mathcal{X}}^{(2)} \} \), and
\( \widetilde{\mathcal{G}}^{(3)} = \{ \widetilde{\mathcal{V}}^{(3)}, \, \widetilde{\mathcal{E}}^{(3)}, \, \widetilde{\mathcal{X}}^{(3)} \} \). These augmentations are categorized into positive and negative transformations. Positive augmentations preserve key structural and semantic properties of the graph. We employ stochastic edge dropping and node feature masking as positive augmentations \cite{zhu2020deep, zhu2021graph}. In contrast, negative augmentations introduce substantial modifications that alter the graph’s structure or feature distribution. We use random node shuffling as negative augmentation. This corruption breaks node-attribute alignment and is commonly used to generate negative views in GSSL methods such as DGI \cite{velickovic2019deep} and MVGRL \cite{hassani2020contrastive}. The first two graphs $\widetilde{\mathcal{G}}^{(1)}$ and $\widetilde{\mathcal{G}}^{(2)}$ are positive views, while the third $\widetilde{\mathcal{G}}^{(3)}$ is negative. 

For each augmented view \( \widetilde{\mathcal{G}}^{(j)} \), we define \( \mathbf{H}^{(j)} \in \mathbb{R}^{n \times d} \) as its node embedding matrix and \( \mathbf{h}_i^{(j)} \) as the embedding vector of node $v_{i}$. The core principle of graph contrastive learning involves two steps: (i) generating positive and negative samples for each anchor, and (ii) bringing the anchor closer to positive samples while distancing it from the negative ones. Formally, the problem can be framed as maximizing the similarity between the anchor and the positive samples, while minimizing the similarity between the anchor and the negative samples. Our unified contrastive learning framework aligns with the multi-level GSSL paradigm by capturing similarities and dissimilarities at four graph abstraction levels: node-level, proximity-level, cluster-level, and graph-level. The index \( l \) selects the granularity level at which the GSSL is performed. Formally, we have \(l \in \{ \text{node, proximity, cluster, graph} \} \). 

The anchors are defined as the node representations in the first augmented graph view, \( \widetilde{\mathcal{G}}^{(1)} \). For each anchor, \( n_1 \) positive samples are drawn from the second graph view, \( \widetilde{\mathcal{G}}^{(2)} \), while \( n_2 \) negative samples are selected from the second and third views, \( \widetilde{\mathcal{G}}^{(2)} \) and \( \widetilde{\mathcal{G}}^{(3)} \). Sample generation is governed by the functions \( G_l^+ \) and \( G_l^- \), which operate at a granularity level \( l \). Here, \( G_{k, \, l}^+ \) produces the \( k \)-th positive sample, and \( G_{k,\,l}^- \) generates the \( k \)-th negative sample.

We employ a function \( \theta \) to measure the similarity between anchors and their positive and negative samples. The similarity between the \( i \)-th anchor and its \( k \)-th positive sample at granularity level \( l \) is denoted as \( s^+_{ik,\,l} \), where \( (i, \, k) \in \{1, \dots, n\} \times \{1, \dots, n_1\} \). Similarly, the similarity between the \( i \)-th anchor and its \( k \)-th negative sample is given by \( s^-_{ik,\,l} \), where \( (i, \, k) \in \{1, \dots, n\} \times \{1, \dots, n_2\} \). The positive and negative similarities are formally expressed as:  
\begin{equation}
\label{eq.pos_samples}
    s^+_{ik,\,l} = \theta\big(\mathbf{h}^{(1)}_{i}, G^+_{k,\,l}(v_{i})\big),
\end{equation}

\begin{equation}
\label{eq.neg_samples}
    s^-_{ik,\,l} = \theta\big(\mathbf{h}^{(1)}_{i}, G^-_{k,\,l}(v_{i})\big).
\end{equation}

At the node level, the positive samples for an anchor consist of a single element, which is the embedding of the same node in \( \widetilde{\mathcal{G}}^{(2)} \). The negative samples contain the embeddings of \( n_2 \) nodes, different from the anchor node, selected from the embeddings of \( \widetilde{\mathcal{G}}^{(2)} \) and \( \widetilde{\mathcal{G}}^{(3)} \). At the proximity level, positive samples are generated by considering the embeddings of \( \widetilde{\mathcal{G}}^{(2)} \) for nodes along a \( \kappa \)-length path in \(\mathcal{G}\) originating from the anchor node. Negative samples are generated by considering the embeddings of \( \widetilde{\mathcal{G}}^{(2)} \) and \( \widetilde{\mathcal{G}}^{(3)} \) for nodes along a \( \kappa \)-length path in \(\mathcal{G}\) that does not intersect the anchor's \( \kappa \)-hop neighborhood in \(\mathcal{G}\). At the cluster level, the graph is first partitioned by applying k-means to the node features $\mathcal{X}$. The positive samples are selected from the embeddings of \( \widetilde{\mathcal{G}}^{(2)} \) for nodes within the same cluster as the anchor node, while the negative samples are drawn from the embeddings of \( \widetilde{\mathcal{G}}^{(2)} \) and \( \widetilde{\mathcal{G}}^{(3)} \) for nodes outside the anchor's cluster. At the graph level, the positive samples are drawn from the embeddings of  \( \widetilde{\mathcal{G}}^{(2)} \), whereas the negative samples come from the embeddings of \( \widetilde{\mathcal{G}}^{(3)} \).

%For notational simplicity, we omit the sample indices and denote the similarities of positive and negative pairs by \(s^{+}\) and \(s^{-}\), respectively. In this work, we instantiate \(\theta\) as the shifted cosine similarity:
In the general case, the similarities are indexed by the anchor and sample identifiers. For notational simplicity, we omit these indices when no ambiguity arises, and denote the similarities of positive and negative pairs by \(s^{+}\) and \(s^{-}\), respectively. We instantiate \(\theta\) as the shifted cosine similarity:
\begin{equation}
\label{eq:shifted_cosine}
\begin{aligned}
\theta(\mathbf{u},\mathbf{v}) &= \frac{1+\mathrm{cos}(\mathbf{u},\mathbf{v})}{2} \\
&= \frac{1}{2}\left(1+\frac{\mathbf{u}^\top\mathbf{v}}{\|\mathbf{u}\|_2\,\|\mathbf{v}\|_2}\right)\in[0,1].
\end{aligned}
\end{equation}
The target values are $s^+ \to 1$ (cosine $\to 1$) and $s^- \to 0$ (cosine $\to -1$). For terminological convenience, we use the term \emph{dissimilarity score} to refer to the same similarity function \(\theta(\cdot,\cdot)\) evaluated on negative pairs (i.e., \(s^-\)). That is, we do not introduce a separate distance/dissimilarity function. Under the shifted cosine similarity \(\theta\in[0,1]\), minimizing the negative-pair similarity \(s^-\) is equivalent to maximizing dissimilarity in the representation space (ideally \(s^- \to 0\)). 

We propose a unified contrastive loss that can operate at different granularity levels. For each anchor node, this loss function maximizes the similarity between the anchor node representation and its positive samples, while minimizing the similarity with its negative samples. 
The unified loss function \( \mathcal{L}_{\text{SL-GSSL}} \) for single-level graph self-supervised learning (SL-GSSL) is defined as follows:

\begin{equation}
\label{eq.loss_unified}
    \mathcal{L}_{\text{SL-GSSL}}^{(l)} = \frac{1}{n} \sum_{i=1}^{n} \log \left[ 1 + \sum_{j=1}^{n_1} \sum_{k=1}^{n_2} \exp \left( \gamma (s^-_{ik,\,l} - s^+_{ij,\,l} + m) \right) \right],
\end{equation}

\noindent where \( \gamma \) denotes a scaling factor, and \( m \) is a margin hyperparameter that prevents excessive updates by disregarding dissimilar pairs once they surpass a predefined separation threshold. This loss function iterates through each similarity pair to minimize \(( s^-_{ik,\,l} - s^+_{ij,\,l} \)). Algorithm  \ref{alg:slgssl} outlines the training procedure of SL-GSSL. 

\begin{algorithm}
\caption{
The unified perspective SL-GSSL}
\label{alg:slgssl}
\begin{algorithmic}[1]

\STATE \textbf{Input:} The input graph $\mathcal{G}$, \# of epochs T, scaling factor $\gamma$, margin $m$, \# of positive samples $n_1$, \# of negative samples $n_2$, abstraction level $l \in \{\text{node, proximity, cluster, graph}\}$, similarity function $\theta$, positive augmentations $O^+_1$ and $O^+_2$, negative augmentation $O^-$, functions that generates positive $G^+_l$, and function that generates negative samples $G^-_l$
\STATE \textbf{Output:} Trained model

\FOR{$\text{epoch} = 1$ to $T$}
    \STATE Generate \( \widetilde{\mathcal{G}}^{(1)}, \, \widetilde{\mathcal{G}}^{(2)}, \text{ and } \widetilde{\mathcal{G}}^{(3)} \) by stochastically corrupting \( \mathcal{G} \) using \( O^+_1 \), \( O^+_2 \), and \( O^- \), respectively. 
    \STATE Compute \(\mathbf{H}^{(1)}, \, \mathbf{H}^{(2)}, \text{ and } \mathbf{H}^{(3)} \) by encoding the graphs \( \widetilde{\mathcal{G}}^{(1)}, \, \widetilde{\mathcal{G}}^{(2)}, \text{ and } \widetilde{\mathcal{G}}^{(3)} \), respectively, using the GNN. 
    \STATE Generate $n_1$ positive samples and $n_2$ negative samples for each node using \(G^+_l \text{ and } G^-_l\), respectively.
    \STATE Compute the positive and negative similarity scores for each node using Eq. (\ref{eq.pos_samples}) and Eq. (\ref{eq.neg_samples}), respectively.
    \STATE Compute the loss function \(\mathcal{L}_{\text{SL-GSSL}}^{(l)}\) using Eq. (\ref{eq.loss_unified}).
    \STATE Update the model's parameters using Adam optimizer. 
\ENDFOR

\end{algorithmic}
\end{algorithm}

\textbf{Gradient Analysis.} To simplify visualization, we assume the toy scenario of a single anchor and one corresponding positive and negative sample (\(n = n_1 = n_2\)). The unified loss function becomes:
\begin{equation}
\mathcal{L} = \log \left[ 1 +  \exp \left( \gamma (s^- - s^+ + m) \right) \right].
\end{equation}
For notational simplicity, we omit the granularity index \( l \). We analyze the gradient of \(\mathcal{L}\) w.r.t. the positive and negative similarities: 
\begin{equation}
    \left| \frac{\partial \mathcal{L}}{\partial s^+} \right| =  \left| \frac{\partial \mathcal{L}}{\partial s^-} \right| = \frac{\gamma \, \exp \left( \gamma (s^- - s^+ + m) \right)}{1 + \exp \left( \gamma (s^- - s^+ + m) \right)}.
\end{equation}

Since the two expressions \( \left| \frac{\partial \mathcal{L}}{\partial s^+} \right|\) and \( \left| \frac{\partial \mathcal{L}}{\partial s^-} \right|\) are mathematically identical, we plot only one. 
Fig. \ref{fig:fig1} illustrates the partial derivative of \( \mathcal{L} \) w.r.t. \( s^+ \). The figure shows a sharp transition from large to very small values, reflecting the saturation of the logistic term once the margin constraint is satisfied. This transition induces an effective (soft) decision boundary in the \( (s^+, s^-) \) plane. In particular, the line \( s^- - s^+ + m = 0 \) corresponds to the midpoint of the transition (where \( \left| \frac{\partial \mathcal{L}}{\partial s^+} \right| = \gamma/2 \)) and separates a high-gradient regime (\( s^- - s^+ + m > 0 \)) from a low-gradient regime (\( s^- - s^+ + m < 0 \)). Note that the gradient remains strictly positive for any finite value of \( s^- - s^+ + m \), but it decays exponentially as \( s^- - s^+ + m \) becomes more negative.
%Fig. \ref{fig:fig1} illustrates the partial derivative of \( \mathcal{L} \) w.r.t. \( s^+ \). The figure shows a sharp and abrupt transition to zero, which indicates a sudden vanishing gradient. This phenomenon defines a decision boundary, a region in the parameter space where the gradient diminishes abruptly. In this scenario, the decision boundary is represented by the line defined by the equation \( s^- - s^+ + m = 0 \). The gradient is strictly positive when  \( s^- - s^+ + m \geq 0 \) and vanishes rapidly when (\( s^- - s^+ + m \)) becomes negative.

\begin{figure*}[t]
    \centering
    \includegraphics[width=0.95\linewidth]{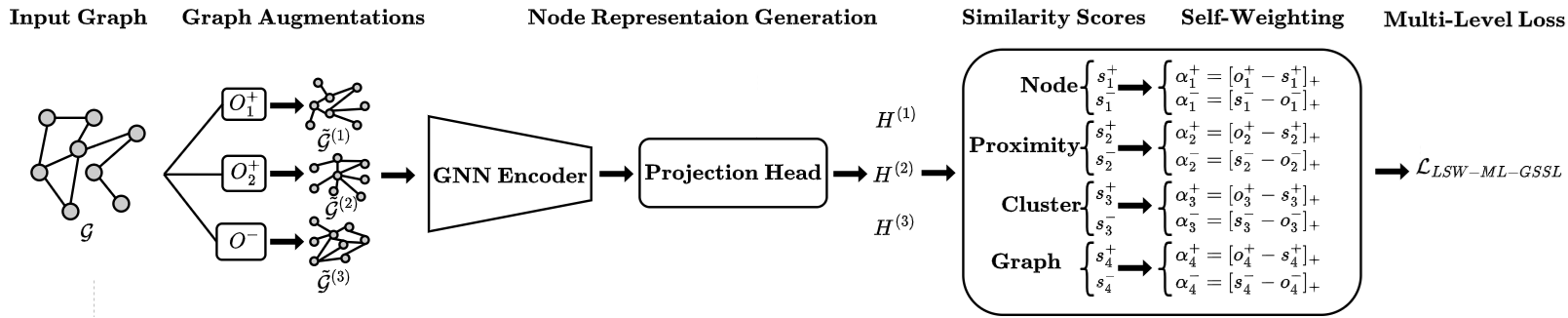}
    \caption{Illustration of our multi-task approach, LSW-ML-GSSL. The positive and negative similarity scores are denoted by \( s^+_l \) and \( s^-_l \), respectively, with their corresponding reference endpoints denoted as \( o^+_l \) and \( o^-_l \). The corresponding weights in the multi-level loss \( \mathcal{L}_{\text{LSW-ML-GSSL}} \) are denoted by \( \alpha^+_l \) and \( \alpha^-_l \). $H^{(i)}$ denotes the embedding of the augmented graph $\tilde{\mathcal{G}}^{(i)}$.}
    \label{fig:Model_Architecture}
\end{figure*}

\begin{figure}[t]
    \centering
    \includegraphics[width=0.95\linewidth]{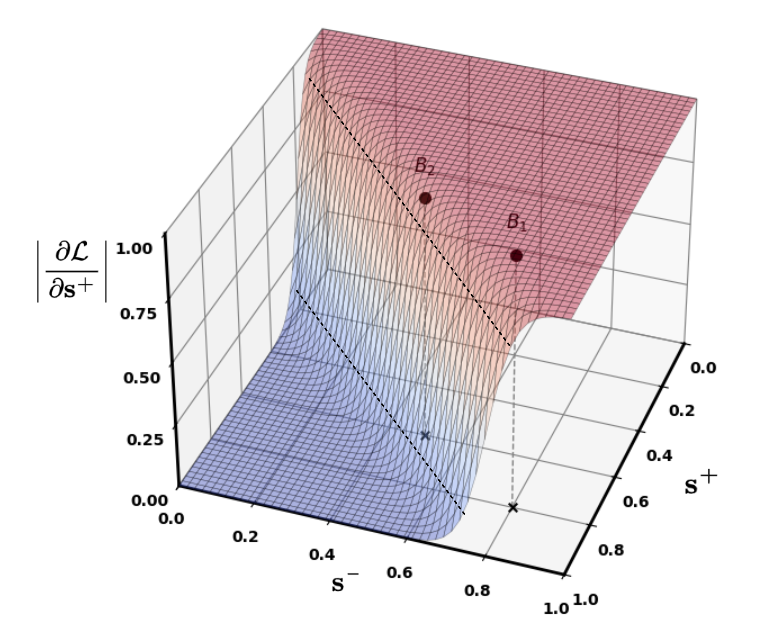} 
    \caption{The gradient magnitude of \( \mathcal{L} \) w.r.t. \( s^+ \) \Big(\( \left| \frac{\partial \mathcal{L}}{\partial s^+} \right| \)\Big).
} 
    \label{fig:fig1} 
\end{figure}

We select two points, \( B_{1}(0.8, 0.8) \) and \( B_{2}(0.6, 0.5) \), on the gradient function graph. Point \( B_{1} \) exhibits high positive and negative similarities, yet still maintains a large derivative magnitude w.r.t. \( s^+ \). The derivative magnitude remains identical w.r.t. $s^-$ and $s^+$, regardless of their relative balance. This indicates that the optimization process is limited in its flexibility to adapt according to the balance between \( s^+ \) and \( s^- \). Point \( B_{2} \) lies closer to the decision boundary \( s^- - s^+ + m = 0 \) compared to point \( B_{1} \). However, the derivatives at point \( B_{2} \) w.r.t. \( s^+ \) and \( s^- \) remain nearly identical to those at point \( B_{1} \). Consequently, the loss function penalizes both points equally, regardless of their relative proximity to the boundary. Moreover, for any pair \( (s^+, s^-) \) on the convergence boundary (i.e., \( s^+ - s^- = m \)), the model exhibits no preference between these points. Thus, the optimization process is susceptible to ambiguity in the convergence outcome.

\section{Multi-Level Approach}
We extend the unified formulation to the multi-task scenario by leveraging multi-level granularity information.  
Initially, we aggregate the positive and negative similarity scores linearly across all abstraction levels. Then, we present a self-weighting mechanism to enhance optimization flexibility and reduce convergence ambiguity. Our multi-level approach, LSW-ML-GSSL, is illustrated in Fig. \ref{fig:Model_Architecture}.

\textbf{Linear Combination.} The effectiveness of single-level GSSL on each downstream task varies depending on the selected graph granularity level. To ensure task generalization, we integrate positive and negative similarity scores across four graph granularity levels into the same loss function through a linear combination.

We define \( \Delta_{ijk, \, l} \) as the contrastive similarity difference between the \( i \)-th anchor, its \( j \)-th positive sample, and its \( k \)-th negative sample at granularity level \( l \). The expression of \( \Delta_{ijk, \, l} \) is given by:

\begin{equation}
\label{eq.residual_similarity_score}
\Delta_{ijk, \, l} = s^-_{ik, \, l} - s^+_{ij, \, l} + m.
\end{equation}

We perform a linear combination of the contrastive similarity differences across the four graph granularity levels. The linear multi-level GSSL loss, denoted as \( \mathcal{L}_{\text{L-ML-GSSL}}^{(l)} \), is expressed as follows:

\begin{equation}
\label{eq.loss_l_ml_gssl}
\mathcal{L}_{\text{L-ML-GSSL}} = \frac{1}{n} \sum_{i=1}^{n} \log \left[ 1 + \sum_{j=1}^{n_1} \sum_{k=1}^{n_2} \exp \left( \gamma \sum_{l=1}^4 \beta_l \; \Delta_{ijk, \, l} \right) \right],
\end{equation}
\noindent where \( \beta_l \) is a balancing hyperparameter that controls the contribution of the \( l \)-th graph granularity level to the loss function. We leverage enumeration for the index \( l \) to simplify notation, where \( l = 1 \) corresponds to the node level, \( l = 2 \) to the proximity level, \( l = 3 \) to the cluster level, and \( l = 4 \) to the graph level.

\textbf{Linear Weighted Combination.} Similar to the single-level GSSL, discussed in Sec. \ref{sec.unified_gssl}, the training process w.r.t. the loss function \( \mathcal{L}_{\text{L-ML-GSSL}} \) lacks optimization flexibility and is prone to ambiguity in its convergence outcome. To address this issue, we introduce a dynamic weighting mechanism that adjusts the contribution of each similarity score to the final loss. The goal is to optimize each score at its own pace by prioritizing the less optimized ones. 

We define \( \Delta'_{ijk, \, l} \) as the weighted contrastive similarity difference between the \( i \)-th anchor, its \( j \)-th positive sample, and its \( k \)-th negative sample at granularity level \( l \). The expression of \( \Delta'_{ijk, \, l} \) is:

\begin{equation}
\label{eq.residual_similarity_score_prime}
\Delta'_{ijk, \, l} = \alpha^-_{ik, \, l} \: (s^-_{ik, \, l} - \delta^-) - \alpha^+_{ij, \, l} \: (s^+_{ij, \, l} - \delta^+),
\end{equation}

\noindent where \( \alpha^+_{ij, \, l} \) and \( \alpha^-_{ik, \, l} \) are the weighting coefficients assigned to the positive and negative similarity scores \( s^+_{ij, \, l} \) and \( s^-_{ik, \, l} \), respectively;  \(\delta^+\) and \(\delta^-\) are the positive and negative margins. In SL-GSSL and L-ML-GSSL, the positive and negative similarity scores are assigned equal weights. This allows the use of a single margin hyperparameter $m$. After integrating the weighting coefficients, \( s^+_{ij, \, l} \) and \( s^-_{ik, \, l} \) are no longer in symmetric positions. Thus, we need two margins.

We sum the weighted contrastive similarity differences across the four graph granularity levels. The linear weighted multi-level GSSL loss, denoted as \( \mathcal{L}_{\text{LW-ML-GSSL}}^{(l)} \), is formally expressed as follows:

\begin{equation}
\label{eq.loss_lw_ml_gssl}
\mathcal{L}_{\text{LW-ML-GSSL}} = \frac{1}{n} \sum_{i=1}^{n} \log \left[ 1 + \sum_{j=1}^{n_1} \sum_{k=1}^{n_2} \exp \left( \gamma \sum_{l=1}^4 \, \Delta'_{ijk, \, l} \right) \right].
\end{equation}

The hyperparameters \( (\beta_l)_{l \in \{1, \cdots, 4\}} \) that control the contribution of the graph granularity levels in \(\mathcal{L}_{\text{L-ML-GSSL}}\), are absorbed by the weighting coefficients \( \alpha^+_{ij, \, l} \) and \( \alpha^-_{ij, \, l} \) in \(\mathcal{L}_{\text{LW-ML-GSSL}}\).

%\textbf{Linear Self-Weighted Combination.} We denote by \(o_{ij,\, l}^+\) and \(o_{ik,l}^-\) as the optimal values of \(s_{ij,\, l}^+\) and \( s_{ik,l}^- \). 
\textbf{Linear Self-Weighted Combination.} We introduce $o_l^+$ and $o_l^-$ as \emph{reference endpoints} (level-wise constants) used in the self-weighting
coefficients for $s^+_{ij,l}$ and $s^-_{ik,l}$. Under shifted cosine similarity, \(s_{ij,\, l}^+, s_{ik,l}^- \in [0,1]\), and the feasible targets remain \(s^+ \to 1\) and \(s^- \to 0\). The weighting terms \( \alpha^+_{ij, \, l} \) and \( \alpha^-_{ik, \, l} \) control the gradient contribution of each similarity score during optimization. Ideally, the positive similarity scores should be high, while the negative scores should be low. When a similarity score deviates significantly from its target value, its corresponding weight should increase to enforce a stronger correction. Thus, we can define the self-weighting coefficients  \( \alpha^+_{ij, \, l} \) and \( \alpha^-_{ik, \, l} \) as follows:

% \begin{equation}
%     \label{positive_weight}
%     \alpha^+_{ij, \, l} = [o_{ij,\, l}^+ - s_{ij,\, l}^+]_+,
% \end{equation} 
% \begin{equation}
%     \label{negative_weight}
%     \alpha^-_{ik, \, l} = [s_{ik,l}^- - o_{ik,l}^-]_+,
% \end{equation}
\begin{equation}
\label{positive_weight}
\alpha^+_{ij, l} = [\,o_{l}^+ - s_{ij, l}^+\,]_+,
\end{equation}
\begin{equation}
\label{negative_weight}
\alpha^-_{ik, l} = [\,s_{ik,l}^- - o_{l}^-\,]_+,
\end{equation}

\noindent where \( [\cdot]_+ \) represents the ``cut-off at zero'' operation, ensuring the weights remain non-negative. 
%If the negative similarity score \( s_{ik,l}^- \) exceeds its optimal value \( o_{ik,l}^- \) significantly, then the weighting coefficient \( \alpha^-_{ik, \, l} \) increases. A higher \( \alpha^-_{ik, \, l} \) value triggers a stronger update w.r.t. \( s_{ik,l}^- \). Likewise, if the positive similarity score \( s_{ij,\, l}^+ \) falls significantly below its target value \( o_{ij,\, l}^+ \), then the coefficient \( \alpha^+_{ij, \, l} \) increases. A higher \( \alpha^+_{ij, \, l} \) reinforces the update w.r.t. \( s_{ij,\, l}^+ \).
Under the shifted cosine similarity, the desired (feasible) targets are $s^+_{ij,l}\to 1$ for positive pairs and $s^-_{ik,l}\to 0$ for negative pairs.
Accordingly, when a negative similarity score $s^-_{ik,l}$ is large (i.e., far above its target $0$), its weight $\alpha^-_{ik,l}$ increases, strengthening the update that reduces $s^-_{ik,l}$. Likewise, when a positive similarity score $s^+_{ij,l}$ is small (i.e., far below its target $1$), its weight $\alpha^+_{ij,l}$ increases, strengthening the update that increases $s^+_{ij,l}$. The quantities $o^+_l$ and $o^-_l$ are \emph{reference endpoints} used only in the weighting functions (Eqs.~(\ref{positive_weight})--(\ref{negative_weight})); they are not similarity targets and may lie outside $[0,1]$.

We define \( \Delta''_{ijk, \, l} \) as the self-weighted contrastive similarity difference between the \( i \)-th anchor, its \( j \)-th positive sample, and its \( k \)-th negative sample at \( l \)-th level. The expression of \( \Delta''_{ijk, \, l} \) is:

\begin{equation}
\label{eq.residual_similarity_score_second}
\Delta''_{ijk, \, l} = [s_{ik,l}^- - o_{l}^-]_+   \; \Big(s^-_{ik, \, l} - \delta^-\Big) - [o_{l}^+ - s_{ij,\, l}^+]_+ \; \Big(s^+_{ij, \, l} - \delta^+\Big),
\end{equation}

We sum the self-weighted contrastive similarity differences across the four graph granularity levels. The linear self-weighted multi-level GSSL loss, denoted as \( \mathcal{L}_{\text{LSW-ML-GSSL}}^{(l)} \), is expressed as follows:

\begin{equation}
\label{eq.loss_lsw_ml_gssl}
\mathcal{L}_{\text{LSW-ML-GSSL}} = \frac{1}{n} \sum_{i=1}^{n} \log \left[ 1 + \sum_{j=1}^{n_1} \sum_{k=1}^{n_2} \exp \left( \gamma \sum_{l=1}^4 \, \Delta''_{ijk, \, l} \right) \right].
\end{equation}

After omitting the indices $i$, $j$, and $k$ for the sake of simplicity and substituting the self-weighting coefficients \( \alpha^+_{ij, \, l} \) and \( \alpha^-_{ik, \, l} \) with their respective values, the decision boundary associated with $\mathcal{L}_{\text{LSW-ML-GSSL}}$ can be formally expressed as follows:

% \begin{equation}
% \label{eq.decision_boundary}
% \begin{split}
%     \sum_{l=1}^4 (s^-_{l} - \frac{o^-_{l} + \delta^-}{2})^2 + (s^+_{l} - \frac{o^+_{l} + \delta^+}{2})^2 \\
%     = \frac{1}{4} \sum_{l=1}^4 (\delta^- - o^-_{l})^2 + (o^+_{l} - \delta^+)^2. 
% \end{split}
% \end{equation}
\begin{equation}
\label{eq.decision_boundary}
\begin{split}
\sum_{l=1}^4 \left[
\left(s^-_{l} - \frac{o^-_{l} + \delta^-}{2}\right)^2
+
\left(s^+_{l} - \frac{o^+_{l} + \delta^+}{2}\right)^2
\right]  \\
=
\frac{1}{4} \sum_{l=1}^4 \left[
(\delta^- - o^-_{l})^2
+
(o^+_{l} - \delta^+)^2
\right].
\end{split}
\end{equation}
Eq. (\ref{eq.decision_boundary}) represents a 7-dimensional hypersphere, which is a 7-dimensional manifold embedded in 8-dimensional space, with a radius 
$r=\frac{1}{2}\sqrt{\sum_{l=1}^4\left[(\delta^- - o^-_l)^2 + (o^+_l-\delta^+)^2\right]}$. %The loss function expects that \( s^+ > \delta^+ \) and \( s^- < \delta^- \). Following these constraints, we reparametrize $\delta^+$, $\delta^-$, $o^+_l$, $o^-_l$ with a single margin hyperparameter $m$ by setting $\delta^+ = 1-m$, $\delta^- = m$,  $ o^+_l = 1+m$ and $o^-_l = -m$  for all $l\in \{1, \cdots, 4\}$. Then, the decision boundary's equation becomes as follows: 
The loss function expects that \( s^+ > \delta^+ \) and \( s^- < \delta^- \). We use a single margin hyperparameter \(m\) by setting \(\delta^+=1-m\) and \(\delta^-=m\). We then set the reference endpoints to \(o_l^+=1+m\) and \(o_l^-=-m\) for all \(l\in\{1,\dots,4\}\). These endpoint values are used only in the weighting functions in Eqs.~(\ref{positive_weight})--(\ref{negative_weight}). They are not similarity scores and they are not required to lie in \([0,1]\). The stationary targets induced by Eq.~(\ref{eq.decision_boundary}) are the midpoints \(\frac{o_l^+ + \delta^+}{2}=1\) and \(\frac{o_l^- + \delta^-}{2}=0\), which lie in \([0,1]\) under the shifted cosine similarity. Then, the decision boundary's equation becomes as follows:
% \begin{equation}
% \label{eq.circle}
%     \sum_{l=1}^4 (s^-_{l} - 0)^2 + (s^+_{l} - 1)^2 = 8m^2.
% \end{equation}
\begin{equation}
\label{eq.circle}
\sum_{l=1}^4 \left[
(s^-_{l})^2 + (s^+_{l} - 1)^2
\right] = 8m^2.
\end{equation}

The only parameter in Eq. (\ref{eq.circle}) is $m$, which determines the radius of the decision boundary. It can be interpreted as a relaxation factor that adjusts the level of flexibility during the optimization process.
    
\textbf{Gradient Analysis.} To simplify visualization, we will focus on the single abstraction level case with one anchor and one positive and negative sample. The linear self-weighted loss becomes: 

\begin{equation}
\mathcal{L} = \log \left[ 1 +  \exp \left( \gamma (\alpha^-(s^- -\delta^-) - \alpha^+(s^+-\delta^+)) \right) \right].
\end{equation}

We study the gradient of $\mathcal{L}$ w.r.t. the positive and negative similarities. The corresponding partial derivatives are given by:  

\begin{equation}
\begin{aligned}
    \left|\frac{\partial \mathcal{L}}{\partial s^-}\right|
&=  \frac{2 \, \gamma \, s^- \, \exp(\gamma \, \Delta'')}{1 + \exp(\gamma \Delta'')},
\end{aligned}
\end{equation}
\begin{equation}
   \left|\frac{\partial \mathcal{L}}{\partial s^+}\right| = \frac{2 \, \gamma \, \left| \delta^+ - s^+  \right|\exp(\gamma \Delta'')}{1 + \exp(\gamma \, \Delta'')},
\end{equation}

\noindent where, $\Delta''=\alpha^-(s^- -\delta^-) - \alpha^+(s^+-\delta^+)$, in both equations. 

\begin{figure}[ht]
    \centering
    \includegraphics[width=0.95\linewidth]{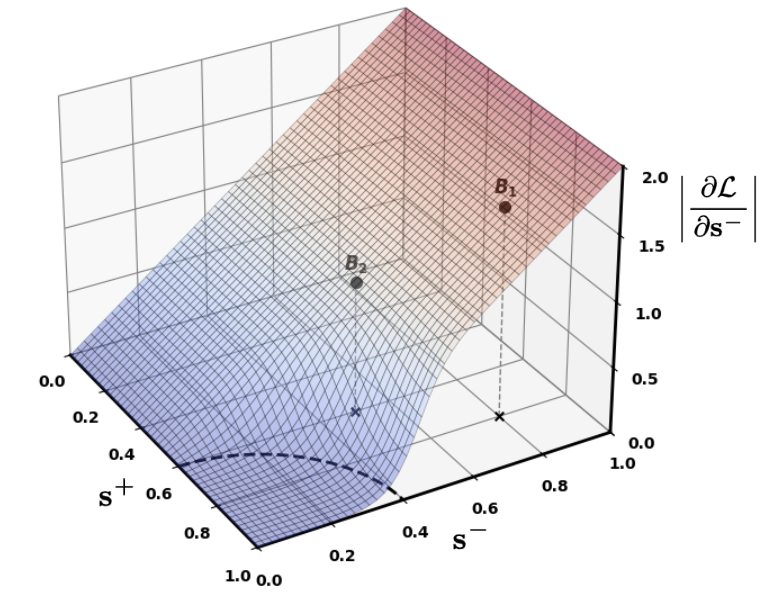} 
    \caption{The gradient magnitude of \( \mathcal{L} \) w.r.t. \( s^- \) \Big(\( \left| \frac{\partial \mathcal{L}}{\partial s^-} \right| \)\Big).
} 
    \label{fig:gr_lsw_sn} 
\end{figure}

\begin{figure}[ht]
    \centering
    \includegraphics[width=0.95\linewidth]{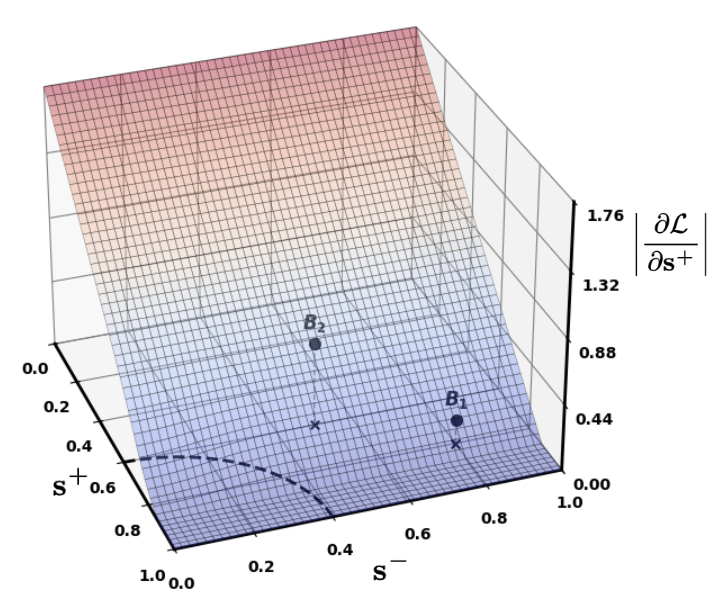} 
    \caption{The gradient magnitude of \( \mathcal{L} \) w.r.t. \( s^+ \) \Big(\( \left| \frac{\partial \mathcal{L}}{\partial s^+} \right| \)\Big).
} 
    \label{fig:gr_lsw_sp} 
\end{figure}

%Figs. \ref{fig:gr_lsw_sn} and  \ref{fig:gr_lsw_sp} depict the gradient magnitude of $\mathcal{L}$ w.r.t. $s^-$ and $s^+$, respectively. Unlike Fig. \ref{fig:fig1}, where the partial derivative w.r.t. $s^+$ and $s^-$ remain large even when the similarity scores are close to their reference endpoints. The self-weighting mechanism effectively addresses this problem. Specifically, the weighting coefficients dynamically adjust the gradient based on the deviation of similarity scores from their reference endpoints. For $B_1(0.8,0.8)$ (both $s^+$ and $s^-$ are large, but $s^-$ deviates significantly from its optimum), the loss function prioritizes optimizing $s^-$ by assigning a large partial derivative w.r.t. $s^-$ and a small partial derivative w.r.t. $s^+$. Conversely, at $B_2(0.6, 0.5)$, $s^-$ is significantly smaller, resulting in a notably smaller partial derivative w.r.t. $s^-$. This demonstrates that the proposed self-weighting mechanism improves optimization flexibility. Moreover, the new convergence boundary (a circle in our toy scenario) reduces the ambiguity of convergence that arises when positive and negative similarities have equal weights.
Figs.~\ref{fig:gr_lsw_sn} and \ref{fig:gr_lsw_sp} depict the gradient magnitude of $\mathcal{L}$ w.r.t.\ $s^-$ and $s^+$, respectively. In contrast to Fig.~\ref{fig:fig1}, where the partial derivatives remain large even when the similarity scores are close to their target values (i.e., $s^+ \approx 1$ and $s^- \approx 0$), the proposed self-weighting mechanism effectively addresses this issue. Specifically, the weighting coefficients dynamically scale the gradient according to the deviation of the similarity scores from their targets. At $B_1(0.8,0.8)$, $s^-$ is far from its target while $s^+$ is relatively close to its target, so the loss prioritizes reducing $s^-$ by yielding a large $\left|\partial \mathcal{L}/\partial s^-\right|$ and a smaller $\left|\partial \mathcal{L}/\partial s^+\right|$. Conversely, at $B_2(0.6,0.5)$, $s^-$ is smaller, which results in a noticeably smaller $\left|\partial \mathcal{L}/\partial s^-\right|$ (and relatively more emphasis on improving $s^+$). This demonstrates that the proposed self-weighting mechanism improves optimization flexibility. Moreover, the new convergence boundary (a circle in our toy scenario) reduces the convergence ambiguity that arises under uniform weighting of positive and negative similarities.

\textbf{Convergence Analysis} We study the optimization dynamics directly in the similarity-score space $\{s_l^+,s_l^-\}_{l=1}^4$ for the toy setting with one anchor and one positive/negative sample per level. This allows us to rigorously characterize the effect of the proposed self-weighting mechanism on the geometry and dynamics of the loss.

\begin{proposition}[Quadratic form of the exponent argument]
\label{prop:quadratic_form}
By adopting a shifted cosine similarity ($(1+cos)/2$) and applying the reparameterization $\delta^+=1-m$, $\delta^-=m$, $o_l^+=1+m$, and $o_l^-=-m$ for all levels $l$, we can rewrite $\Delta''_l$ as follows:
\[
\Delta''_l
=
(s_l^-)^2 + (1-s_l^+)^2 - 2m^2.
\]
Consequently, the exponent argument of the toy version of Eq.~(\ref{eq.loss_lsw_ml_gssl}) becomes:
\[
\sum_{l=1}^{4} \Delta''_l
=
D - 8m^2,
\qquad
D = \sum_{l=1}^{4}\big[(s_l^-)^2 + (1-s_l^+)^2\big].
\]
\end{proposition}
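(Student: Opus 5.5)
The plan is a direct algebraic verification in two stages. First I remove the cut-off operations $[\cdot]_+$ in Eq.~(\ref{eq.residual_similarity_score_second}). Then I expand the resulting products as differences of squares.

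\textbf{Step 1: the cut-offs are inactive.} Under the shifted cosine similarity of Eq.~(\ref{eq:shifted_cosine}), both $s_l^+$ and $s_l^-$ lie in $[0,1]$. With $o_l^+=1+m$ and $o_l^-=-m$, the arguments of the cut-offs satisfy
\[
o_l^+ - s_l^+ = m + (1-s_l^+) \ge m, \qquad s_l^- - o_l^- = s_l^- + m \ge m .
\]
For a nonnegative margin $m\ge 0$, which is implicit in the margin interpretation, both quantities are nonnegative. Hence
\[
\alpha_l^+ = m+(1-s_l^+), \qquad \alpha_l^- = s_l^-+m
\]
exactly, with no truncation. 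This is the only point where the range of $\theta$ and the sign of $m$ are used. I regard it as the step needing care: if $m<0$ were allowed, the identity could fail near the targets. I would therefore state $m\ge 0$ explicitly as an assumption.

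\textbf{Step 2: expand each level.} Substitute $\delta^-=m$ and $\delta^+=1-m$.

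For the negative term,
\[
(s_l^-+m)(s_l^- - m) = (s_l^-)^2 - m^2 .
\]

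For the positive term, write $s_l^+-\delta^+ = m-(1-s_l^+)$. Then
\[
-\big(m+(1-s_l^+)\big)\big(m-(1-s_l^+)\big) = (1-s_l^+)^2 - m^2 .
\]

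Adding the two terms gives
\[
\Delta''_l = (s_l^-)^2 + (1-s_l^+)^2 - 2m^2 .
\]

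\textbf{Step 3: sum over levels.} In the toy version of Eq.~(\ref{eq.loss_lsw_ml_gssl}), there is one anchor and one positive and one negative sample per level. Summing the identity from Step 2 over $l=1,\dots,4$ gives $\sum_l \Delta''_l = D - 8m^2$, with $D$ as defined in the statement.

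As a consistency check, setting this exponent argument to zero recovers the hypersphere of Eq.~(\ref{eq.circle}). This confirms that the expansion agrees with the decision boundary derived earlier.
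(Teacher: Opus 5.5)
Your proposal is correct and follows the paper's proof essentially step for step. Like the paper, you deactivate the cut-offs using $s_l^{\pm}\in[0,1]$, expand each level's two products as differences of squares (the paper substitutes $a=s_l^+-1$, you work with $1-s_l^+$ directly), and sum over the four levels. Your explicit assumption $m\ge 0$ is a small improvement, since the paper silently relies on $m>0$ when it asserts $s_l^-+m>0$.
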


It is important to reiterate that \(o_l^+\) and \(o_l^-\) are reference endpoints, while the feasible midpoint targets are \(\frac{o_l^+ + \delta^+}{2}=1\) and \(\frac{o_l^- + \delta^-}{2}=0\).

\begin{theorem}[Error contraction in the toy similarity space]
\label{thm:error_contraction}
In the same toy setting, we define the error vector: 
\[
\mathbf e =
[s_1^-,\dots,s_4^-,\,1-s_1^+,\dots,1-s_4^+]^\top.
\]
Let
\[
Z = D-8m^2,
\qquad
\mathcal{L} = \log\big(1+\exp(\gamma Z)\big).
\]
Then
\[
\nabla_{\mathbf e}\mathcal{L} = 2\,\gamma\,\sigma(\gamma \, Z)\,\mathbf e,
\quad\text{where}\quad
\sigma(t)=\frac{e^t}{1+e^t}\in(0,1).
\]
Under gradient descent with step size $0<\eta<\frac{1}{2\gamma}$, the error vector contracts multiplicatively:
\[
\mathbf e^{(t+1)} = \Big(1-2 \, \eta \, \gamma \, \sigma(\gamma \, Z^{(t)})\Big)\mathbf e^{(t)},
\]
and therefore:
\[
D^{(t+1)}=
\Big(1-2 \,\eta\,\gamma\,\sigma(\gamma \, Z^{(t)})\Big)^2\,D^{(t)}.
\]
Hence, the dynamics perform radial descent toward the ideal point
$s_l^- \to 0$ and $s_l^+ \to 1$ for all $l$.
\end{theorem}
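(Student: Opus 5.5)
The argument runs entirely in the error coordinates $\mathbf e$, treating the eight entries of $\mathbf e$ as the free optimization variables, as the toy setting allows. The starting point is Proposition~\ref{prop:quadratic_form}, which gives $\sum_l \Delta''_l = D - 8m^2$. By the definition of $\mathbf e$ we have
\[
D = \sum_{l=1}^4\big[(s_l^-)^2 + (1-s_l^+)^2\big] = \|\mathbf e\|_2^2 .
\]
Hence $Z = \|\mathbf e\|_2^2 - 8m^2$, and $\mathcal L = \log(1+\exp(\gamma Z))$ depends on $\mathbf e$ only through its squared norm. The loss is therefore radially symmetric about the ideal point $\mathbf e = \mathbf 0$.

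The gradient then follows from the chain rule. We have $\frac{d}{dZ}\log(1+e^{\gamma Z}) = \gamma\,\sigma(\gamma Z)$ and $\nabla_{\mathbf e} Z = \nabla_{\mathbf e}\|\mathbf e\|^2 = 2\mathbf e$, so
\[
\nabla_{\mathbf e}\mathcal L = 2\gamma\,\sigma(\gamma Z)\,\mathbf e .
\]
A gradient step $\mathbf e^{(t+1)} = \mathbf e^{(t)} - \eta\nabla_{\mathbf e}\mathcal L(\mathbf e^{(t)})$ then gives the stated multiplicative update $\mathbf e^{(t+1)} = \big(1 - 2\eta\gamma\,\sigma(\gamma Z^{(t)})\big)\mathbf e^{(t)}$. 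Taking squared norms and using $D = \|\mathbf e\|^2$ yields the recursion for $D^{(t+1)}$.

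It remains to justify the word \emph{contracts} and the radial convergence. Since $\sigma \in (0,1)$ and $0 < \eta < \frac{1}{2\gamma}$, we get
\[
0 < 2\eta\gamma\,\sigma(\gamma Z^{(t)}) < 1 ,
\]
so the factor $c_t = 1 - 2\eta\gamma\sigma(\gamma Z^{(t)})$ lies in $(0,1)$. It follows that $\mathbf e$ keeps its direction and its norm strictly decreases whenever $\mathbf e \neq \mathbf 0$; this is the radial descent toward $s_l^- = 0$, $s_l^+ = 1$.

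For actual convergence to the ideal point, not merely monotone decrease, I would add a uniform bound. As $D^{(t)}$ decreases, $Z^{(t)} \ge -8m^2$, so $\sigma(\gamma Z^{(t)}) \ge \sigma(-8\gamma m^2) > 0$. Therefore
\[
c_t \le 1 - 2\eta\gamma\,\sigma(-8\gamma m^2) < 1
\]
uniformly in $t$, and $\mathbf e^{(t)} \to \mathbf 0$ geometrically.

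The main obstacle is interpretive rather than computational: one must make precise that gradient descent is taken directly in similarity-score space. The real updates act on encoder parameters, and the scores are confined to $[0,1]$. I would state explicitly that the scores are treated as free variables, and check that the iterates stay feasible. They do: each $s_l^-$ is scaled by $c_t \in (0,1)$ toward $0$, and each $1 - s_l^+$ is scaled toward $0$ likewise, so $[0,1]$ is preserved. No projection is needed, and the claimed identities hold exactly.
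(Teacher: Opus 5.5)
Your proof is correct and is essentially the paper's argument. The paper differentiates in $s$-coordinates and then rewrites the updates through $u_l=1-s_l^+$, while you differentiate $\|\mathbf e\|_2^2$ directly. The two are equivalent because $s\mapsto\mathbf e$ is an affine isometry with Jacobian $\mathrm{diag}(I_4,-I_4)$; say this explicitly, since it is what makes gradient descent in $\mathbf e$ coincide with gradient descent in similarity-score space.

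Your uniform bound $c_t\le 1-2\eta\gamma\,\sigma(-8\gamma m^2)$, which follows from $Z\ge -8m^2$, is a worthwhile addition: the paper's $0<c_t<1$ alone gives only monotone decrease of $D$, not convergence to $\mathbf 0$. Your check that $[0,1]$ is invariant is also worthwhile, because the paper never verifies that the iterates stay where Proposition~\ref{prop:quadratic_form} applies.
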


We now show that the multiplicative contraction property in
Theorem~\ref{thm:error_contraction} is specific to the self-weighted construction
and does not hold for the linear multi-level combination in Eq.~(\ref{eq.loss_l_ml_gssl}). Consider the toy version of Eq.~(\ref{eq.loss_l_ml_gssl}), with fixed coefficients $\beta_l$: 
\begin{equation}
\mathcal{L}_{\mathrm{lin}}
=
\log\Big(1+\exp\big(\gamma S\big)\Big),
\label{eq:L_lin_toy_1_main}
\end{equation}
\begin{equation}
S = \sum_{l=1}^{4}\beta_l\big(s_l^- - s_l^+ + m\big).
\label{eq:L_lin_toy_2_main}
\end{equation}

\begin{proposition}[No radial descent / no multiplicative contraction for the linear combination]
\label{prop:no_radial_linear}
The gradient of $\mathcal{L}_{\mathrm{lin}}$ in $\mathbf{e}$-coordinates can be expressed as follows:
\[
\nabla_{\mathbf{e}} \mathcal{L}_{\mathrm{lin}}
=
\gamma\,\sigma(\gamma S)\,
[\beta_1,\dots,\beta_4,\beta_1,\dots,\beta_4]^\top,
\]
which is a fixed direction independent of $\mathbf{e}$ (up to the scalar factor $\sigma(\gamma S)$). Therefore, in general $\nabla_{\mathbf{e}} \mathcal{L}_{\mathrm{lin}}$
is not parallel to $\mathbf{e}$ and the update is not radial. In particular, there does not exist a scalar sequence $\{c_t\}_t$, such that $\mathbf{e}^{(t+1)}=c_t \, \mathbf{e}^{(t)}$ holds for all initial $\mathbf{e}^{(0)}$ under gradient descent on $\mathcal{L}_{\mathrm{lin}}$.
\end{proposition}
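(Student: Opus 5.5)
The argument is a direct computation followed by a short counterexample.

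\textbf{Gradient computation.} Write $S$ in $\mathbf e$-coordinates. With $e_l = s_l^-$ and $e_{4+l} = 1-s_l^+$ for $l=1,\dots,4$, we have $s_l^- - s_l^+ + m = e_l + e_{4+l} - 1 + m$. Hence
\[
S=\sum_{l=1}^4\beta_l\,(e_l+e_{4+l}) + (m-1)\sum_{l=1}^4\beta_l ,
\]
which is affine in $\mathbf e$ with constant gradient $\mathbf b=[\beta_1,\dots,\beta_4,\beta_1,\dots,\beta_4]^\top$. By the chain rule and $\frac{d}{dt}\log(1+e^{\gamma t})=\gamma\sigma(\gamma t)$, we get $\nabla_{\mathbf e}\mathcal L_{\mathrm{lin}}=\gamma\sigma(\gamma S)\,\mathbf b$. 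This gradient is a positive scalar times the fixed vector $\mathbf b$. This contrasts with Theorem~\ref{thm:error_contraction}, where the gradient is a scalar multiple of $\mathbf e$ itself.

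\textbf{Non-parallelism.} Since $\mathbf b$ does not depend on $\mathbf e$, the gradient is parallel to $\mathbf e$ only when $\mathbf e\in\mathrm{span}(\mathbf b)$. I will exhibit explicit points outside this span, assuming $\mathbf b\neq 0$, i.e.\ not all $\beta_l$ vanish; otherwise the loss is constant. For instance, take $\mathbf e$ with $e_1=1$ and all other entries $0$. This vector is not proportional to $\mathbf b$, because $\mathbf b$ has equal entries in positions $1$ and $5$.

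\textbf{No multiplicative contraction.} The gradient-descent update reads $\mathbf e^{(t+1)}=\mathbf e^{(t)}-\eta\gamma\sigma(\gamma S^{(t)})\,\mathbf b$. Suppose a scalar sequence $\{c_t\}$ satisfied $\mathbf e^{(t+1)}=c_t\mathbf e^{(t)}$ for all initial $\mathbf e^{(0)}$. At $t=0$ this gives
\[
(1-c_0)\,\mathbf e^{(0)}=\eta\gamma\sigma(\gamma S^{(0)})\,\mathbf b .
\]
The right-hand side is nonzero, since $\sigma>0$, $\eta\gamma>0$ and $\mathbf b\neq 0$. Therefore $c_0\neq 1$ and $\mathbf e^{(0)}\in\mathrm{span}(\mathbf b)$. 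Choosing $\mathbf e^{(0)}$ as in the previous step gives a contradiction.

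A second initial point also works: two linearly independent initial vectors cannot both lie in the one-dimensional span of $\mathbf b$. Either choice defeats a $c_0$ that is independent of $\mathbf e^{(0)}$, and the argument still holds if $c_t$ is allowed to depend on the trajectory.

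\textbf{Main obstacle.} The only delicate points are the quantifiers in ``for all initial $\mathbf e^{(0)}$'' and the degenerate case $\mathbf b=0$. I will state the nondegeneracy assumption explicitly and argue at $t=0$, which suffices.
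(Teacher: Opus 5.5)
Your proof is correct and follows essentially the same route as the paper: the chain rule gives the constant-direction gradient $\gamma\,\sigma(\gamma S)\,[\beta_1,\dots,\beta_4,\beta_1,\dots,\beta_4]^\top$, and the initial point $\mathbf{e}^{(0)}=[1,0,\dots,0]^\top$ is the counterexample to multiplicative contraction. Your identity $(1-c_0)\,\mathbf{e}^{(0)}=\eta\gamma\sigma(\gamma S^{(0)})\,\mathbf{b}$, combined with the explicit assumption $\mathbf{b}\neq 0$, is a slightly tighter version of the paper's closing step, which assumes $\beta_1\neq 0$ and argues only that $\mathbf{e}^{(1)}$ ``generally'' has several nonzero entries.
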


The proofs of Proposition 1, Theorem 1, and Proposition 2 are provided in Appendix \ref{appendix_a}.

\section{Experiments}
%%%%Single-Level experiments
We carry out extensive experiments to evaluate the effectiveness of our methods. At the single level, we compare our unified SL-GSSL approach with state-of-the-art methods that operate within the same abstraction level. Subsequently, we evaluate our multi-level approaches (i.e., L-ML-GSSL, LW-ML-GSSL, and LSW-ML-GSSL) against state-of-the-art multi-task GSSL methods. This systematic evaluation ensures a comprehensive understanding of how our method performs relative to the best available techniques. The code of the proposed approaches is available at this \textcolor{blue}{\href{https://github.com/M-M-Amar/LSW_ML_GSSL}{GitHub repository}}.% and will be made publicly accessible upon acceptance.

\textbf{Baselines.} We evaluate the performance of our single-level approaches by comparing them with thirteen state-of-the-art GSSL methods. These methods are selected to ensure comprehensive coverage of all abstraction levels. At the node level, we evaluate our approach (with $l = \text{node}$) against three prominent node-level GSSL methods: GRACE \cite{zhu2020deep},  BGRL \cite{thakoor2021large}, and GCMAE \cite{wang2024generative}. At the proximity level, we compare our method (with $l = \text{proximity}$) with Node2Vec \cite{grover2016node2vec}, GAE \cite{kipf2016variational}, VGAE \cite{kipf2016variational}, and NCLA \cite{shen2023neighbor}. At the cluster level, we assess our approach (with $l = \text{cluster}$) against GMM-VGAE \cite{hui2020collaborative}, DGAE \cite{mrabah2022rethinking}, and DGCLUSTER \cite{bhowmick2024dgcluster}. It is crucial to highlight that, at the cluster level, most deep graph clustering methods involve a pretraining self-supervised phase. Following this phase, pseudo-labels are generated and refined during a clustering phase. For instance, GMM-VGAE and DGAE employ proximity-level adjacency reconstruction as a pretext task during the pretraining phase. To ensure comparability with our approach, the proximity-level pretraining is excluded, and the two methods are denoted GMM-VGAE* and DGAE*. Additionally, we exclude the auxiliary term in DGCLUSTER's loss function that incorporates pairwise node information, and refer to this method as DGCLUSTER*. Finally, at the graph level, our method (with $l = \text{graph}$) is compared with DGI \cite{velickovic2019deep}, MVGRL \cite{hassani2020contrastive}, and LS-GCL \cite{yang2024local}. We evaluate our multi-level self-weighting framework (LSW-ML-GSSL) against five leading methods in multi-task graph learning, namely AutoSSL \cite{jin2021automated}, ParetoGNN \cite{ju2022multi}, DyFSS \cite{zhu2024every}, WAS \cite{fan2024decoupling}, and GraphTCM \cite{fang2024exploring}, together with four notable approaches for multi-scale graph learning: ANEMONE \cite{jin2021anemone}, MERIT \cite{ijcai2021p204}, MSSGCL \cite{ijcai2023p246}, and LMGTA \cite{li2025anomaly}. For each baseline, we use the official code shared by the authors and tune the hyperparameters in cases where explicit guidelines are not provided. To ensure a fair comparison, we run each method $10$ times and report the average performance along with its standard deviation.

\begin{table}[t]
\centering
\caption{Datasets statistics.}
\resizebox{\linewidth}{!}{
\begin{tabular}{ |c|c|c|c|c|c|c| } \hline 
\textbf{} & \textbf{Cora} & \textbf{CiteSeer} & \textbf{Pubmed} & \textbf{DBLP} & \textbf{Photo} & \textbf{Computers} \\ \hline 
\#Nodes            & 2,708 & 3,327 & 19,717 & 17,716 & 7,650 & 13,752 \\ \hline
\#Edges            & 10,556 & 9,104 & 88,648 & 105,734 & 238,162 & 491,722 \\ \hline
\#Features         & 1,433 & 3,703 & 500 & 1,639 &745 & 767\\ \hline
\#Classes          & 7 & 6 & 3  & 4 & 8 & 10\\ \hline 
\end{tabular}}
\label{tab:dataset_stats}
\end{table}

\textbf{Datasets, Downstream Tasks, and Evaluation Metrics.} We employed six diverse datasets to assess the performance of our approach: Cora \cite{mccallum2000automating}, CiteSeer \cite{giles1998citeseer}, Pubmed \cite{sen2008collective}, DBLP \cite{tang2008arnetminer}, Photo \cite{shchur2018pitfalls}, and Computers \cite{shchur2018pitfalls}. Table \ref{tab:dataset_stats} provides a summary of key dataset characteristics, with additional details outlined below.
\begin{itemize}
    \item \textbf{Cora}: The dataset is a citation network comprising 2,708 scientific publications (nodes), each classified into one of seven categories. It contains a total of 10,556 links. Each publication in the dataset is characterized by a binary vector (0/1), where each element signifies whether a specific word from a dictionary of 1,433 words (features) is absent or present.
    \item \textbf{CiteSeer}: The dataset includes 3,327 scientific publications divided into six categories, connected by a total of 9,104 links. Each publication is defined by a binary vector, marking whether each of 3,703 unique dictionary words is present or absent.
    \item \textbf{Pubmed}: This dataset includes 19,717 scientific articles from the PubMed database, all focused on diabetes, categorized into three distinct classes. The citation network contains 88,648 connections. Each article is represented by a TF/IDF (term frequency-inverse document frequency) weighted word vector derived from a dictionary containing 500 unique words. 
    \item \textbf{DBLP}: This is a citation network compiled from sources like DBLP, ACM, MAG (Microsoft Academic Graph), and others. The network comprises 17,716 articles connected by 105,734 links. Each article is represented by 1,639 features and categorized into one of 4 classes. 
    \item \textbf{Photo}: This dataset's nodes represent goods, while edges represent frequent co-purchases. The reviews are utilized to generate bag-of-words node features. The photo network comprises 7,650 nodes, 238,162 edges, and 745 features. It is partitioned into 8 distinct classes. 
    \item \textbf{Computers}: Similar to Photo, Computers is a co\-purchase graph derived from Amazon. It consists of 13,752 nodes, 491,722 edges, 767 features, and 10 classes. 
\end{itemize}

\begin{table}[t]
  \caption{Fixed hyperparameters for all datasets.}
  \begin{center}
  \begin{small}
  \begin{tabular}{|p{2.15cm}|c|p{2cm}|}
    \hline 
    Level & Parameter & Value \\ \hline
    \multirow{5}{*}{Augmentations} & Drop edge rate 1 & 0.2 \\
    \cline{2-3}
    & Drop edge rate 2 & 0.4 \\
    \cline{2-3}
    & Drop feature rate 1 & 0.3\\
    \cline{2-3}
    & Drop feature rate 2 & 0.4 \\
    \cline{2-3}
    & Dropping scheme & Uniform \\
    \hline
    \multirow{2}{*}{Architecture} & GNN encoder & 256-256\\ 
                                  \cline{2-3}
                                  & Projection head & 256-128 \\
    
                                   \hline
    \multirow{2}{*}{Training} & Optimizer & Adam \\ 
                                       \cline{2-3}
                                       & Learning rate & $0.001$ \\ 
                                       
                                        \hline
                                      
  \end{tabular}
  \end{small}
  \end{center}
  \label{Table:hyperparameters_indep}
\end{table}

Our experiments focus on three 
downstream tasks: node classification, node clustering, and link prediction.  
We assess the models using metrics specific to the downstream tasks. For node classification, we measure performance using Accuracy (ACC). Node clustering is evaluated using Normalized Mutual Information (NMI) and Adjusted Rand Index (ARI), while link prediction performance is evaluated using the Area Under the Receiver Operating Characteristic Curve (AUC-ROC).

\begin{table}[t]
\caption{Data-dependent hyperparameters.}
  \begin{center}
  %\begin{small}
  \resizebox{\linewidth}{!}{
  \begin{tabular}{|c|c|c|c|c|c|c|}
    \hline
    Parameter & Cora &  CiteSeer & Pubmed & DBLP & Photo & Computers \\ \hline
     $m$ & 0.15 & 0.15 & 0.10 & 0.20 & 0.10 & 0.10 \\ \hline
     $\gamma$ & 1.5 & 1.0 & 2.0 & 1.5 & 2.0 & 2.0 \\ \hline
     \# of epochs & 400 & 400 & 600 & 1,000 & 2,500 & 4,500 \\ \hline
  \end{tabular}
  }
%  \end{small}
  \end{center}
  
  \label{Table:hyperparameters_dep}
\end{table}

\begin{table*}[!t]
\centering
\caption{Performance and task generalization of our SL-GSSL methods and the state-of-the-art single-task GSSL approaches.}
\label{table_1}
\scalebox{0.68}{\begin{tabular}{l|l|ccccccc}
\toprule
\toprule
Abstraction level & Method & Cora & CiteSeer & Pubmed & DBLP & Photo & Computers  & Rank \\ 
\midrule
\rowcolor{gray!25} \multicolumn{9}{c}{\textsc{Average Performance}} \\ \hline

\multirow{3}{*}{\textsc{Node}} & \textsc{GRACE} \cite{zhu2020deep} & \underline{65.57} & 57.45 & 55.50 & 65.29 & \underline{68.81} & 61.27 & 2.7 \\
                               & \textsc{BGRL} \cite{thakoor2021large} & 58.21 & 48.90 & 54.53 & 46.32 & 66.18 & 60.40 & 3.7 \\
                               & \textsc{GCMAE} \cite{wang2024generative} & 53.78 & \underline{61.92} & \underline{57.12} & \underline{66.84} & 65.25 & \underline{65.50}& \underline{2.7}
                                    \\
                               & \textsc{SL-GSSL} & \textbf{71.02} & \textbf{64.51}& \textbf{60.78} & \textbf{69.40} & \textbf{70.79} &\textbf{71.69}  & \textbf{1.0} \\ \hline 

\multirow{3}{*}{\textsc{Proximity}} & \textsc{Node2Vec} \cite{grover2016node2vec} & 59.99 & 40.35 & 46.83 &50.90 & 68.95 & 62.30 & 4.5 \\
                                    & \textsc{GAE} \cite{kipf2016variational} & 63.68 & 54.76 & \underline{59.17} & 59.49 & 68.11 & 59.89 & 3.5 \\ 
                                    & \textsc{VGAE} \cite{kipf2016variational} & 64.20 & 53.92 & 57.59 & 58.85 & 69.26 & 59.43 & 3.7 \\ 
                                    & \textsc{NCLA} \cite{shen2023neighbor} & \underline{69.19} & \textbf{63.08}& 56.65 & \underline{63.20}& \underline{70.87}& \underline{65.84}& \underline{2.2}
                                    \\
                                    & \textsc{SL-GSSL} & \textbf{70.60}& \underline{62.79} & \textbf{63.81}& \textbf{69.09}&\textbf{71.89} & \textbf{66.36} & \textbf{1.2} \\ \hline

\multirow{3}{*}{\textsc{Cluster}} & \textsc{GMM-VGAE*} \cite{hui2020collaborative} & 19.41 & 17.61 & 28.44 & 26.39 & 19.51 &24.93  & 4.0 \\
                                  & \textsc{DGAE*} \cite{mrabah2022rethinking} & 26.37 & 23.21 & 33.70 & 34.09 & 33.81 & 33.28 & 3.0 \\
                                  & \textsc{DGCLUSTER*} \cite{bhowmick2024dgcluster} & \underline{60.37} & \underline{44.17} & \underline{47.96} & \underline{50.78}& \underline{55.25} & \underline{45.74} & \underline{2.0}
                                    \\
                                  & \textsc{SL-GSSL} &\textbf{66.60} & \textbf{59.62} & \textbf{56.83}& \textbf{63.17}& \textbf{65.70} & \textbf{51.96} & \textbf{1.0}\\ \hline

\multirow{3}{*}{\textsc{Graph}} & \textsc{DGI} \cite{velickovic2019deep} & 64.77  &49.43  & 57.30 & \underline{60.50} & 53.42 &55.47 & 3.0 \\
                                & \textsc{MVGRL}  \cite{hassani2020contrastive} & \underline{66.60} & \underline{60.73} & \underline{60.35} & 56.44 & \underline{61.26} & \underline{57.12} & \underline{2.3} \\
                                & \textsc{LS-GCL} \cite{yang2024local} & 66.01 &48.81 & 54.84& 58.27 & 51.81 & 54.87 & 3.7
                                    \\
                                & \textsc{SL-GSSL} & \textbf{71.46}& \textbf{62.50}  & \textbf{60.46}& \textbf{66.53}& \textbf{63.70}& \textbf{59.12} &\textbf{1.0}\\ \hline

\rowcolor{gray!25} \multicolumn{9}{c}{\textsc{Node Classification} (Accuracy)} \\ \hline

\multirow{3}{*}{\textsc{Node}} & \textsc{GRACE} \cite{zhu2020deep} & $\underline{83.88 \pm \scriptstyle 0.63}$ & $68.13 \pm \scriptstyle 0.90$ & $\underline{85.14 \pm \scriptstyle 0.24}$ & $\underline{84.33 \pm \scriptstyle 0.02}$ & $\underline{92.29 \pm \scriptstyle 0.17}$ & $86.20 \pm \scriptstyle 0.46$ & \underline{2.3}\\
                               & \textsc{BGRL} \cite{thakoor2021large} & $82.67\pm \scriptstyle 0.16$ & $66.92 \pm \scriptstyle 0.58$ & $\mathbf{85.35 \pm \scriptstyle 0.20}$ & $78.43 \pm \scriptstyle 0.33$ & $92.22 \pm \scriptstyle 0.31$ & $\mathbf{87.69 \pm \scriptstyle 0.13}$ & 2.7\\ 
                               & \textsc{GCMAE} \cite{wang2024generative} & $79.92 \pm \scriptstyle 0.44$& $ \underline{69.29 \pm \scriptstyle 0.20}$ & $84.08 \pm \scriptstyle 0.12$ & $84.25 \pm \scriptstyle 0.07$ &  $90.11 \pm \scriptstyle 0.49$& $86.14 \pm \scriptstyle 0.29$ & 3.5
                                    \\
                               & \textsc{SL-GSSL} & $\mathbf{84.08 \pm \scriptstyle 0.45}$& $\mathbf{71.02 \pm \scriptstyle 0.65}$ & $84.14 \pm \scriptstyle 0.18$ & $\mathbf{84.42 \pm \scriptstyle 0.23}$ & $\mathbf{92.78 \pm \scriptstyle 0.31}$& $\underline{86.22 \pm \scriptstyle 0.81}$ & \textbf{1.5} \\ \hline 

\multirow{3}{*}{\textsc{Proximity}} & \textsc{Node2Vec}  \cite{grover2016node2vec} & $75.62 \pm \scriptstyle 1.77$ & $49.68 \pm \scriptstyle 1.39$ & $79.42 \pm \scriptstyle 0.36$ & $ 78.63 \pm \scriptstyle 0.29$ & $89.83 \pm \scriptstyle 0.48$ & $85.64 \pm \scriptstyle 0.35$ & 4.5 \\
                                    & \textsc{GAE} \cite{kipf2016variational} & $80.46 \pm \scriptstyle 0.94$ & $68.45 \pm \scriptstyle 0.78$ & $82.57 \pm \scriptstyle 0.17$ & $81.59 \pm \scriptstyle 0.19 $ & $ \underline{91.51 \pm \scriptstyle 0.39}$ & $80.25 \pm \scriptstyle 0.42$ & 3.2 \\ 
                                    & \textsc{VGAE} \cite{kipf2016variational} & $\underline{80.62 \pm \scriptstyle 0.32}$ & $66.59 \pm \scriptstyle 1.04$ & $82.31 \pm \scriptstyle 0.35$ & $81.89 \pm \scriptstyle 0.18$ & $89.76 \pm \scriptstyle 0.50$ & $79.28 \pm \scriptstyle 0.66$ & 3.8 \\ 
                                    %& \textsc{NCLA} \cite{shen2023neighbor} &80.18 \pm \scriptstyle 0.80 &$\underline{72.34 \pm \scriptstyle 0.80}$ &$\underline{82.65 \pm \scriptstyle 0.17}$ & $\underline{82.89 \pm \scriptstyle 0.11}$& $90.91 \pm \scriptstyle 0.33$&$\underline{85.67 \pm \scriptstyle 0.40}$ & \underline{2.5}
                                    & \textsc{NCLA} \cite{shen2023neighbor} & $80.18 \pm \scriptstyle 0.80$ & $\underline{72.34 \pm \scriptstyle 0.80}$ & $\underline{82.65 \pm \scriptstyle 0.17}$ & $\underline{82.89 \pm \scriptstyle 0.11}$ & $90.91 \pm \scriptstyle 0.33$ & $\underline{85.67 \pm \scriptstyle 0.40}$ & \underline{2.5} \\
                                    & \textsc{SL-GSSL} & $\mathbf{82.32 \pm \scriptstyle 0.58}$ & $\mathbf{71.34 \pm \scriptstyle 1.07}$ & $\mathbf{83.23 \pm \scriptstyle 0.07}$& $\mathbf{82.33 \pm \scriptstyle 0.14}$& $\mathbf{92.10 \pm \scriptstyle 0.29}$ & $\mathbf{85.77 \pm \scriptstyle 0.56}$ & \textbf{1.0} \\ \hline

\multirow{3}{*}{\textsc{Cluster}} & \textsc{GMM-VGAE*} \cite{hui2020collaborative} & $22.09 \pm \scriptstyle 1.44$ & $19.34 \pm \scriptstyle 0.73$ & $39.68 \pm \scriptstyle 0.33$ & $43.67 \pm \scriptstyle 0.35$ & $24.89 \pm \scriptstyle 1.24$ & $37.39
\pm \scriptstyle 0.17$ & 4.0 \\
                                  & \textsc{DGAE*} \cite{mrabah2022rethinking} & $30.19 \pm \scriptstyle 0.27$ & $20.27 \pm \scriptstyle 1.17$ & $40.08 \pm \scriptstyle 1.54$ & $44.52 \pm \scriptstyle 0.34$ & $25.01 \pm \scriptstyle 1.10$ & $37.58 \pm \scriptstyle 0.16$ & 3.0 \\
                                  & \textsc{DGCLUSTER*} \cite{bhowmick2024dgcluster} &$\underline{77.95 \pm \scriptstyle 1.20}$ &$\underline{55.71 \pm \scriptstyle 1.54}$ & $\underline{61.13 \pm \scriptstyle 0.29}$&$\underline{75.49 \pm \scriptstyle 0.29}$ &$\underline{74.26 \pm \scriptstyle 0.60}$ & $\underline{75.23 \pm \scriptstyle 1.68}$&\underline{2.0}
                                    \\
                                  & \textsc{SL-GSSL} & $\mathbf{82.23 \pm \scriptstyle 1.82}$ & $\mathbf{68.94 \pm \scriptstyle 0.36}$ & $\mathbf{75.87 \pm \scriptstyle 0.43}$&$\mathbf{80.70 \pm \scriptstyle 0.12}$ & $\mathbf{91.51 \pm \scriptstyle 0.24}$ & $\mathbf{84.76 \pm \scriptstyle 0.60}$ &\textbf{1.0}\\ \hline

\multirow{3}{*}{\textsc{Graph}} & \textsc{DGI} \cite{velickovic2019deep} & $81.76 \pm \scriptstyle 1.27$ & $68.93 \pm \scriptstyle 0.28$ & $\underline{84.38 \pm \scriptstyle 0.24}$  & $82. 31 \pm \scriptstyle 0.21$ & $90.19 \pm \scriptstyle 0.54$ & $83.68 \pm \scriptstyle 0.54$ & 3.3 \\
                                & \textsc{MVGRL}  \cite{hassani2020contrastive} & $\underline{84.02 \pm \scriptstyle 0.68}$ & $\underline{71.65 \pm \scriptstyle 0.55}$ & $\mathbf{85.01 \pm \scriptstyle 0.27}$ & $\mathbf{84.28 \pm \scriptstyle 0.24}$ & $\mathbf{91.34 \pm \scriptstyle 0.57}$ & $\underline{85.53 \pm \scriptstyle 0.40}$ & \textbf{1.5} \\
                                %& \textsc{LS-GCL} \cite{yang2024local} &$83.22 \pm \scriptstyle 0.75$ &$65.60 \pm \scriptstyle 1.11$ &$81.82 \pm \scriptstyle 0.30$ & $83.26 \pm \scriptstyle 0.20$&$84.86 \pm \scriptstyle 0.82$ &83.93 \pm \scriptstyle 0.99 &3.3 \\
                                & \textsc{LS-GCL} \cite{yang2024local} & $83.22 \pm \scriptstyle 0.75$ & $65.60 \pm \scriptstyle 1.11$ & $81.82 \pm \scriptstyle 0.30$ & $83.26 \pm \scriptstyle 0.20$ & $84.86 \pm \scriptstyle 0.82$ & $83.93 \pm \scriptstyle 0.99$ & 3.3 \\
                                & \textsc{SL-GSSL} & $\mathbf{84.35 \pm \scriptstyle 0.48}$& $\mathbf{71.71 \pm \scriptstyle 0.77}$ & $81.28 \pm \scriptstyle 0.31$ & $\underline{83.29 \pm \scriptstyle 0.31}$&$\underline{90.91 \pm \scriptstyle 0.33}$ & $\mathbf{85.74 \pm \scriptstyle 0.54}$ & \underline{1.8}\\ \hline

\rowcolor{gray!25} \multicolumn{9}{c}{\textsc{Node Clustering} (NMI/ARI)} \\ \hline

\multirow{3}{*}{\textsc{Node}} & \textsc{GRACE} \cite{zhu2020deep} & $\underline{51.96/42.61}$ & $34.97/33.63$ & $25.64/23.78$ & $46.80/42.32$ & $\underline{51.98/39.18}$ & $45.82/24.50$ & \underline{2.7} \\
                               & \textsc{BGRL} \cite{thakoor2021large} & $40.61/25.71$ & $20.26/17.36$ & $20.36/19.11$ & $10.12/ 09.51$ & $51.09/36.44$ & $41.58/27.91$ & 3.7 \\
                               & \textsc{GCMAE} \cite{wang2024generative} & $31.11/17.83$ &$\underline{42.64/43.22}$ & $\underline{27.54/24.32}$ & $\underline{48.25/45.73}$&$49.21/34.56$ &$\underline{51.12/32.86}$ & 2.7
                                    \\
                               & \textsc{SL-GSSL} &$\mathbf{57.50/52.02}$ & $\mathbf{45.64/47.05}$ & $\mathbf{34.73/31.04}$ &$\mathbf{51.27/48.56}$ & $\mathbf{54.87/40.39}$ & $\mathbf{54.23/52.53}$ & \textbf{1.0}\\ \hline 

\multirow{3}{*}{\textsc{Proximity}} & \textsc{Node2Vec} \cite{grover2016node2vec} & $44.51/32.92$ & $24.17/07.87$ & $19.67/07.19$ & $21.48/11.58$ & $56.03/38.30$ & $44.90/26.94$ & 4.3\\
                                    & \textsc{GAE} \cite{kipf2016variational} & $46.10/37.94$ & $32.41/29.60$ & $\underline{29.42/29.10}$ & $32.31/29.50$ & $54.61/34.28$ & $43.45/24.58$ & 3.3 \\ 
                                    & \textsc{VGAE} \cite{kipf2016variational} & $46.19/37.13$ & $31.27/28.97$ & $26.73/24.99$ & $31.65/26.04$ & $52.67/39.62$ & $41.28/23.06$ & 4.2\\ 
                                    & \textsc{NCLA} \cite{shen2023neighbor} &$\underline{54.62/50.38}$ & $\underline{43.94/45.15}$ & $24.89/23.67$ & $\underline{36.34/38.69}$ & $\underline{57.04/42.48}$ & $ \underline{50.68/35.60}$ & \underline{2.3} \\
                                    & \textsc{SL-GSSL} & $\mathbf{56.75/51.66}$& $\mathbf{43.58/44.50}$  & $\mathbf{35.23/42.25}$ & $\mathbf{50.34/49.87}$ & $\mathbf{59.15/43.08}$ & $\mathbf{51.46/49.09}$ & \textbf{1.0}\\ \hline

\multirow{3}{*}{\textsc{Cluster}} & \textsc{GMM-VGAE*} \cite{hui2020collaborative} & $02.68/02.10$ & $01.64/01.01$ & $12.74/11.84$ & $09.06/02.83$ & $02.13/01.07$ & $06.67/05.72$ & 3.7\\
                                  & \textsc{DGAE*} \cite{mrabah2022rethinking} & $03.53/02.40$ & $05.38/04.80$ & $02.35/02.97$ & $06.35/07.44$ & $18.45/10.53$ & $08.23/03.81$ & 3.3 \\
                                  & \textsc{DGCLUSTER*} \cite{bhowmick2024dgcluster} &$\underline{50.10/29.40}$ & $\underline{24.35/10.94}$& $\underline{26.08/15.67}$&$\underline{32.21/14.24}$ &$\underline{34.84/21.29}$ & $\underline{17.21/06.01}$& \underline{2.0}
                                    \\
                                  & \textsc{SL-GSSL} & $\mathbf{51.23/43.77}$ & $\mathbf{40.03/39.37}$  &$\mathbf{31.05/28.70}$ & $\mathbf{39.97/43.43}$& $\mathbf{47.15/32.51}$ & $\mathbf{22.76/14.00}$  &\textbf{1.0}\\ \hline

\multirow{3}{*}{\textsc{Graph}} & \textsc{DGI} \cite{velickovic2019deep} & $53.09/42.84$ & $23.17/18.00$ & $24.92/25.63$ & $\underline{35.25/35.19}$ & $12.10/24.30$ & $32.32/19.77$ & 3.2 \\
                                & \textsc{MVGRL} \cite{hassani2020contrastive} & $\underline{52.43/47.35}$ & $\underline{42.03/42.69}$ & $\underline{31.69/32.39}$ & $29.51/27.53$ & $\underline{40.62/24.78}$ & $\underline{32.97/22.02}$ & \underline{2.3} \\
                                & \textsc{LS-GCL} \cite{yang2024local} &$53.66/41.71$ & $25.27/16.07$& $25.06/23.38$&$32.77/28.61$ &$16.37/23.84$ & $31.52/17.23$& 3.5
                                    \\
                                & \textsc{SL-GSSL} & $\mathbf{57.10/51.11}$ & $\mathbf{43.00/43.17}$ & $\mathbf{34.18/33.62}$ &$\mathbf{46.47/44.79}$ &$\mathbf{45.42/29.22}$ & $\mathbf{35.12/25.31}$  & \textbf{1.0}\\ \hline

\rowcolor{gray!25} \multicolumn{9}{c}{\textsc{Link Prediction} (ROC-AUC)} \\ \hline

\multirow{3}{*}{\textsc{Node}} & \textsc{GRACE}  \cite{zhu2020deep} & $83.86 \pm \scriptstyle 0.11 $ & $\underline{93.09 \pm \scriptstyle 0.73}$ & $87.44 \pm \scriptstyle 1.03$ & $87.72 \pm \scriptstyle 0.04$ & $\underline{91.82 \pm \scriptstyle 0.12}$ & $88.58 \pm \scriptstyle 0.08$ & 3.0 \\
                               & \textsc{BGRL} \cite{thakoor2021large} & $\underline{87.82 \pm \scriptstyle 1.77}$ & $91.06 \pm \scriptstyle 0.78$ & $\underline{93.32 \pm \scriptstyle 0.35}$ & $87.31 \pm \scriptstyle 0.35$ & $85.24 \pm \scriptstyle 0.48$ & $84.42 \pm \scriptstyle 0.33$ & 3.3 \\
                               & \textsc{GCMAE} \cite{wang2024generative} & $86.29 \pm \scriptstyle 0.20$&$92.53 \pm \scriptstyle 0.52$ &$92.55 \pm \scriptstyle 0.37$ & $\underline{89.15 \pm \scriptstyle 0.21}$ &$87.12 \pm \scriptstyle 0.35$ &$\underline{91.89 \pm \scriptstyle 0.07}$ & \underline{2.7}
                                    \\
                               & \textsc{SL-GSSL} & $\mathbf{90.50 \pm \scriptstyle 1.42}$ & $\mathbf{94.34 \pm \scriptstyle 0.84}$ &$\mathbf{93.21 \pm \scriptstyle 0.29}$ &$\mathbf{93.18 \pm \scriptstyle 0.37}$ & $\mathbf{94.53 \pm \scriptstyle 0.34}$ & $\mathbf{93.80 \pm 0.18}$ & \textbf{1.0}\\ \hline 

\multirow{3}{*}{\textsc{Proximity}} & \textsc{Node2Vec} \cite{grover2016node2vec}& $86.94 \pm \scriptstyle 1.17$ & $79.71 \pm \scriptstyle 1.57$ & $81.06 \pm \scriptstyle 0.51$ & $ 91.93 \pm \scriptstyle 0.17$ & $91.66 \pm \scriptstyle 0.08$ & $91.74 \pm \scriptstyle 0.09$ & 4.7 \\
                                    & \textsc{GAE} \cite{kipf2016variational} & $90.24 \pm \scriptstyle 0.74 $ & $88.59 \pm \scriptstyle 0.81$ & $\underline{95.60 \pm \scriptstyle 0.17}$ & $94.57 \pm \scriptstyle 0.43$ & $92.06 \pm \scriptstyle 0.22$ & $91.29 \pm \scriptstyle 0.30$ & 3.5 \\ 
                                    & \textsc{VGAE} \cite{kipf2016variational} & $89.88 \pm \scriptstyle 1.21$ & $88.85 \pm \scriptstyle 0.37$ & $ \mathbf{96.33 \pm \scriptstyle 0.39}$ & $\mathbf{95.84 \pm \scriptstyle 0.32}$ & $\mathbf{95.02 \pm \scriptstyle 0.57}$ & $\mathbf{94.09 \pm \scriptstyle 0.10}$ & \textbf{1.8} \\ 
                                    %& \textsc{NCLA} \cite{shen2023neighbor} &\underline{91.59 \pm \scriptstyle 0.30} & \underline{90.89 \pm \scriptstyle 0.25}& $95.42 \pm \scriptstyle 0.29$& $\underline{94.89 \pm \scriptstyle 0.12}$&$93.05 \pm \scriptstyle 0.52$ & $91.42 \pm \scriptstyle 0.21$& 2.7
                                    %\\
                                    & \textsc{NCLA} \cite{shen2023neighbor} & $\underline{91.59 \pm \scriptstyle 0.30}$ & $\underline{90.89 \pm \scriptstyle 0.25}$ & $95.42 \pm \scriptstyle 0.29$ & $\underline{94.89 \pm \scriptstyle 0.12}$ & $93.05 \pm \scriptstyle 0.52$ & $91.42 \pm \scriptstyle 0.21$ & 2.7 \\
                                    & \textsc{SL-GSSL} & $\mathbf{92.07 \pm \scriptstyle 0.15}$& $\mathbf{91.77 \pm \scriptstyle 0.24}$ &$94.54 \pm \scriptstyle 0.19$ & $93.83 \pm \scriptstyle 0.21$ & $\underline{93.25 \pm \scriptstyle 0.47}$& $\underline{93.98 \pm \scriptstyle 0.25}$ & \underline{2.3}\\ \hline

\multirow{3}{*}{\textsc{Cluster}} & \textsc{GMM-VGAE*} \cite{hui2020collaborative} & $50.78 \pm \scriptstyle 2.23$ & $48.47 \pm \scriptstyle 1.04$ & $49.52 \pm \scriptstyle 0.07$ & $50.02 \pm \scriptstyle 0.44$ & $49.97 \pm \scriptstyle 0.28$ & $49.94 \pm \scriptstyle 0.41$ & 4.0 \\
                                  & \textsc{DGAE*} \cite{mrabah2022rethinking} & $69.37 \pm \scriptstyle 0.95$ & $62.42 \pm \scriptstyle 0.04$ & $\underline{89.25 \pm \scriptstyle 0.01}$ & $78.06 \pm \scriptstyle 0.01$ & $81.27 \pm \scriptstyle 0.01$ & $83.50 \pm \scriptstyle 0.01$ & 2.8 \\
                                  & \textsc{DGCLUSTER*} \cite{bhowmick2024dgcluster} &$\underline{84.04 \pm \scriptstyle 0.89}$ &$\underline{85.70 \pm \scriptstyle 0.52}$ &$88.97 \pm \scriptstyle 0.17$ & $\underline{81.21 \pm \scriptstyle 0.34}$&$\underline{90.64 \pm \scriptstyle 0.12}$ &$\underline{84.53 \pm \scriptstyle 0.47}$ & \underline{2.2}
                                    \\
                                  & \textsc{SL-GSSL} &$\mathbf{89.17 \pm \scriptstyle 0.37}$ & $\mathbf{90.15 \pm \scriptstyle 0.52}$ & $\mathbf{91.73 \pm \scriptstyle 0.21}$ & $\mathbf{88.59 \pm \scriptstyle 0.83}$& $\mathbf{91.64 \pm \scriptstyle 0.29}$& $\mathbf{86.34 \pm \scriptstyle 0.38}$ &\textbf{1.0} \\ \hline

\multirow{3}{*}{\textsc{Graph}} & \textsc{DGI} \cite{velickovic2019deep} & $81.41 \pm \scriptstyle 0.14$ & $87.64 \pm \scriptstyle 0.06$ & $\mathbf{94.28 \pm \scriptstyle 0.04}$ & $\underline{89.28 \pm \scriptstyle 0.36}$ & $87.09 \pm \scriptstyle 0.26$ & $86.12 \pm \scriptstyle 0.45$ & \underline{2.8} \\
                                & \textsc{MVGRL} \cite{hassani2020contrastive} & $82.62 \pm \scriptstyle 1.15$ & $86.55 \pm \scriptstyle 1.05$ & $92.32 \pm \scriptstyle 1.36$ & $84.46 \pm \scriptstyle 1.90$ & $\underline{88.32 \pm \scriptstyle 0.02}$ & $\underline{87.99 \pm \scriptstyle 1.12}$ & 3.0 \\
                                & \textsc{LS-GCL} \cite{yang2024local}&$\underline{85.45 \pm \scriptstyle 0.13}$ & $\underline{88.32 \pm \scriptstyle 0.47}$& $89.10 \pm \scriptstyle 0.75$&$88.46 \pm \scriptstyle 0.47$ & $82.19 \pm \scriptstyle 0.17$&$86.82 \pm \scriptstyle 0.84$ & 3.0
                                    \\
                                & \textsc{SL-GSSL} &$\mathbf{93.31 \pm \scriptstyle 0.21}$ & $\mathbf{92.15 \pm \scriptstyle 0.84}$ &$\underline{92.77 \pm \scriptstyle 0.54}$ & $\mathbf{91.57 \pm \scriptstyle 0.18}$&$\mathbf{89.25 \pm \scriptstyle 0.45}$ & $\mathbf{90.34 \pm \scriptstyle 0.78}$ & \textbf{1.2} \\ \hline
\bottomrule
\end{tabular}}
\end{table*}

\begin{table*}
\centering
\caption{Ablation study of the single-level unified objective in Eq.~(\ref{eq.loss_unified}). 
\textsc{Best-SL-GSSL} denotes the best performing single-level SL-GSSL variant across the abstraction levels. For each ablation setting, we also report the best result attained across the abstraction levels.  
\textsc{Best-SL-GSSL-No-Margin} removes the margin by setting \(m=0\). 
\textsc{Best-SL-GSSL-Hinge-Loss} replaces the exponential transformation with the hinge form \(\max\!\left(0,\, s^- - s^+ + m\right)\). 
\textsc{Best-SL-GSSL-InfoNCE} uses the standard Information Noise Contrastive Estimation loss defined over \(s^+\) and \(s^-\).  
}
\label{sl_ablation}
\scalebox{0.68}{\begin{tabular}{l|cccccc}
\toprule
\toprule
 Method & Cora & CiteSeer & Pubmed & DBLP & Photo & Computers  \\ 
\midrule

\rowcolor{gray!25} \multicolumn{7}{c}{\textsc{Node Classification} (Accuracy)} \\ \hline
             \textsc{Best-SL-GSSL-InfoNCE} & $83.88 \pm \scriptstyle 0.63$ & $68.13 \pm \scriptstyle 0.90$ & $\underline{85.14 \pm \scriptstyle 0.24}$ & $\underline{84.33 \pm \scriptstyle 0.02}$ & $92.29 \pm \scriptstyle 0.17$ & $\underline{86.20 \pm \scriptstyle 0.46}$ \\
            
            \textsc{Best-SL-GSSL-No-Margin} & $\underline{84.00 \pm \scriptstyle 0.24}$ & $\underline{69.62 \pm \scriptstyle 0.59}$ & $83.93 \pm \scriptstyle 0.13$ & $82.04 \pm \scriptstyle 0.25$ & $92.38 \pm \scriptstyle 0.41$ & $86.00 \pm \scriptstyle 0.17$ \\
            \textsc{Best-SL-GSSL-Hinge-Loss} & $82.59 \pm \scriptstyle 0.52$ & $69.22 \pm \scriptstyle 0.73$ & $83.91 \pm \scriptstyle 0.10$ & $81.55 \pm \scriptstyle 0.20$ & $\underline{92.60 \pm \scriptstyle 0.23}$ & $86.00 \pm \scriptstyle 0.25$ \\
            \textsc{Best-SL-GSSL} &$\mathbf{84.35 \pm \scriptstyle 0.48}$ &$\mathbf{71.71 \pm \scriptstyle 0.77}$ & $\mathbf{85.35 \pm \scriptstyle 0.20}$&$\mathbf{84.42 \pm \scriptstyle 0.23}$ &$\mathbf{92.78 \pm \scriptstyle 0.31}$ & $\mathbf{87.69 \pm \scriptstyle 0.13}$\\\hline

\rowcolor{gray!25} \multicolumn{7}{c}{\textsc{Node Clustering} (NMI/ARI)} \\ \hline      
            \textsc{Best-SL-GSSL-InfoNCE}& $51.96/42.61$ & $34.97/33.63$ & $25.64/23.78$ & $46.80/42.32$ & $51.98/39.18$ & $ \underline{45.82/24.50}$ \\
            \textsc{Best-SL-GSSL-No-Margin} & $55.28/50.65$ & $\underline{42.88/44.24}$ & $34.00/34.93$ & $\underline{47.34/52.06}$ & $\underline{57.90/37.06}$ & $44.06/31.11$ \\
            \textsc{Best-SL-GSSL-Hinge-Loss} & $\underline{56.54/51.26}$ & $40.49/40.94$ & $\underline{34.84/35.00}$ & $41.69/47.28$ & $54.75/33.09$ & $41.51/28.47$ \\
            \textsc{Best-SL-GSSL} &$\mathbf{57.50/52.02}$ & $\mathbf{45.64/47.05}$ & $\mathbf{35.23/42.25}$ & $\mathbf{51.27/48.56}$ & $\mathbf{59.15/43.08}$ & $\mathbf{54.23/52.53}$\\ \hline

\rowcolor{gray!25} \multicolumn{7}{c}{\textsc{Link Prediction} (ROC-AUC)} \\ \hline
            \textsc{Best-SL-GSSL-InfoNCE} & $83.86 \pm \scriptstyle 0.11 $ & $93.09 \pm \scriptstyle 0.73$ & $87.44 \pm \scriptstyle 1.03$ & $87.72 \pm \scriptstyle 0.04$ & $91.82 \pm \scriptstyle 0.12 $ & $88.58 \pm \scriptstyle 0.08$ \\
            \textsc{Best-SL-GSSL-No-Margin} & $\underline{91.95 \pm \scriptstyle 0.23}$ & $\underline{93.77 \pm \scriptstyle 0.59}$ & $\underline{95.95 \pm \scriptstyle 0.28}$ & $\underline{95.10 \pm \scriptstyle 0.34}$ & $\underline{95.07 \pm \scriptstyle 0.19}$ & $\underline{93.36 \pm \scriptstyle 0.17}$ \\
            \textsc{Best-SL-GSSL-Hinge-Loss} & $90.06 \pm \scriptstyle 0.17$ & $90.15 \pm \scriptstyle 0.57$ & $94.32 \pm \scriptstyle 0.29$ & $91.55 \pm \scriptstyle 0.23$ & $94.98 \pm \scriptstyle 0.20$ & $92.78 \pm \scriptstyle 0.36$ \\
            \textsc{Best-SL-GSSL} & $\mathbf{93.31 \pm \scriptstyle 0.21}$ & $\mathbf{94.34 \pm \scriptstyle 0.84}$ & $\mathbf{96.33 \pm \scriptstyle 0.39}$ & $\mathbf{95.84 \pm \scriptstyle 0.32}$ & $\mathbf{95.13 \pm \scriptstyle 0.34}$ & $\mathbf{94.09 \pm \scriptstyle 0.10}$  \\ \hline

\bottomrule
\end{tabular}}
\end{table*}

%%%%Multi-Level results
\begin{table*}
\centering
\caption{Performance and task generalization of our LSW-ML-GSSL method and the state-of-the-art multi-task and multi-scale GSSL approaches.}
\label{table_2}
\scalebox{0.68}{\begin{tabular}{l|l|ccccccc}
\toprule
\toprule
  & Method & Cora & CiteSeer & Pubmed & DBLP & Photo & Computers  & Rank \\ 
\midrule

\rowcolor{gray!25} \multicolumn{9}{c}{\textsc{Node Classification} (Accuracy)} \\ \hline

\multirow{5}{*}{\textsc{Multi-Task Learning }} & \textsc{AutoSSL} \cite{jin2021automated}  &$81.75 \pm \scriptstyle 0.25$ & $71.00 \pm \scriptstyle 0.27$& $78.95 \pm \scriptstyle 0.10$& $82.89 \pm \scriptstyle 0.14$&$92.14 \pm \scriptstyle 0.06$ &$83.17 \pm \scriptstyle 0.03$ &$5.8$\\
                                &\textsc{ParetoGNN} \cite{ju2022multi}  & $80.14 \pm \scriptstyle 0.68$&$62.93 \pm \scriptstyle 1.59$ &$84.58 \pm \scriptstyle 0.21$ &$\underline{84.32 \pm \scriptstyle 0.12}$ & $\underline{92.38 \pm \scriptstyle 0.20}$& $\mathbf{87.16 \pm \scriptstyle 0.11}$& $\underline{4.2}$ \\ 
                                &\textsc{DyFSS} \cite{zhu2024every}  & $\underline{ 83.21 \pm \scriptstyle 0.32}$ & $\underline{72.82 \pm \scriptstyle 0.23}$ & $\underline{84.77 \pm \scriptstyle0.35} $& $84.13 \pm \scriptstyle0.05 $ & $90.88 \pm \scriptstyle0.08 $ & $79.28 \pm \scriptstyle0.08 $ & $4.2$  \\
                                &\textsc{WAS} \cite{fan2024decoupling}  & $81.76 \pm \scriptstyle 1.17$ & $71.37 \pm \scriptstyle 1.37$ & $79.60 \pm \scriptstyle 0.43$& $83.15 \pm \scriptstyle0.95 $ & $91.72 \pm \scriptstyle0.20 $ & $85.20 \pm \scriptstyle 0.53$ & $4.8$ \\
                                &\textsc{GraphTCM} \cite{fang2024exploring}  & $81.68 \pm \scriptstyle 0.25$ & $72.46 \pm \scriptstyle0.10 $ & $77.10 \pm \scriptstyle0.70 $& $83.52 \pm \scriptstyle 0.24$ & $92.06 \pm \scriptstyle 0.19$ & $84.95 \pm \scriptstyle0.34 $ & $5.2$ \\ \hline
\multirow{5}{*}{\textsc{Multi-Scale Learning }}
                                 & \textsc{ANEMONE} \cite{jin2021anemone}
                                 & $79.78 \pm \scriptstyle 0.42$
                                 & $64.79 \pm \scriptstyle 0.59$
                                 & $83.83 \pm \scriptstyle 0.65$
                                 & $81.82 \pm \scriptstyle 0.17$
                                 & $90.69 \pm \scriptstyle 0.09$
                                 & $86.15 \pm \scriptstyle 0.23$
                                 & $7.2$ \\
                                
                                 & \textsc{MERIT} \cite{ijcai2021p204}
                                 & $81. 58 \pm \scriptstyle 0.86$
                                 & $69.07 \pm \scriptstyle 0.14$
                                 & $83.24 \pm \scriptstyle 0.29$
                                 & $82.97 \pm \scriptstyle 0.33$
                                 & $90.75 \pm \scriptstyle 0.18$
                                 & $83.86 \pm \scriptstyle 0.16$
                                 & $6.2$ \\
                                
                                 & \textsc{MSSGCL} \cite{ijcai2023p246}
                                 & $80.05 \pm \scriptstyle 0.61$
                                 & $67.78 \pm \scriptstyle 0.25$
                                 & $81.82 \pm \scriptstyle 0.39$
                                 & $80.86 \pm \scriptstyle 0.41$
                                 & $89.32 \pm \scriptstyle 0.26$
                                 & $80.45 \pm \scriptstyle 0.26$
                                 & $8.3$ \\
                                
                                 & \textsc{LMGTA} \cite{li2025anomaly}
                                 & $78.31 \pm \scriptstyle 0.52$
                                 & $69.85 \pm \scriptstyle 0.37$
                                 & $81.77 \pm \scriptstyle 0.31$
                                 & $82.08 \pm \scriptstyle 0.22$
                                 & $89.06 \pm \scriptstyle 0.14$
                                 & $80.53 \pm \scriptstyle 0.21$
                                 & $8.3$ \\
                                & \textsc{LSW-ML-GSSL} &$\mathbf{84.87 \pm \scriptstyle 0.75}$ & $\mathbf{73.43 \pm \scriptstyle 0.45}$&$\mathbf{85.46 \pm \scriptstyle 0.13}$ &$\mathbf{84.47 \pm \scriptstyle 0.21}$ & $\mathbf{92.80 \pm \scriptstyle 0.45}$& $\underline{86.75 \pm \scriptstyle 0.35}$&$\mathbf{1.2}$ \\ \hline

\rowcolor{gray!25} \multicolumn{9}{c}{\textsc{Node Clustering} (NMI/ARI)} \\ \hline

\multirow{5}{*}{\textsc{Multi-Task Learning}} &  \textsc{AutoSSL} \cite{jin2021automated}  &$48.28/37.58$ & $43.75/44.25$& $27.93/25.48$ & $48.23/45.12$&$20.88/12.34$ &$26.62/10.75$ & $6.8$ \\
                               
                                &\textsc{ParetoGNN} \cite{ju2022multi}  &$40.15/31.08$ &$14.85/13.25$ &$29.58/35.63$ & $50.32/46.23$& $55.33/44.76$& $\underline{51.75/33.99}$& $5.7$ \\ 
                                &\textsc{DyFSS} \cite{zhu2024every}  &  $ \underline{54.17/49.19}$  & $43.60/44.60 $ & $ 33.22/40.56 $& $ 50.67/48.02 $ & $ \underline{57.40/42.14} $ & $ 45.46/32.00 $ & $\underline{3.2}$ \\
                                &\textsc{WAS} \cite{fan2024decoupling} & $52.42/45.99 $ & $ 36.76/37.38 $ & $ \underline{34.31/39.09} $& $50.07/45.85  $ & $ 54.23/40.87 $ & $ 44.25/31.42 $ & $4.8$ \\
                                &\textsc{GraphTCM} \cite{fang2024exploring}  & $ 52.15/46.41 $ & $ \underline{43.92/44.45}$ & $ 28.20/26.21$& $ \underline{51.12/48.32} $ & $55.35/36.34 $ & $ 46.34/28.82$ & $3.3$ \\ \hline
\multirow{5}{*}{\textsc{Multi-Scale Learning}}
                                 & \textsc{ANEMONE} \cite{jin2021anemone}
                                 & $46.40/38.33$ & $33.91/31.37$ & $29.55/30.02$
                                 & $24.94/18.07$ & $46.28/26.15$ & $41.22/21.25$ & $7.7$ \\
                                
                                 & \textsc{MERIT} \cite{ijcai2021p204}
                                 & $51.02/41.25$ & $42.91/43.84$ & $27.84/25.45$
                                 & $27.25/15.87$ & $54.95/43.24$ & $49.62/29.20$ & $5.5$ \\
                                
                                 & \textsc{MSSGCL} \cite{ijcai2023p246}
                                 & $41.35/22.84$ & $25.02/22.46$ & $20.24/19.59$
                                 & $24.32/17.13$ & $39.07/22.34$ & $33.66/18.42$ & $9.3$ \\
                                
                                 & \textsc{LMGTA} \cite{li2025anomaly}
                                 & $46.41/37.75$ & $38.51/35.25$ & $24.77/24.56$
                                 & $26.98/22.61$ & $48.64/32.02$ & $41.55/23.74$ & $7.2$ \\
                                &\textsc{LSW-ML-GSSL} &$\mathbf{61.66/58.00}$ & $\mathbf{45.22/46.22}$& $\mathbf{35.34/42.56}$& $\mathbf{51.39/52.97}$ &$\mathbf{57.68/46.55}$ & $\mathbf{54.47/52.73}$&$\mathbf{1.0}$\\ \hline

\rowcolor{gray!25} \multicolumn{9}{c}{\textsc{Link Prediction} (ROC-AUC)} \\ \hline

\multirow{5}{*}{\textsc{Multi-Task Learning}} & \textsc{AutoSSL} \cite{jin2021automated} &$85.25 \pm \scriptstyle 0.87$ & $91.25 \pm \scriptstyle 0.67$& $92.21 \pm \scriptstyle 0.18$ &$93.04 \pm \scriptstyle 0.46$ &$94.46 \pm \scriptstyle0.09$ & $91.21 \pm \scriptstyle 0.15
$& $6.8$\\
                                &\textsc{ParetoGNN} \cite{ju2022multi}  & $54.89 \pm \scriptstyle 3.32$ & $58.78 \pm \scriptstyle 1.78$&$94.16 \pm \scriptstyle 0.14$ &$93.56 \pm \scriptstyle 0.89$ & $\underline{95.04 \pm \scriptstyle 0.05}$&$\underline{93.51 \pm \scriptstyle 0.03}$ & $5.8$ \\ 
                                &\textsc{DyFSS} \cite{zhu2024every}  & $92.55 \pm \scriptstyle 0.52 $ & $94.10 \pm \scriptstyle0.10 $ & $\underline{95.53 \pm \scriptstyle 0.18} $& $95.63 \pm \scriptstyle0.23 $ & $ 94.24\pm \scriptstyle0.07 $ & $92.75 \pm \scriptstyle 0.08 \scriptstyle $ & $\underline{3.3}$ \\
                                &\textsc{WAS} \cite{fan2024decoupling} & $ 92.43\pm \scriptstyle0.07 $ &92.43 $ \pm \scriptstyle0.16 $ & $89.93 \pm \scriptstyle0.13 $& $ 93.78\pm \scriptstyle0.16 $ & $93.86 \pm \scriptstyle0.06 $ & $92.45 \pm \scriptstyle0.19 $ & $5.7$ \\
                                &\textsc{GraphTCM} \cite{fang2024exploring} & $ 90.91\pm \scriptstyle0.37 $ & $\underline{94.36 \pm \scriptstyle 0.26} $ & $ 93.55 \pm \scriptstyle 0.48$& $ 95.32\pm \scriptstyle0.16 $ & $93.51 \pm \scriptstyle 0.21$ & $ 92.21\pm \scriptstyle0.20 $ & $5.0$ \\
                                 \hline
\multirow{5}{*}{\textsc{Multi-Scale Learning}}
                                 & \textsc{ANEMONE} \cite{jin2021anemone}
                                 & $92.76 \pm \scriptstyle 0.08$
                                 & $93.12 \pm \scriptstyle 0.41$
                                 & $94.58 \pm \scriptstyle 0.21$
                                 & $93.52 \pm \scriptstyle 0.15$
                                 & $89.25 \pm \scriptstyle 0.11$
                                 & $93.19 \pm \scriptstyle 0.06$
                                 & $5.0$\\
                                
                                 & \textsc{MERIT} \cite{ijcai2021p204}
                                 & $\underline{92.97 \pm \scriptstyle 0.04}$
                                 & $93.45 \pm \scriptstyle 0.13$
                                 & $94.60 \pm \scriptstyle 0.27$
                                 & $\underline{95.85 \pm \scriptstyle 0.19}$
                                 & $91.34 \pm \scriptstyle 0.20$
                                 & $90.07 \pm \scriptstyle 0.14$
                                 & $4.3$ \\
                                
                                 & \textsc{MSSGCL} \cite{ijcai2023p246}
                                 & $84.04 \pm \scriptstyle 0.15$
                                 & $89.84 \pm \scriptstyle 0.50$
                                 & $85.17 \pm \scriptstyle 0.19$
                                 & $84.55 \pm \scriptstyle 0.55$
                                 & $82.18 \pm \scriptstyle 0.38$
                                 & $80.02 \pm \scriptstyle 0.24$
                                 & $9.7$\\

                                 & \textsc{LMGTA} \cite{li2025anomaly}
                                 & $86.56 \pm \scriptstyle 0.09$
                                 & $91.84 \pm \scriptstyle 0.26$
                                 & $87.79 \pm \scriptstyle 0.06$
                                 & $89.30 \pm \scriptstyle 0.53$
                                 & $88.85 \pm \scriptstyle 0.23$
                                 & $88.52 \pm \scriptstyle 0.44$
                                 & $8.3$ \\

                                &\textsc{LSW-ML-GSSL} & $\mathbf{93.64 \pm \scriptstyle 0.66}$ &$\mathbf{94.76 \pm \scriptstyle 0.63}$ &$\mathbf{96.84 \pm \scriptstyle 0.26}$ &$\mathbf{96.11 \pm \scriptstyle 0.22}$ &$\mathbf{95.22 \pm \scriptstyle 0.23}$ &$\mathbf{94.34 \pm \scriptstyle 0.15}$ &$\mathbf{1.0}$ \\ \hline

\bottomrule
\end{tabular}}
\end{table*}

\begin{table*}
\centering

\caption{The effect of the self-weighting mechanism compared to its linear counterparts.}
\label{linear_vs_self_weighted}
\scalebox{0.68}{\begin{tabular}{l|cccccc}
\toprule
\toprule
 Method & Cora & CiteSeer & Pubmed & DBLP & Photo & Computers  \\ 
\midrule

\rowcolor{gray!25} \multicolumn{7}{c}{\textsc{Node Classification} (Accuracy)} \\ \hline

            \textsc{Best-SL-GSSL} &$\underline{84.35 \pm \scriptstyle 0.48}$ &$\underline{71.71 \pm \scriptstyle 0.77}$ & $\underline{85.35 \pm \scriptstyle 0.20}$&$\underline{84.42 \pm \scriptstyle 0.23}$ &$\underline{92.78 \pm \scriptstyle 0.31}$ & $\mathbf{87.69 \pm \scriptstyle 0.13}$\\
            \textsc{L-ML-GSSL} &$83.75 \pm \scriptstyle 0.78$ &$71.47 \pm \scriptstyle 0.78$ &$84.78 \pm \scriptstyle 0.56$ &$84.16 \pm  \scriptstyle 0.23$ &$92.51 \pm \scriptstyle 0.48$ & $85.35 \pm \scriptstyle 0.35$ \\
            \textsc{LW-ML-GSSL} &$83.89 \pm \scriptstyle 0.35$ & $71.23 \pm \scriptstyle 0.75$ & $84.14 \pm \scriptstyle 0.36$ & $84.25 \pm \scriptstyle 0.20$ & $91.68 \pm \scriptstyle 0.15$ & $85.12 \pm \scriptstyle 0.08$\\
            \textsc{LSW-ML-GSSL} &$\mathbf{84.87 \pm \scriptstyle 0.75}$ & $\mathbf{73.43 \pm \scriptstyle 0.45}$&$\mathbf{85.46 \pm \scriptstyle 0.13}$ &$\mathbf{84.47 \pm \scriptstyle 0.21}$ & $\mathbf{92.80 \pm \scriptstyle 0.45}$& $\underline{86.75 \pm \scriptstyle 0.35}$ \\ \hline

\rowcolor{gray!25} \multicolumn{7}{c}{\textsc{Node Clustering} (NMI/ARI)} \\ \hline

            \textsc{Best-SL-GSSL} &$57.50/52.02$ & $\mathbf{45.64/47.05}$&$\underline{35.23/42.25}$ &$\underline{51.27/48.56}$ & $\underline{59.15/43.08}$&$\underline{54.23/52.53}$\\
            \textsc{L-ML-GSSL} & $\underline{58.05/52.05}$ & $44.72/45.92$& $33.25/41.32$& $50.29/47.86$& $54.12/40.53$& $54.13/52.33$\\
            \textsc{LW-ML-GSSL} &$57.89/52.46$ & $42.33/43.52$ & $34.27/41.51$ & $50.78/48.15$ & $52.88/38.95$ & $53.74/52.07$\\
            \textsc{LSW-ML-GSSL} &$\mathbf{61.66/58.00}$ & $\underline{45.22/46.22}$& $\mathbf{35.34/42.56}$& $\mathbf{51.39/52.97}$ &$\mathbf{57.68/46.55}$ & $\mathbf{54.47/52.73}$\\ \hline

\rowcolor{gray!25} \multicolumn{7}{c}{\textsc{Link Prediction} (ROC-AUC)} \\ \hline

            \textsc{Best-SL-GSSL} & $\underline{93.31 \pm \scriptstyle 0.21}$ &$\underline{94.34 \pm \scriptstyle 0.84}$ &$\underline{96.33 \pm \scriptstyle 0.39}$ &$\underline{95.84 \pm \scriptstyle 0.32}$ & $\underline{95.13 \pm \scriptstyle 0.34}$&$94.09 \pm \scriptstyle 0.10$  \\
            \textsc{L-ML-GSSL} &$89.05 \pm \scriptstyle 0.91$ & $88.45 \pm \scriptstyle 0.71$ & $93.25 \pm \scriptstyle0.12$&$92.77 \pm \scriptstyle 0.12$ & $90.83 \pm \scriptstyle 0.45$& $\underline{94.24 \pm \scriptstyle0.10}$\\
            \textsc{LW-ML-GSSL} & $89.82 \pm \scriptstyle 0.96$ & $89.00 \pm \scriptstyle 0.40$ & $93.85 \pm \scriptstyle 0.09$ & $93.90 \pm \scriptstyle 0.25$ & $92.26 \pm \scriptstyle 0.47$ & $94.05 \pm \scriptstyle 0.08$\\
            \textsc{LSW-ML-GSSL} & $\mathbf{93.64 \pm \scriptstyle 0.66}$ &$\mathbf{94.76 \pm \scriptstyle 0.63}$ &$\mathbf{96.84 \pm \scriptstyle 0.26}$ &$\mathbf{96.11 \pm \scriptstyle 0.22}$ &$\mathbf{95.22 \pm \scriptstyle 0.23}$ &$\mathbf{94.34 \pm \scriptstyle 0.15}$\\ \hline

\bottomrule
\end{tabular}}
\end{table*}

\begin{table*}
\centering
\caption{Comparison of multi-loss weighting methods and LSW-ML-GSSL.}
\label{lsw_vs_other_weighting_methods}
\scalebox{0.68}{\begin{tabular}{l|cccccc}
\toprule
\toprule
 Method & Cora & CiteSeer & Pubmed & DBLP & Photo & Computers  \\ 
\midrule

\rowcolor{gray!25} \multicolumn{7}{c}{\textsc{Node Classification} (Accuracy)} \\ \hline

            \textsc{Uncertainty} \cite{kendall2018multi} & $83.94 \pm \scriptstyle 0.27$ & $70.49 \pm \scriptstyle 0.61$ & $82.60 \pm \scriptstyle 0.32$ & $84.07 \pm \scriptstyle 0.14$ & $91.81 \pm \scriptstyle 0.33$ & $83.47 \pm \scriptstyle 0.29$  \\
            \textsc{GradNorm} \cite{chen2018gradnorm} & $83.01 \pm \scriptstyle 0.82$& $70.52 \pm \scriptstyle 0.42$ & $\underline{83.10 \pm \scriptstyle 0.22}$ & $\underline{84.31 \pm \scriptstyle 0.13}$ & $91.67 \pm \scriptstyle 0.29$& $83.52 \pm \scriptstyle 0.38$  \\
            \textsc{PCGrad} \cite{yu2020gradient} & $83.83 \pm \scriptstyle 0.34$ & $70.55 \pm \scriptstyle 0.75$ & $82.12 \pm \scriptstyle 0.27$ & $84.04 \pm \scriptstyle 0.17$ & $91.87 \pm \scriptstyle 0.10$ & $83.56 \pm \scriptstyle 0.32$  \\
            \textsc{CAGrad} \cite{liu2021conflict} & $84.42 \pm \scriptstyle 0.41$ & $70.31 \pm \scriptstyle 1.40$ & $82.12 \pm \scriptstyle 0.11$ & $84.17 \pm \scriptstyle 0.19$ & $91.71 \pm \scriptstyle 0.27$ & $84.04 \pm \scriptstyle 0.74$ \\
            \textsc{AdaTask} \cite{yang2023adatask} & $84.35 \pm \scriptstyle 0.23$ & $70.56 \pm \scriptstyle 0.30$ & $81.99 \pm \scriptstyle 0.16$ & $83.99 \pm \scriptstyle 0.16$ & $91.66 \pm \scriptstyle 0.17$ & $\underline{84.17 \pm \scriptstyle 0.35}$  \\
            \textsc{IGBv1} \cite{dai2023improvable} & $83.95 \pm \scriptstyle 0.41$ & $\underline{70.87 \pm \scriptstyle 0.32}$ & $82.12 \pm \scriptstyle 0.26$ & $84.00 \pm \scriptstyle 0.17$ & $91.84 \pm \scriptstyle 0.29$ & $84.15 \pm \scriptstyle 0.43$  \\
            \textsc{AUTAUT} \cite{zhongautomatic} & $\underline{84.70 \pm \scriptstyle 0.40}$ & $70.11 \pm \scriptstyle 0.96$ & $81.92 \pm \scriptstyle 0.40$ & $83.95 \pm \scriptstyle 0.19$ & $\underline{91.87 \pm \scriptstyle 0.44}$ & $84.04 \pm \scriptstyle 0.74$  \\
            \textsc{LSW-ML-GSSL} &$\mathbf{84.87 \pm \scriptstyle 0.75}$ & $\mathbf{73.43 \pm \scriptstyle 0.45}$&$\mathbf{85.46 \pm \scriptstyle 0.13}$ &$\mathbf{84.47 \pm \scriptstyle 0.21}$ & $\mathbf{92.80 \pm \scriptstyle 0.45}$& $\mathbf{86.75 \pm \scriptstyle 0.35}$ \\ \hline

\rowcolor{gray!25} \multicolumn{7}{c}{\textsc{Node Clustering} (NMI/ARI)} \\ \hline

            \textsc{Uncertainty} \cite{kendall2018multi} & $57.88/52.83$ & $42.63/44.48$ & $33.95/31.83$ & $\underline{50.81/51.38}$ & $55.16/37.55$ & $39.58/27.87$  \\
            \textsc{GradNorm} \cite{chen2018gradnorm} & $\underline{58.61/53.61}$ & $42.95/43.94$ & $34.29/32.19$ & $50.15/52.75$ & $52.94/35.41$ & $40.67/28.45$  \\
            \textsc{PCGrad} \cite{yu2020gradient} & $58.02/53.63$ & $42.45/43.42$ & $33.91/31.70$ & $38.52/41.26$ & $52.75/34.16$ & $33.45/23.49$   \\
            \textsc{CAGrad} \cite{liu2021conflict} & $58.31/52.57$ & $\underline{43.38/44.63}$ & $34.23/33.36$ & $50.60/51.62$ & $56.75/40.12$ & $41.21/29.65$  \\
            \textsc{AdaTask} \cite{yang2023adatask} & $57.35/52.43$ & $42.27/43.44$ & $31.31/27.19$ & $50.02/49.91$ & $47.77/29.63$ & $23.49/15.21$  \\
            \textsc{IGBv1} \cite{dai2023improvable} & $58.19/53.75$ & $42.46/43.74$ & $\underline{34.55/33.94}$ & $50.78/52.05$ & $\underline{56.95/42.23}$ & $\underline{41.56/28.96}$  \\
            \textsc{AUTAUT} \cite{zhongautomatic} & $58.06/52.86$ & $43.04/44.63$ & $33.22/30.94$ & $50.52/51.57$ & $55.02/38.84$ & $40.72/28.88$  \\
            \textsc{LSW-ML-GSSL} &$\mathbf{61.66/58.00}$ & $\mathbf{45.22/46.22}$& $\mathbf{35.34/42.56}$& $\mathbf{51.39/52.97}$ &$\mathbf{57.68/46.55}$ & $\mathbf{54.47/52.73}$\\ \hline

\rowcolor{gray!25} \multicolumn{7}{c}{\textsc{Link Prediction} (ROC-AUC)} \\ \hline

            \textsc{Uncertainty} \cite{kendall2018multi} & $91.14 \pm \scriptstyle 0.13$ & $93.48 \pm \scriptstyle 0.53$ & $94.41 \pm \scriptstyle 0.14$ & $94.62 \pm \scriptstyle 0.07$ & $94.83 \pm \scriptstyle 0.03$ & $93.03 \pm \scriptstyle 0.11$  \\
            \textsc{GradNorm} \cite{chen2018gradnorm} & $\underline{91.37 \pm \scriptstyle 0.32}$ & $\underline{94.28 \pm \scriptstyle 0.66}$ & $\underline{94.65 \pm \scriptstyle 0.33}$ & $94.99 \pm \scriptstyle 0.16$ & $94.53 \pm \scriptstyle 0.18$ & $92.84 \pm \scriptstyle 0.26$  \\
            \textsc{PCGrad} \cite{yu2020gradient} & $90.07 \pm \scriptstyle 0.62$ & $92.94 \pm \scriptstyle 0.29$ & $92.61 \pm \scriptstyle 0.45$ & $94.39 \pm \scriptstyle 0.17$ & $93.89 \pm \scriptstyle 0.14$ & $93.08 \pm \scriptstyle 0.61$  \\
            \textsc{CAGrad} \cite{liu2021conflict} & $91.30 \pm \scriptstyle 0.98$ & $93.61 \pm \scriptstyle 0.86$ & $94.50 \pm \scriptstyle 0.49$ & $94.83 \pm \scriptstyle 0.37$ & $94.88 \pm \scriptstyle 0.32$ & $93.02 \pm \scriptstyle 0.29$  \\
            \textsc{AdaTask} \cite{yang2023adatask} & $90.15 \pm \scriptstyle 0.69$ & $93.53 \pm \scriptstyle 0.41$ & $93.55 \pm \scriptstyle 0.39$ & $94.14 \pm \scriptstyle 0.47$ & $89.33 \pm \scriptstyle 0.25$ & $89.39 \pm \scriptstyle 0.57$  \\
            \textsc{IGBv1} \cite{dai2023improvable} & $91.12 \pm \scriptstyle 0.48 $ & $93.65 \pm \scriptstyle 0.68$ & $94.59 \pm \scriptstyle 0.18$ & $95.17 \pm \scriptstyle 0.53$ & $\underline{94.92 \pm \scriptstyle 0.11}$ & $\underline{93.25 \pm \scriptstyle 0.30}$  \\
            \textsc{AUTAUT} \cite{zhongautomatic} & $91.33 \pm \scriptstyle 0.65$ & $94.13 \pm \scriptstyle 0.88$ & $94.51 \pm \scriptstyle 0.15$ & $\underline{95.42 \pm \scriptstyle 0.28}$ & $94.37 \pm \scriptstyle 0.09$ & $92.27 \pm \scriptstyle 0.16$  \\
            \textsc{LSW-ML-GSSL} & $\mathbf{93.64 \pm \scriptstyle 0.66}$ &$\mathbf{94.76 \pm \scriptstyle 0.63}$ &$\mathbf{96.84 \pm \scriptstyle 0.26}$ &$\mathbf{96.11 \pm \scriptstyle 0.22}$ &$\mathbf{95.22 \pm \scriptstyle 0.23}$ &$\mathbf{94.34 \pm \scriptstyle 0.15}$\\ \hline

\bottomrule
\end{tabular}}
\end{table*}

\begin{table*}
\caption{The impact of each abstraction level on the performance of LSW-ML-GSSL.}
  \begin{center}
  \begin{small}
  \scalebox{0.68}{
  \begin{tabular}{|c c c c | c c c c | c c c c | c c c c|}
    \hline
    \multicolumn{4}{|c|}{Method} & \multicolumn{4}{c|}{Cora} & \multicolumn{4}{c|}{CiteSeer} &  \multicolumn{4}{c|}{DBLP} \\
    \cline{1-16}
    Node & Proximity & Cluster & Graph & ACC & NMI & ARI & AUC & ACC & NMI & ARI & AUC & ACC & NMI & ARI & AUC \\ \hline
    \ding{51} & \ding{56} & \ding{56} & \ding{56} & $83.77$ & $56.74$ &$52.61$  & $92.70$ &$69.98$  & $41.35$ & $41.09$ & $92.75$ &$83.92$ &$50.88$  & $51.75$ & $94.10$ \\
    \ding{56} & \ding{51} & \ding{56} & \ding{56} & $83.65$ & $57.47$ & $52.45$ & $92.16$ & $70.65$ & $40.52$ & $40.37$ & $93.04$ & $83.61$& $50.90$ & $52.29$& $94.76$ \\
    \ding{56} & \ding{56} & \ding{51} & \ding{56} & $79.37$ & $46.19$ & $42.44$ & $86.51$ & $64.91$ & $41.00$ &$39.21$  & $89.61$ &$81.23$ & $40.12$ & $43.81$ & $91.08$\\
    \ding{56} & \ding{56} & \ding{56} & \ding{51} & $80.49$ & $51.00$ & $46.06$ & $93.12$ & $70.66$ & $40.40$ & $41.48$ & $92.18$ & $82.70$ & $50.70$ & $51.62$ & $92.24$ \\
    \ding{51} & \ding{51} & \ding{56} & \ding{56} & $84.24$ & $59.31$ & $54.88$ & $92.79$ & $71.45$ & $42.68$ & $43.32$ & $\underline{94.21}$ & $83.45$ & $50.45$ & $51.65$ & $94.77$ \\
    \ding{51} & \ding{56} & \ding{51} & \ding{56} & $84.12$ & $55.05$ & $51.17$ & $92.34$ & $70.58$ & $41.85$ & $41.92$ & $92.92$ & $83.96$ & $51.02$ & $52.21$ & $94.96$\\
    \ding{51} & \ding{56} & \ding{56} & \ding{51} & $83.56$ & $59.35$ & $54.87$ &  $91.20$ & $71.35$ & $43.73$ & $44.54$ & $92.84$ & $83.85$ & $50.87$ & $52.14$ & $95.04$ \\
    \ding{56} & \ding{51} & \ding{51} & \ding{56} & $83.63$ & $54.82$ & $48.88$ & $89.79$ & $70.98$ & $42.49$ & $44.64$ & $91.68$ & $81.97$ & $50.37$ & $51.67$  & $93.25$ \\
    \ding{56} & \ding{51} & \ding{56} & \ding{51} & $83.89$ & $56.82$ & $52.58$ & $89.08$ & $69.50$ & $43.47$ & $45.16$ & $91.84$ & $82.64$ & $50.74$ & $52.03$ & $93.77$ \\
    \ding{56} & \ding{56} & \ding{51} & \ding{51} & $82.97$ & $56.46$ & $52.82$ & $88.55$ & $71.55$  & $43.21$ & $43.70$ & $92.47$ & $82.02$ & $50.25$ & $51.98$ & $94.16$\\
    \ding{56} & \ding{51} & \ding{51} & \ding{51} & $81.09$ & $58.35$ & $56.66$ & $88.62$ &$70.65$  & $42.77$ & $43.10$ & $91.62$ &$84.18$  & $\underline{51.30}$ & $52.37$& $93.43$\\  
    \ding{51} & \ding{56} & \ding{51} & \ding{51}  & $\underline{84.73}$ & $\underline{59.93}$ &$\underline{57.27}$  & $91.62$ & $71.68$ & $43.18$ & $43.91$ & $91.58$ & $\underline{84.43}$ & $51.23$ &$\underline{52.49}$ &$93.28$ \\ 
    \ding{51} & \ding{51} & \ding{56} &  \ding{51} &$84.65$  & $58.52$ & $53.86$ &$\underline{93.24}$  & $\underline{71.81}$ &$\underline{44.47}$  & $\underline{45.44}$ & $92.65$ &$84.28$  & $50.27$ &$51.87$ & $\underline{95.07}$\\ 
    \ding{51} & \ding{51}  & \ding{51} & \ding{56} &$83.74$ & $57.49$ & $53.13$ &$91.91$  & $68.09$ &$37.44$  &$37.94$  & $93.61$ &$83.37$  & $50.43$ &$51.91$ &$94.98$ \\ 
    \ding{51} & \ding{51}  & \ding{51} & \ding{51} &$\mathbf{84.87}$  & $\mathbf{61.66}$ & $\mathbf{58.00}$  & $\mathbf{93.64}$ & $\mathbf{73.43}$ & $\mathbf{45.22}$ & $\mathbf{46.22}$ & $\mathbf{94.76}$ & $\mathbf{84.47}$ & $\mathbf{51.39}$ & $\mathbf{52.97}$& $\mathbf{96.11}$ \\
    \hline
  \end{tabular}}
  \end{small}
  \end{center}
  
  \label{Table:table_3}
\end{table*}

\textbf{Hyperparameters.} To ensure a fair comparison, the same GNN encoder architecture is employed across all baselines for each dataset. The GNN encoder is a graph convolutional network (GCN) that projects the input graph into a latent space of dimension 256. We use PReLU activation functions for the GNN encoder. Subsequently, a two-layer fully-connected projection head refines the embeddings. The ELU activation function is used for the projection head. We measure positive and negative similarity scores using a shifted cosine similarity ($(1+cos)/2$). The graph augmentation hyperparameters control the transformations applied to the graph during training. These augmentations include edge dropout for the two positive augmentations, which randomly remove edges from the graph with two different rates. 
Node feature dropout for the two positive augmentations introduces stochasticity by masking a fraction of the node features with two different probability rates.

We categorize the hyperparameters of our model into two types. The first type consists of constant hyperparameters that are independent of the processed dataset, as detailed in Table \ref{Table:hyperparameters_indep}. This category includes hyperparameters related to data augmentation, the GNN architecture, and the training process.
For example, the edge drop rates and feature drop rates for positive augmentations are fixed, as are the GNN hidden and latent dimensions, the learning rate, and the optimizer. The second category consists of three hyperparameters influenced by the input dataset: $m$, $\gamma$, and the number of epochs. We select fixed values for $m$ and $\gamma$ from the respective ranges $[0.10, \, 0.15, \, 0.20, \, 0.25, \, 0.30]$ and $[1.0, \, 1.5, \, 2.0, \, 2.5, \, 3.0]$. The number of epochs is determined based on the maximum validation accuracy. The hyperparameters that are influenced by the properties of the data are provided in Table \ref{Table:hyperparameters_dep}.

\textbf{Single-Task Results.} Table~\ref{table_1} summarizes the comparative performance of various single-task GSSL methods across multiple datasets and tasks. The results are categorized based on the abstraction levels of the methods—\textsc{Node}, \textsc{Proximity}, \textsc{Cluster}, and \textsc{Graph}. The evaluated tasks include node classification (Accuracy), node clustering (NMI/ARI), and link prediction (ROC-AUC). We report the average performance across all tasks to provide a comprehensive assessment of overall effectiveness. Our methods yield competitive results across diverse tasks and datasets and consistently outperform previous approaches in terms of average performance. In particular, \textsc{SL-GSSL} excels in node clustering and link prediction. 

It is clear that no single abstraction level consistently outperforms all others in every downstream task. Each abstraction level has its strengths and weaknesses.  
Its suitability varies depending on the task’s demands and inherent properties. In the task of node classification, our node-level approach shows superior performance on 4 out of the 6 evaluated datasets. This could be attributed to its ability to capture fine-grained patterns that are critical for node classification. By focusing on individual nodes, the model benefits from a detailed understanding of node-specific features. 
For link prediction, our proximity-level approach shows superior performance on most datasets. The superior performance of the proximity-level approach in link prediction tasks stems from its ability to effectively capture and utilize relational patterns and interactions between node pairs. Unlike node-level approaches that focus on individual nodes and their features, proximity-level models prioritize the structural and semantic relationships between nodes, which are essential for accurate link prediction. In addition, the results show that our node-level approach outperforms other methods in terms of clustering performance. However, the cluster-level methods have the potential to yield superior results when combined with a pretext task, which was not included in our experiments. Consequently, methods such as GMM-VGAE* and DGAE*, which strongly rely on proximity-level pretraining, exhibit lower performance in this context. Overall, the results reveal that none of the single-level methods has consistent generalizability across all tasks and datasets. This limitation highlights the need for multi-task GSSL. 

\textbf{Ablation on the Single-Level Unified Loss.} To assess how the single-level unified objective benefits from its design choices, we compare \textsc{Best-SL-GSSL} with three ablation variants that remove the margin, replace the exponential term in Eq.~(\ref{eq.loss_unified}) with a hinge form \cite{hoffer2015deep}, or substitute the loss with the standard InfoNCE objective \cite{zhu2020deep}.
\textsc{Best-SL-GSSL-No-Margin} sets $m{=}0$.
\textsc{Best-SL-GSSL-Hinge-Loss} replaces the exponential transformation with the hinge form $\max(0, s^{-}-s^{+}+m)$.
\textsc{Best-SL-GSSL-InfoNCE} replaces the unified loss with the standard InfoNCE objective over the same positive and negative similarity scores. As can be seen in Table \ref{sl_ablation}, \text{Best-SL-GSSL} is consistently the most effective and robust across datasets and tasks. Removing the margin consistently harms performance, especially for clustering, highlighting its role as a separation constraint that stabilizes optimization and improves the structure of the learned embedding space. Replacing the exponential term with a hinge penalty is generally more detrimental, especially for link prediction and clustering, suggesting that the exponential loss yields better optimization. The InfoNCE-based replacement yields lower performance than the unified objective in most settings, indicating that the proposed formulation is more closely matched to the single-level similarity optimization problem addressed in this work. Overall, both the margin and smooth exponential penalty are key to strong downstream generalization.      

\textbf{Multi-Task and Multi-Scale Results.} Table~\ref{table_2} illustrates the performance of our linear self-weighting multi-level GSSL (LSW-ML-GSSL) method on node classification, node clustering, and edge prediction, compared to several state-of-the-art multi-task and multi-scale GSSL approaches. Overall, LSW-ML-GSSL consistently achieves the best and most robust performance across datasets and tasks, highlighting its strong generalization beyond a single evaluation setting. In contrast to multi-task methods that can suffer from objective interference, and multi-scale methods that may not adequately reconcile granularity levels, our approach explicitly integrates node-, proximity-, cluster-, and graph-level signals while automatically balancing their contributions through linear self-weighting. This adaptive coordination prevents any single level from dominating the optimization and promotes representations that remain simultaneously discriminative. This further emphasizes the significance of the new decision boundary in enhancing optimization flexibility and achieving a more definitive convergence status.

\textbf{The Impact of the Proposed Weighting Mechanism.} As shown in Table \ref{linear_vs_self_weighted}, the proposed LSW-ML-GSSL consistently delivers superior performance compared to both its linear weighting counterparts (L-ML-GSSL and LW-ML-GSSL) and the leading single-level baseline (Best-SL-GSSL). Notably, neither L-ML-GSSL nor LW-ML-GSSL is able to surpass the best-performing single-level method. This indicates that simply applying a linear combination of similarities and dissimilarities across the four abstraction levels is insufficient to properly balance their contributions. As a result, the downstream performance of these linear strategies remains limited, often falling below that of Best-SL-GSSL. In contrast, the self-weighting mechanism adjusts the weights in a fine-grained, adaptive way based on how far each similarity and dissimilarity score is from their target values. This enables a more balanced and effective use of multi-level graph semantics, resulting in consistently superior performance.

\begin{figure}[t]
    \centering
    \includegraphics[width=\linewidth]{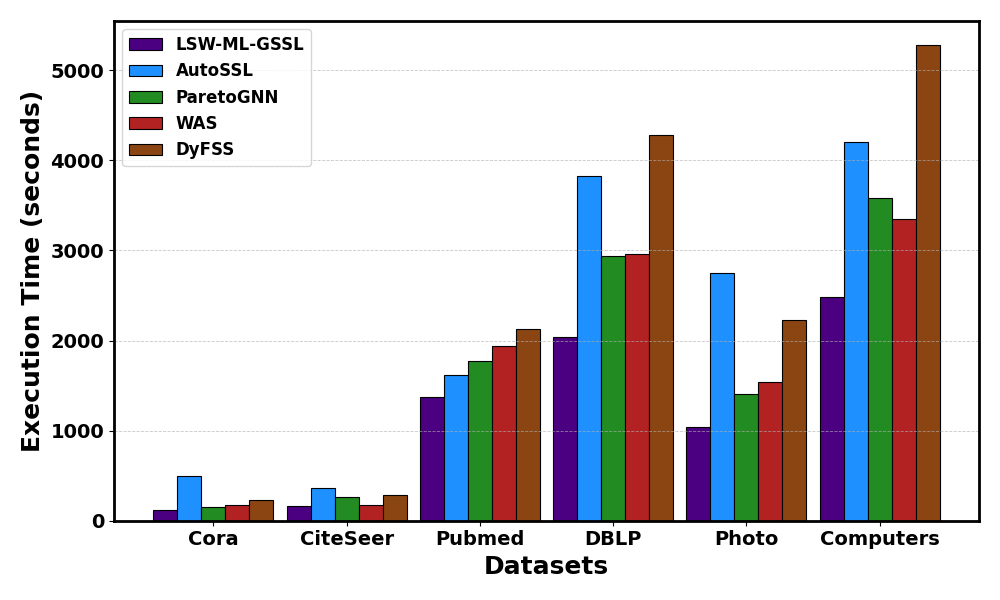} 
    \caption{Execution time in seconds of LSW-ML-GSSL and state-of-the-art multi-task GSSL methods.} 
    \label{fig:execution_time} 

   % \vspace{-4mm}
\end{figure}

\textbf{Comparison with Multi-Loss Weighting Baselines.} Table \ref{lsw_vs_other_weighting_methods} compares our LSW-ML-GSSL approach with seven multi-loss balancing baselines, namely Uncertainty weighting \cite{kendall2018multi}, GradNorm \cite{chen2018gradnorm}, PCGrad \cite{yu2020gradient}, CAGrad \cite{liu2021conflict}, AdaTask \cite{yang2023adatask}, IGBv1 \cite{dai2023improvable}, and AUTAUT \cite{zhongautomatic}. For these methods, the aggregated training loss is obtained by combining the corresponding single level losses through the respective weighting strategy. These approaches regulate optimization primarily at the loss or gradient level by adjusting task-level weights or alleviating gradient conflicts, and thus focus on learning a global compromise among objectives. In opposition, our linear self-weighting mechanism operates at a finer-grained level, weighting similarity and dissimilarity scores directly. Even with these advanced balancing schemes, LSW-ML-GSSL consistently ranks first across tasks and datasets. This suggests that loss reweighting or gradient conflict handling is less effective for multi-level GSSL than our linear self-weighting strategy, which better captures cross-level synergy and improves generalization.

\textbf{Ablation of Abstraction Levels in LSW-ML-GSSL.}
This experiment examines the impact of each abstraction level on the performance of LSW-ML-GSSL across all downstream tasks.  As shown in Table \ref{Table:table_3}, both node-level and graph-level contributions are fundamental to our approach. This implies that achieving effective learning requires the presence of extreme levels of granularity (i.e., node and graph). In addition, multi-level variants consistently surpass single-level ones, showing that each abstraction level contributes distinct and complementary information. It can further be observed that each abstraction level benefits different downstream tasks. Cluster and graph information yield the largest gains for clustering, proximity information is most influential for edge prediction, and the combination of node and graph information proves most effective for node classification. Finally, the full model achieves superior performance compared to all other variants. This highlights the complementary roles of all granularity levels and further illustrates how the self-weighting mechanism is essential for enabling effective synergy among these abstraction levels.

\begin{figure*}[t]
    \centering
    \begin{subfigure}{0.2\linewidth}
        \centering
        \includegraphics[width=\textwidth]{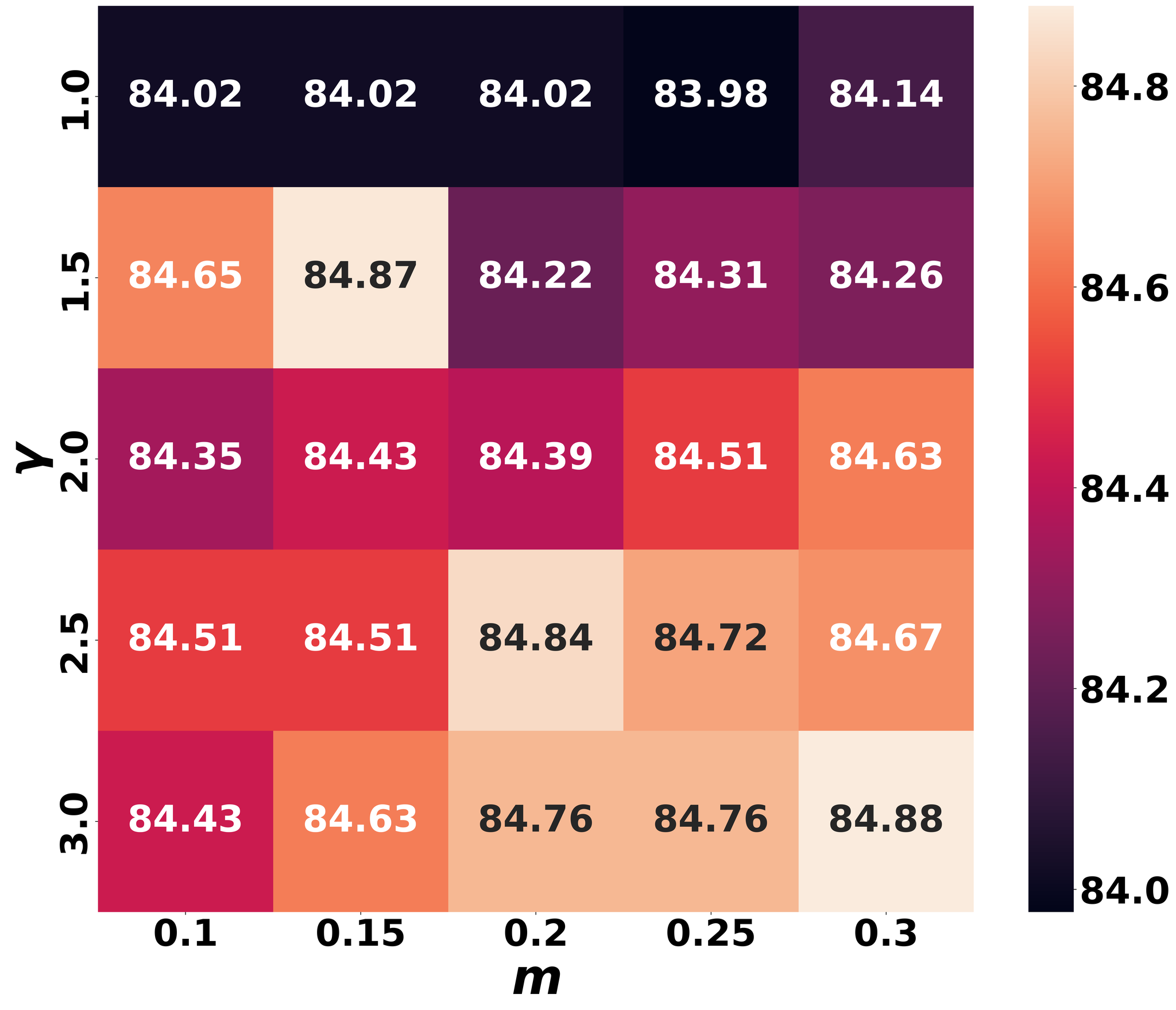}
        \caption{ACC Cora}
    \end{subfigure}
    \hfill
    \begin{subfigure}{0.2\linewidth}
        \centering
        \includegraphics[width=\textwidth]{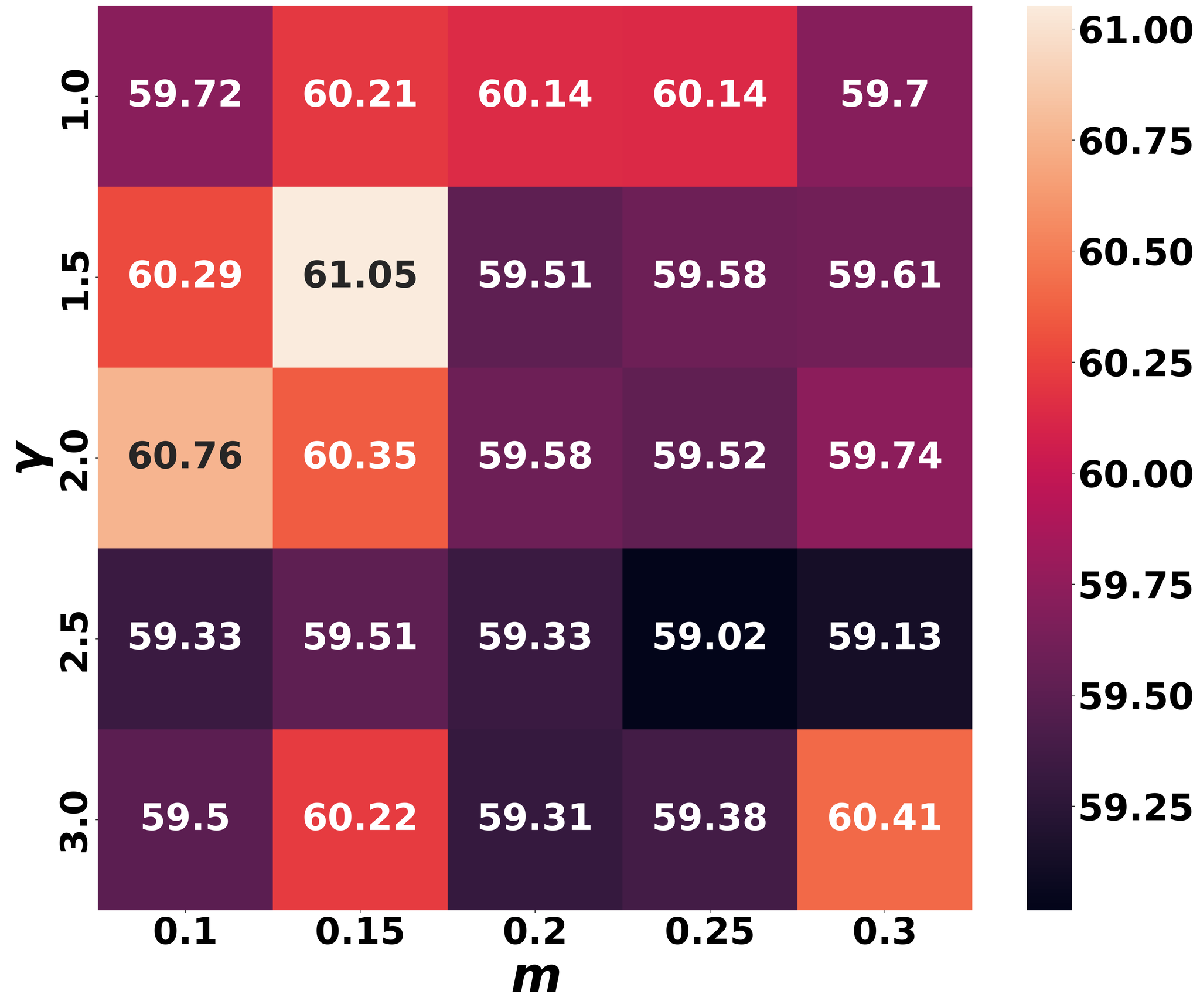}
        \caption{NMI Cora}
    \end{subfigure}
    \hfill
    \begin{subfigure}{0.2\linewidth}
        \centering
      \includegraphics[width=\textwidth]{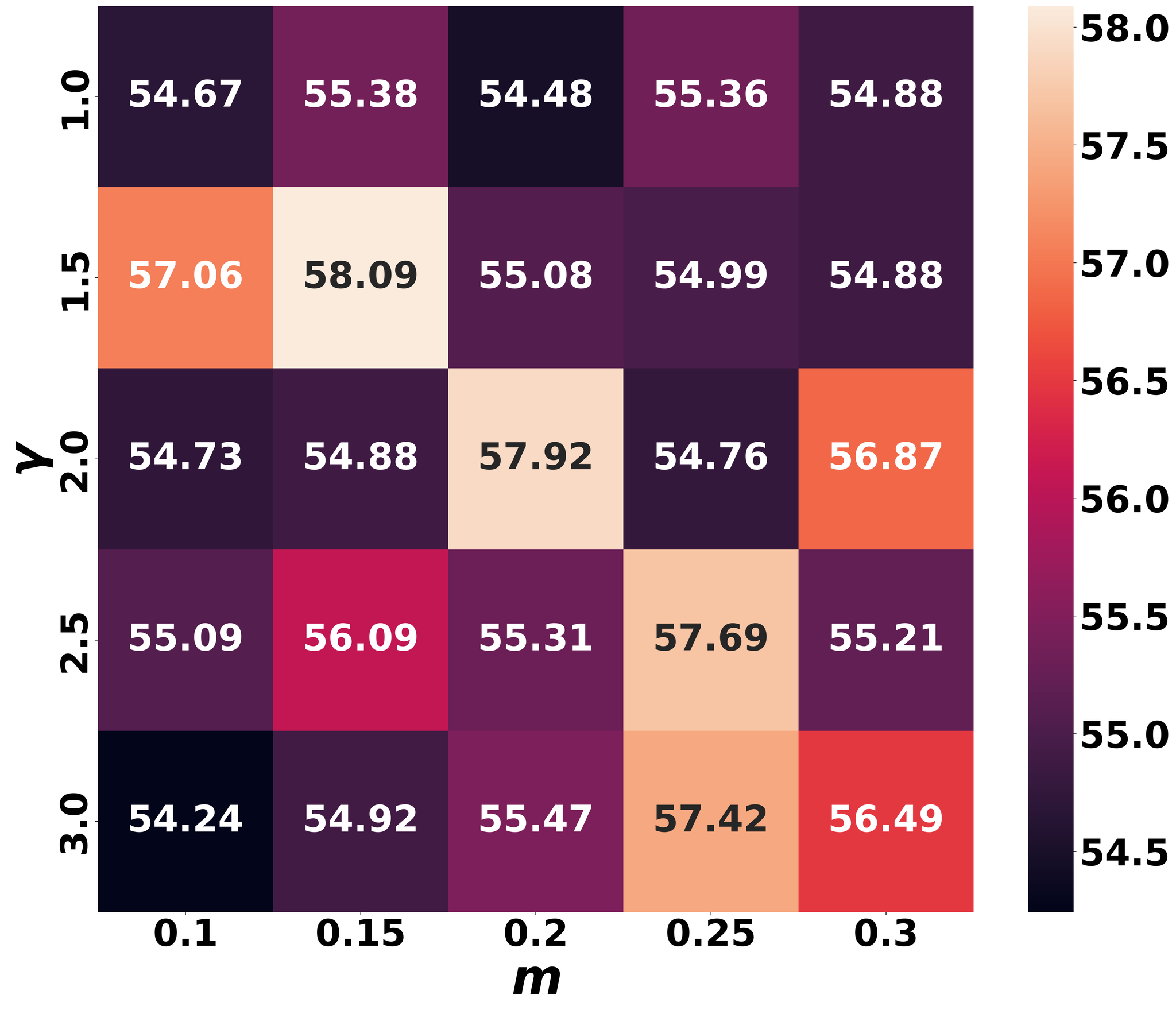}
        \caption{ARI Cora}
    \end{subfigure}
     \hfill
 \begin{subfigure}{0.2\linewidth}
        \centering
      \includegraphics[width=\textwidth]{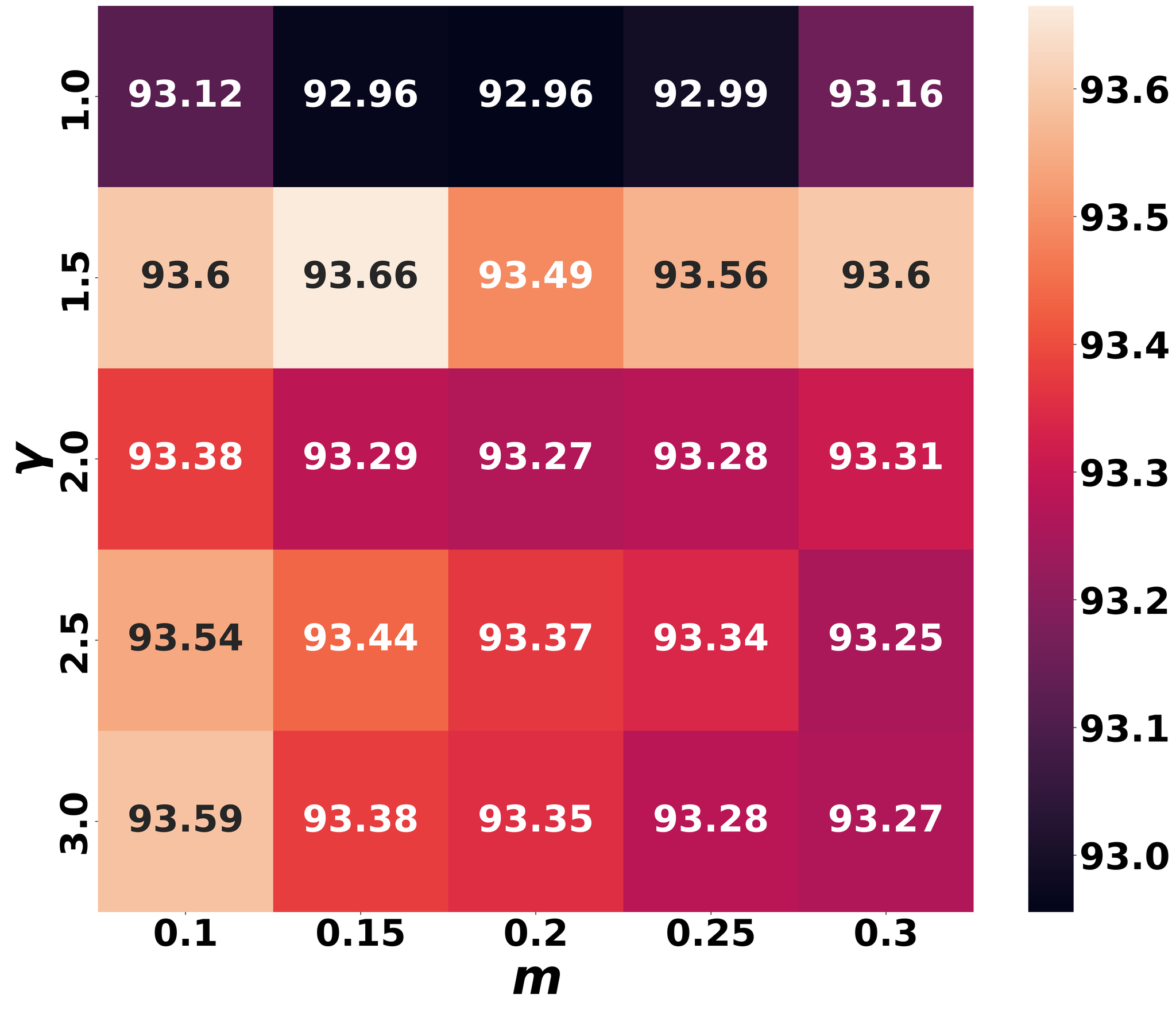}
        \caption{ROC-AUC Cora}
    \end{subfigure}   

    \vspace{0.5cm} 
    
    \begin{subfigure}{0.2\linewidth}
        \centering
        \includegraphics[width=\textwidth]{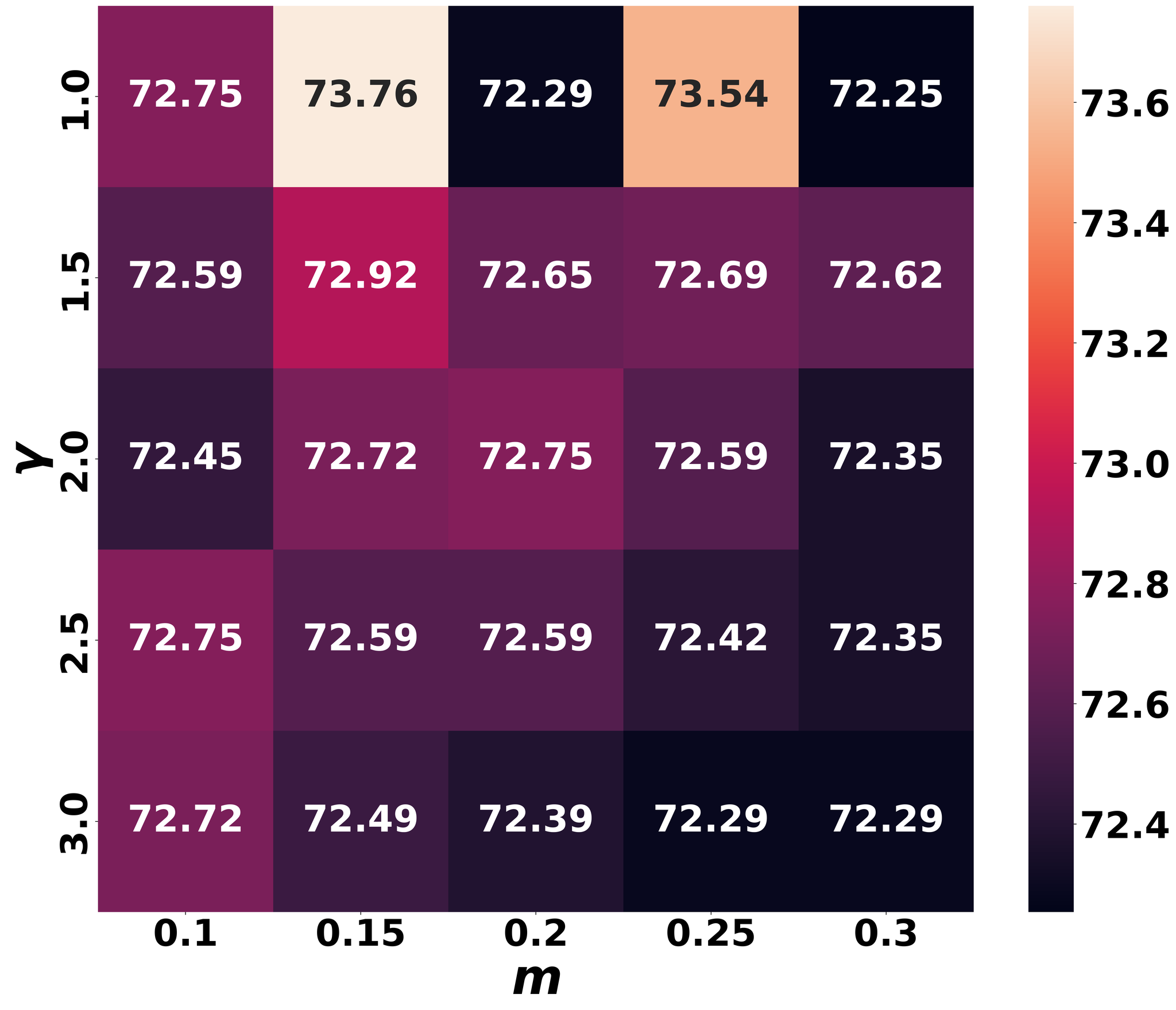}
        \caption{ACC CiteSeer}
    \end{subfigure}
    \hfill
    \begin{subfigure}{0.2\linewidth}
        \centering
        \includegraphics[width=\textwidth]{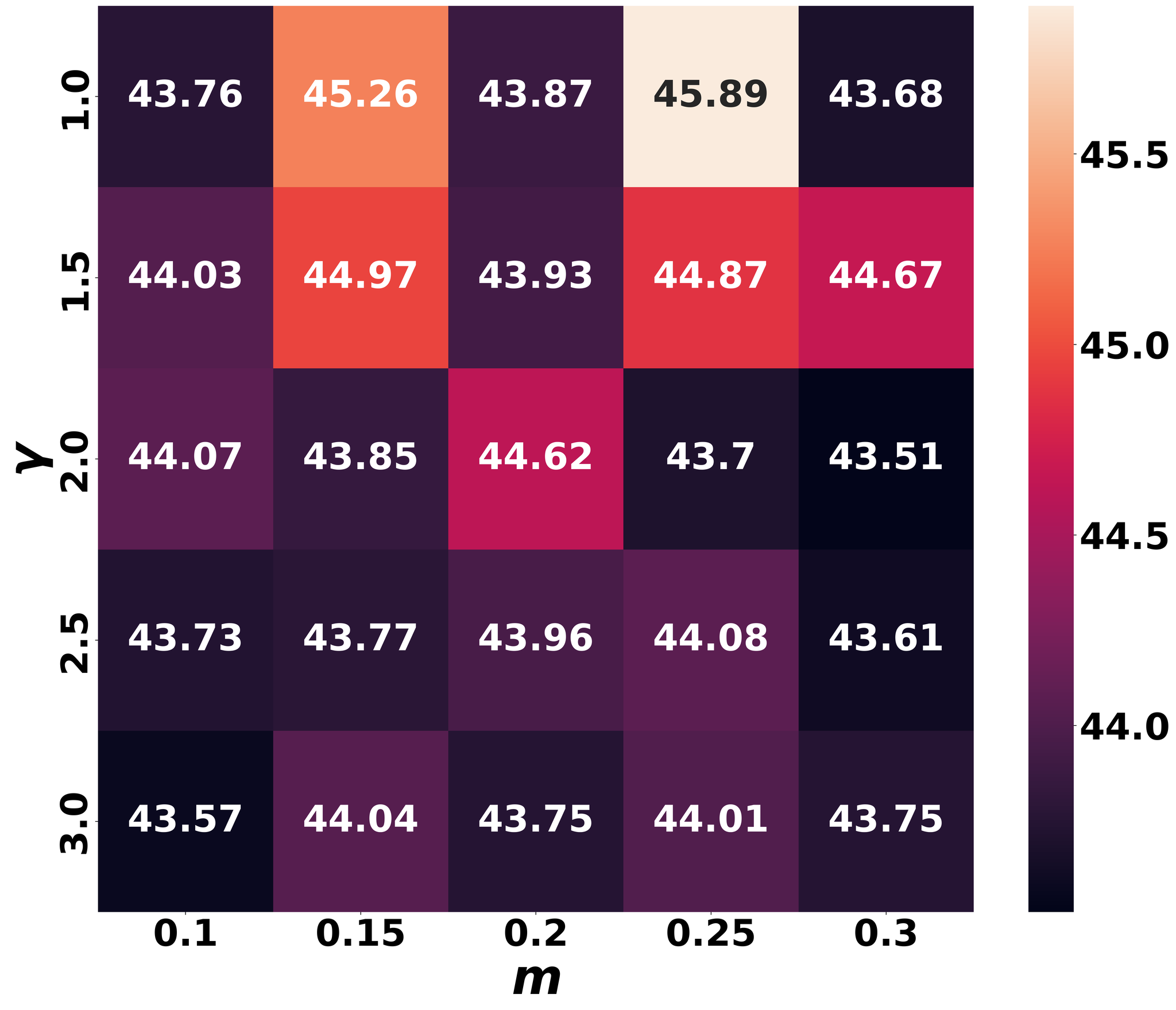}
        \caption{NMI CiteSeer}
    \end{subfigure}
    \hfill
    \begin{subfigure}{0.2\linewidth}
        \centering
       \includegraphics[width=\textwidth]{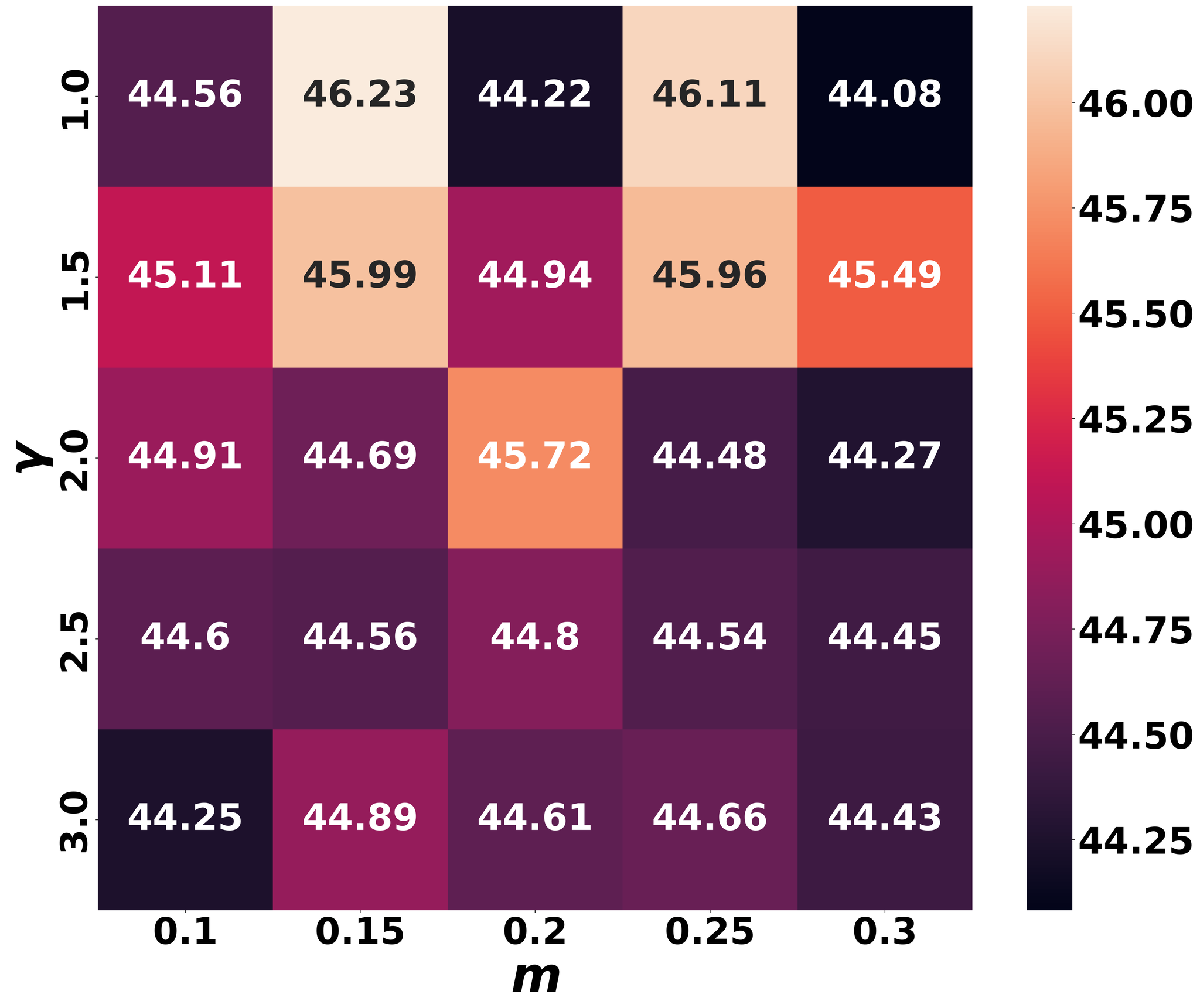}
       \caption{ARI CiteSeer}
    \end{subfigure}
     \hfill
    \begin{subfigure}{0.2\linewidth}
        \centering
      \includegraphics[width=\textwidth]{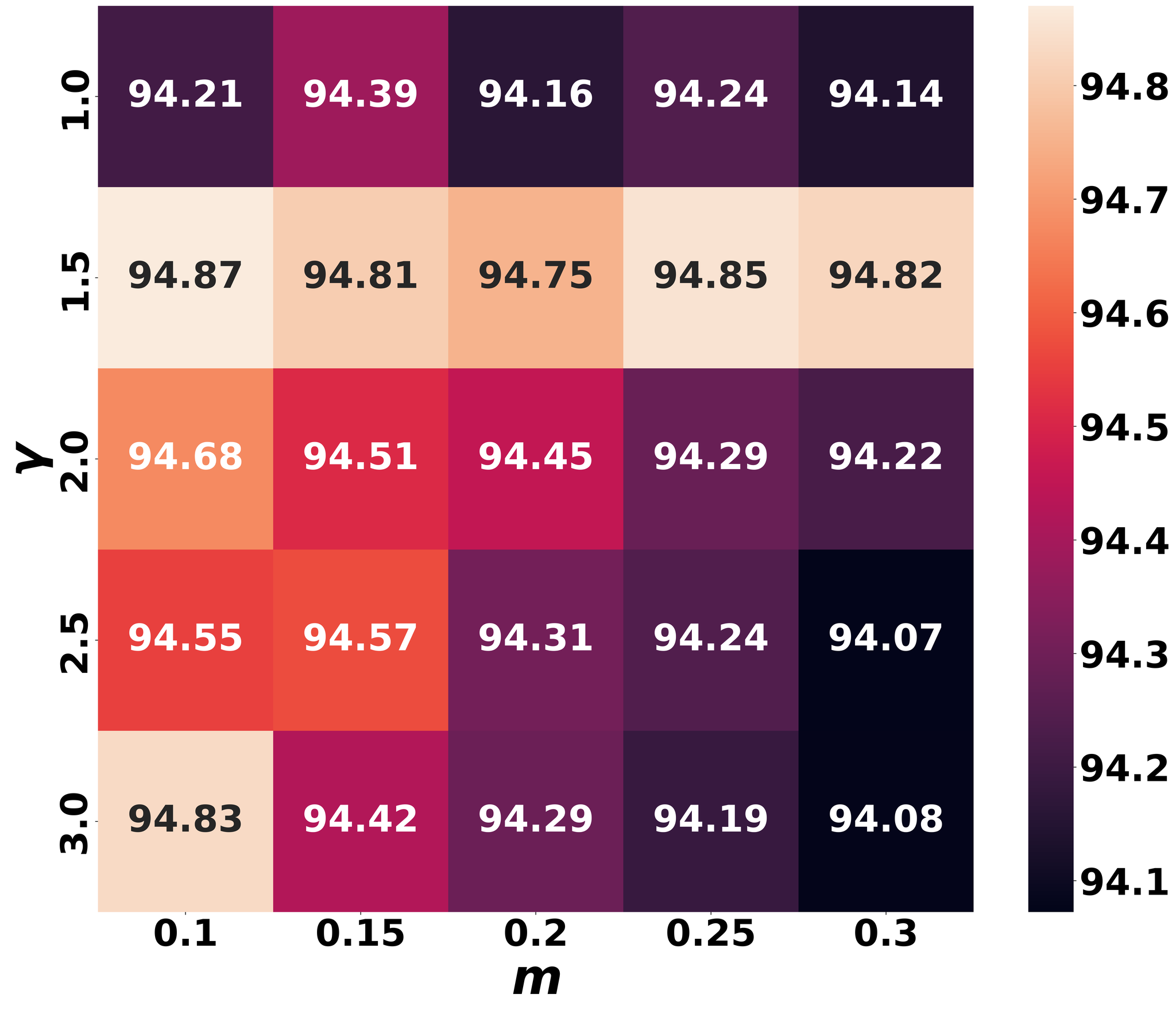}
        \caption{ROC-AUC CiteSeer}
    \end{subfigure}    

    \caption{Sensitivity of LSW-ML-GSSL to $m$ and $\gamma$. 
    }
    \label{fig:sensitivity}
\end{figure*}

\begin{figure*}[t]
    \centering
    \begin{subfigure}{0.2\linewidth}
        \centering
        \includegraphics[width=\textwidth]{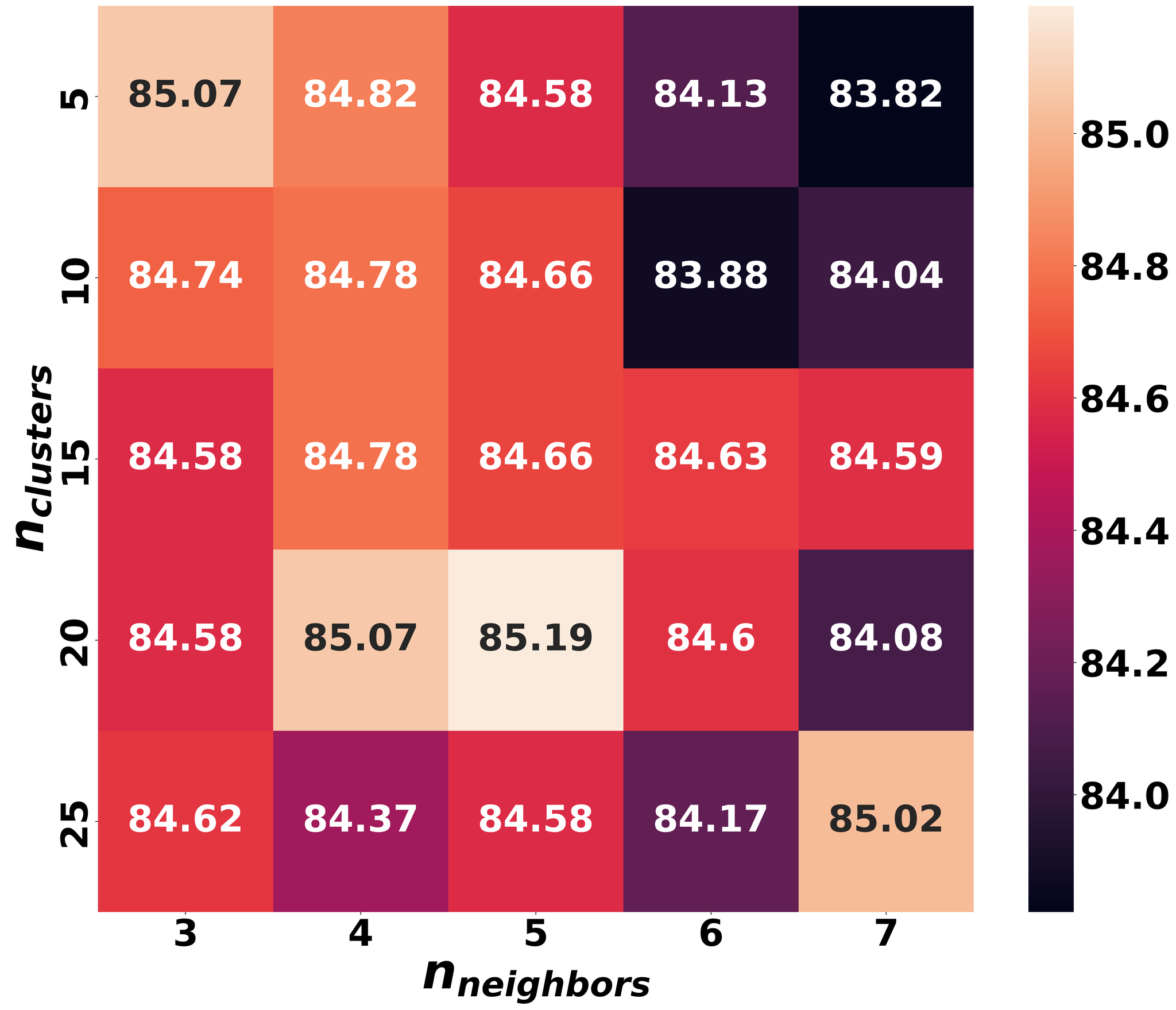}
        \caption{ACC Cora}
    \end{subfigure}
    \hfill
    \begin{subfigure}{0.2\linewidth}
        \centering
        \includegraphics[width=\textwidth]{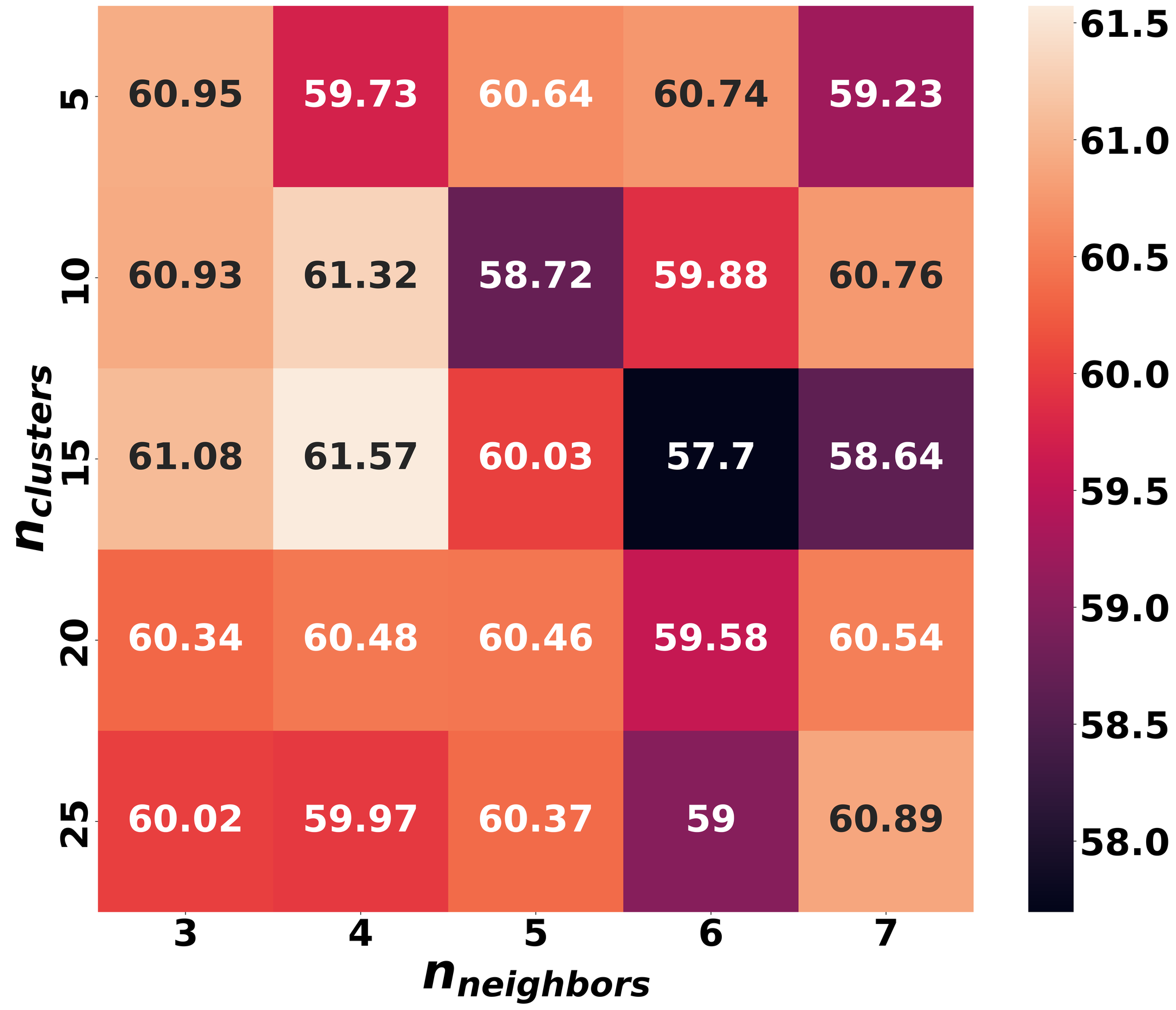}
        \caption{NMI Cora}
    \end{subfigure}
    \hfill
    \begin{subfigure}{0.2\linewidth}
        \centering
      \includegraphics[width=\textwidth]{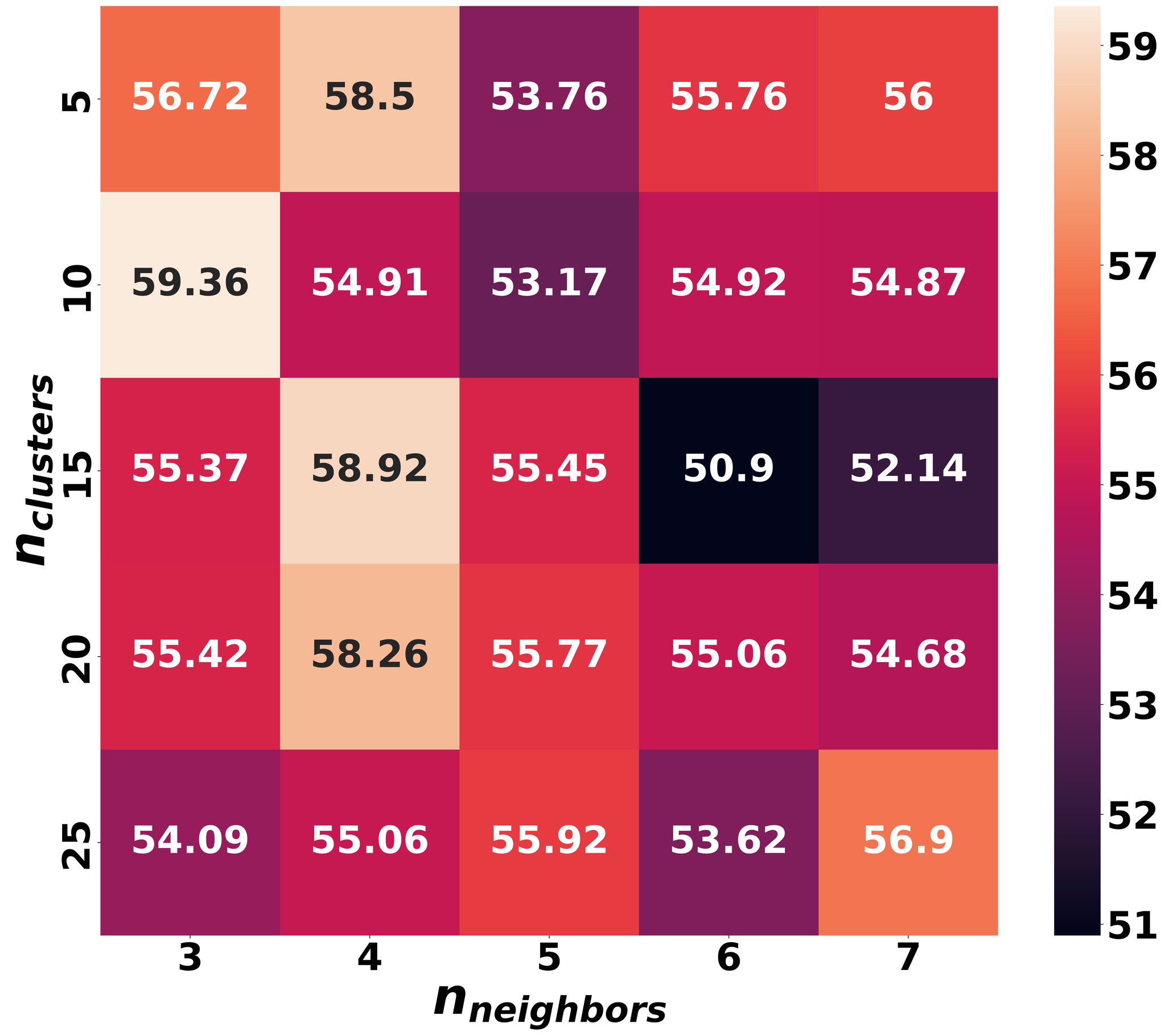}
        \caption{ARI Cora}
    \end{subfigure}
     \hfill
 \begin{subfigure}{0.2\linewidth}
        \centering
      \includegraphics[width=\textwidth]{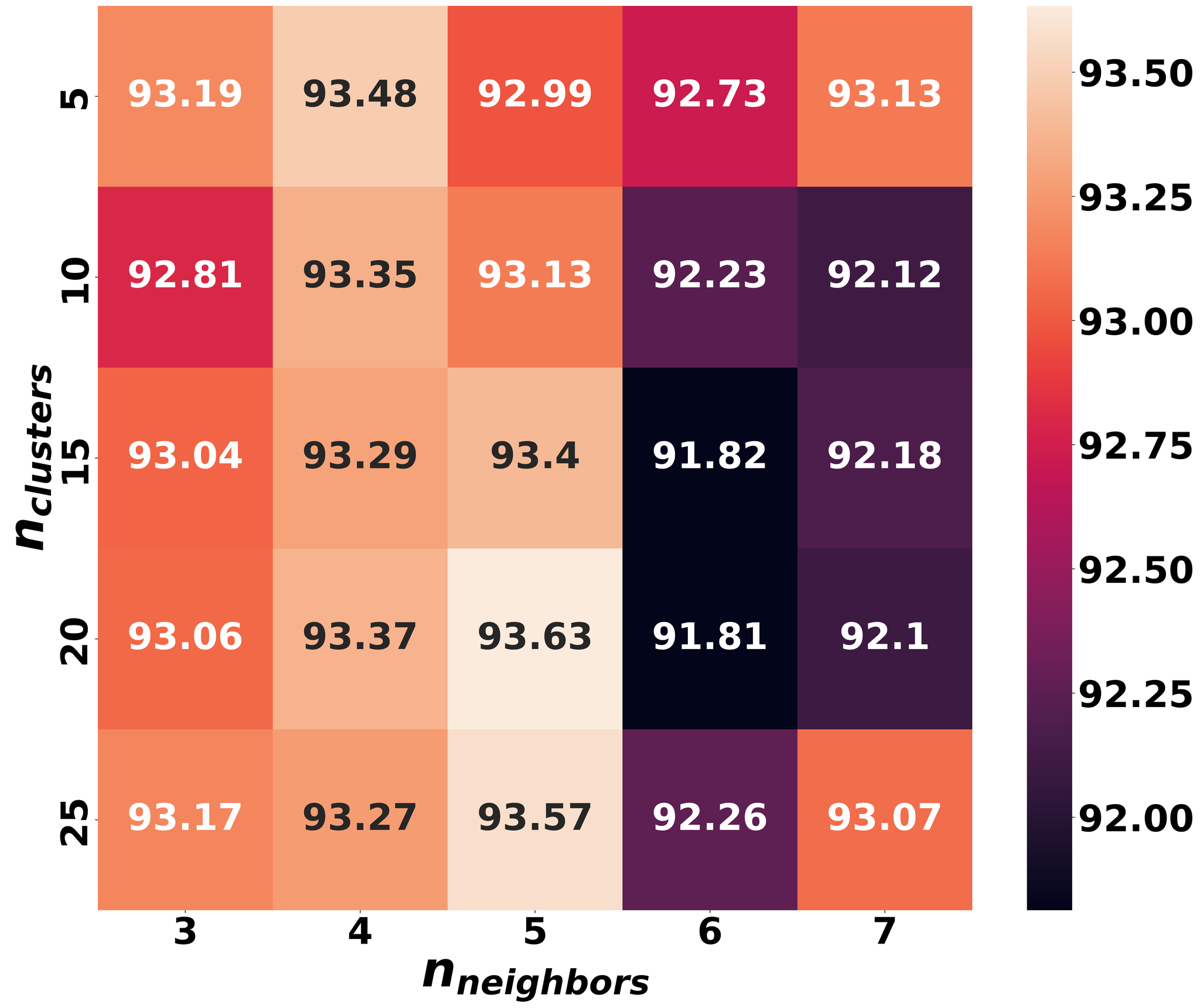}
        \caption{ROC-AUC Cora}
    \end{subfigure}   

    \vspace{0.5cm} 
    
    \begin{subfigure}{0.2\linewidth}
        \centering
        \includegraphics[width=\textwidth]{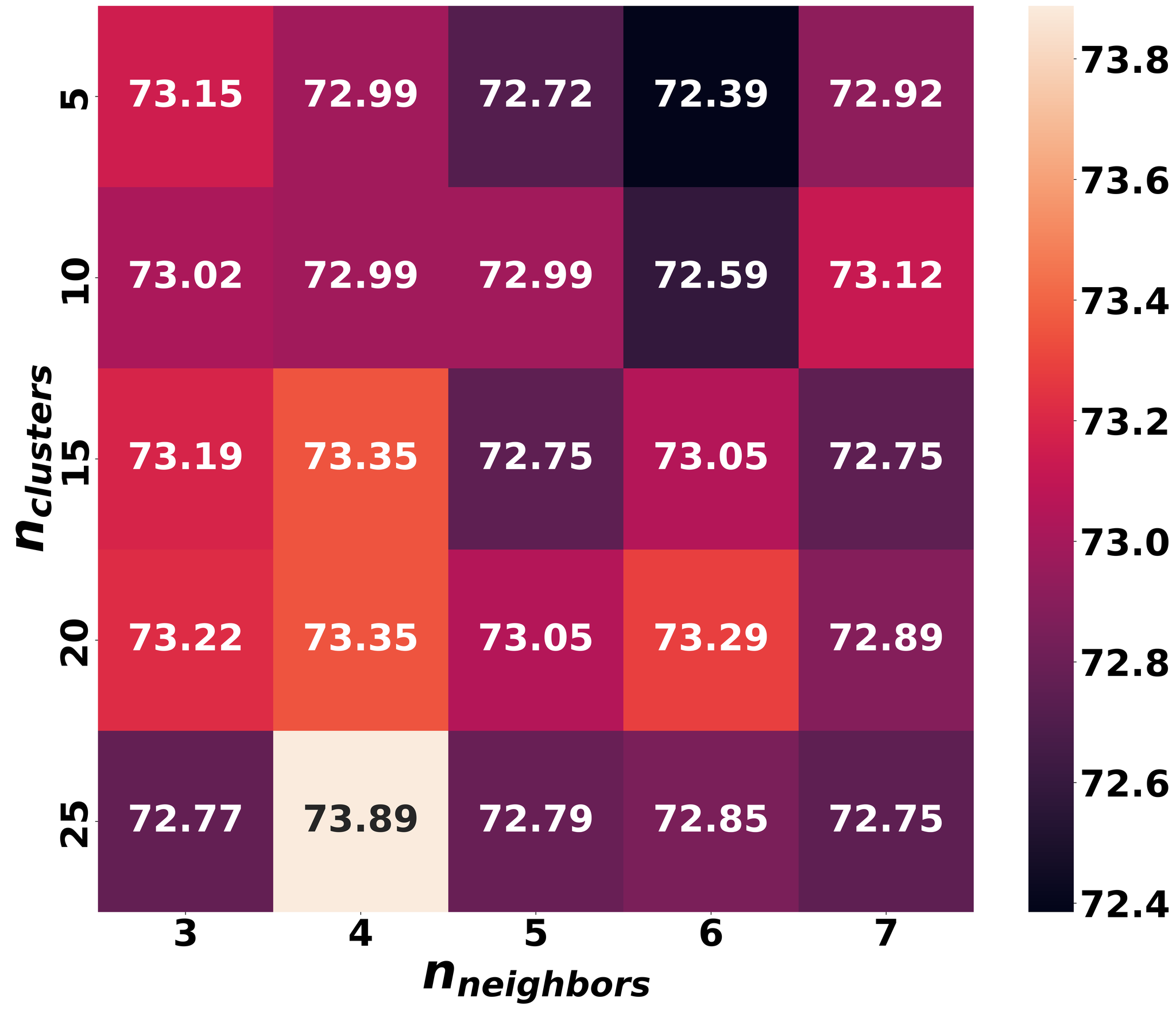}
        \caption{ACC CiteSeer}
    \end{subfigure}
    \hfill
    \begin{subfigure}{0.2\linewidth}
        \centering
        \includegraphics[width=\textwidth]{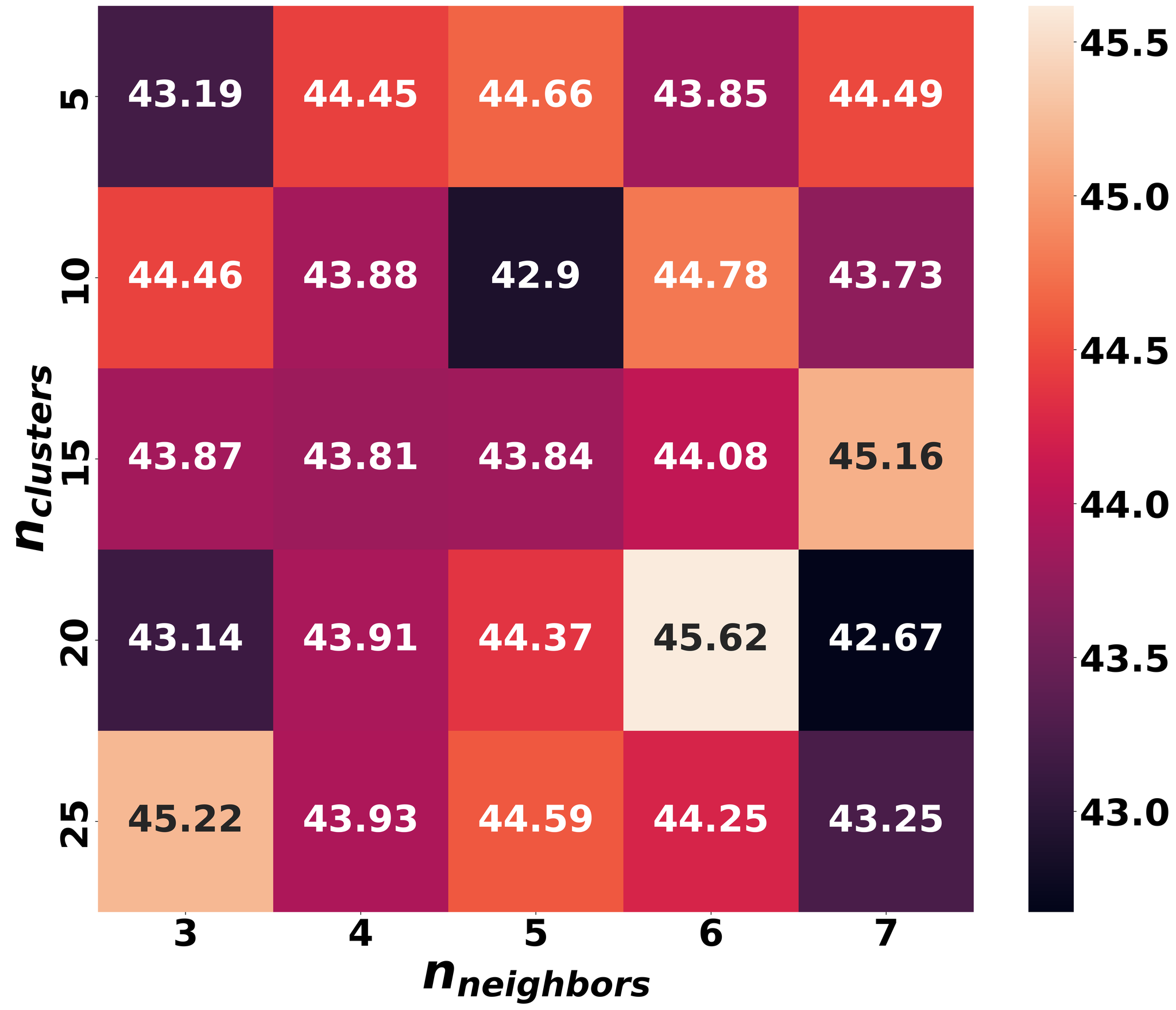}
        \caption{NMI CiteSeer}
    \end{subfigure}
    \hfill
    \begin{subfigure}{0.2\linewidth}
        \centering
       \includegraphics[width=\textwidth]{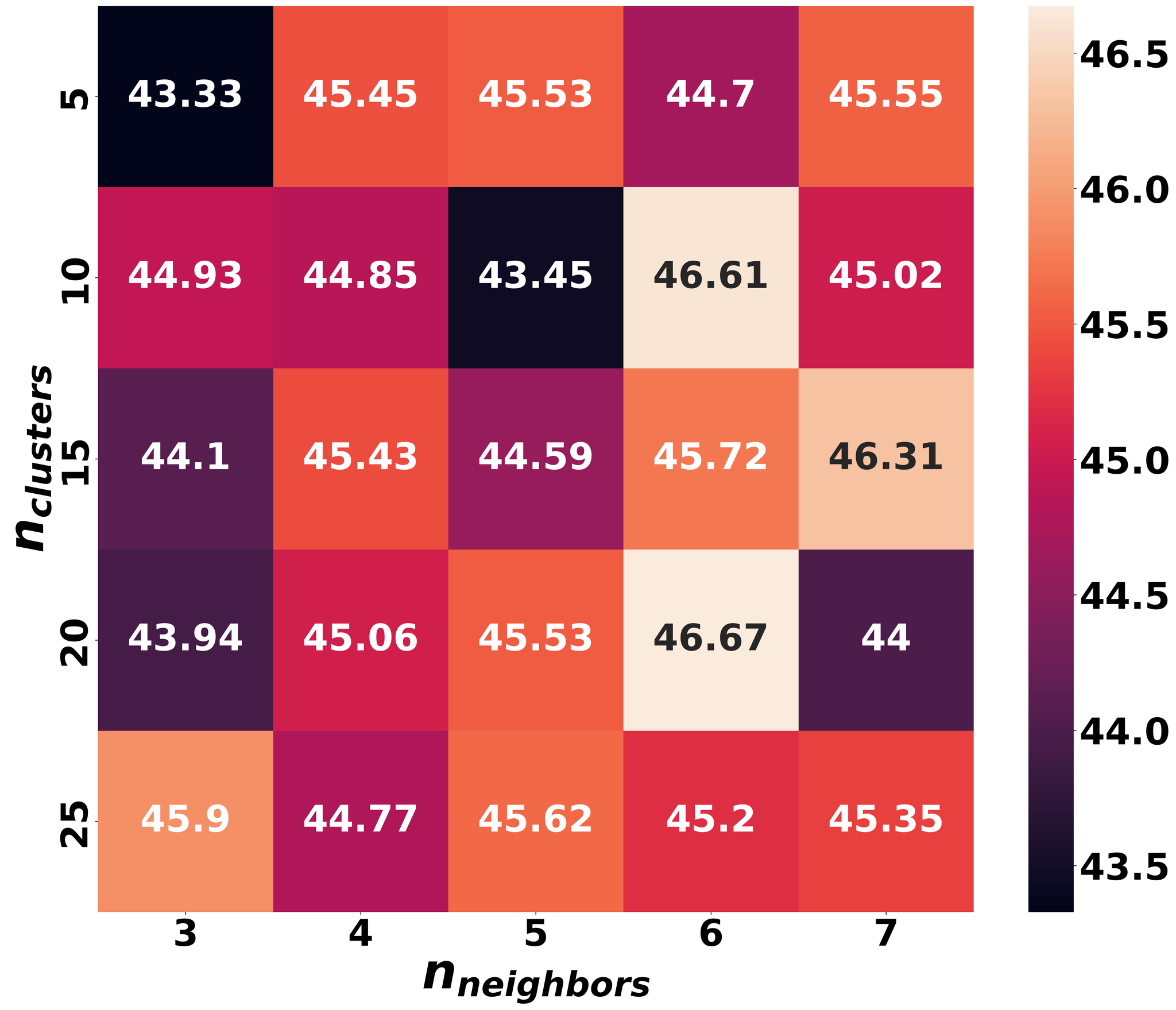}
       \caption{ARI CiteSeer}
    \end{subfigure}
     \hfill
    \begin{subfigure}{0.2\linewidth}
        \centering
      \includegraphics[width=\textwidth]{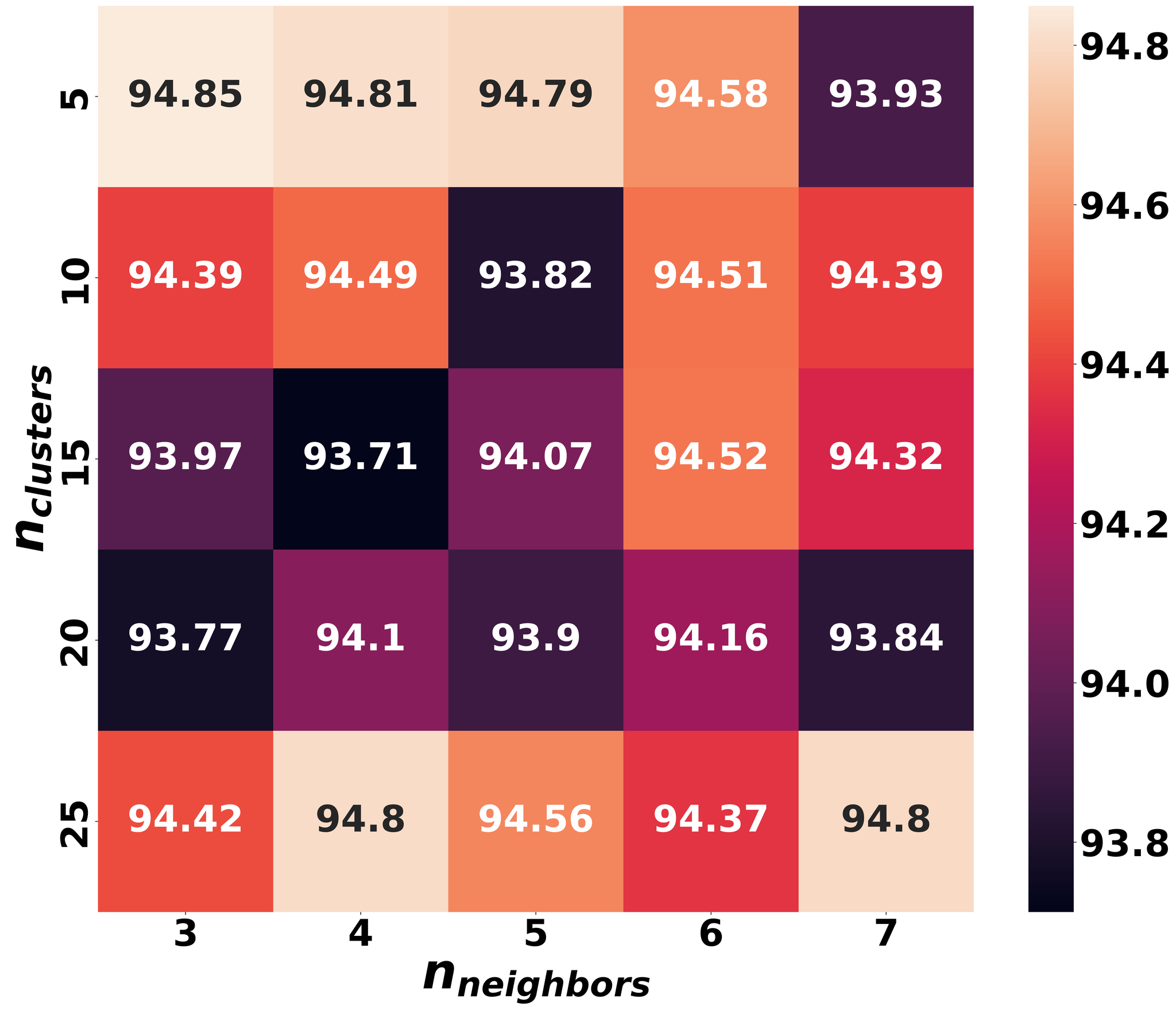}
        \caption{ROC-AUC CiteSeer}
    \end{subfigure}    

    \caption{Sensitivity of LSW-ML-GSSL to the neighborhood size and the number of clusters.}
    \label{fig:sensitivity_revision}
\end{figure*}

\textbf{Execution Time.} In this experiment, we evaluate the execution time of LSW-ML-GSSL compared to previous multi-task GSSL methods. The results in Fig. \ref{fig:execution_time} illustrate the efficiency of our approach compared with state-of-the-art methods. This efficiency stems from the simplicity of the self-weighting solution that eliminates the inner-optimization requirement. Furthermore, our multi-task GSSL method employs a single GNN encoder, whereas previous multi-task GSSL methods typically rely on multiple architectural components and/or computationally intensive operations. These include trainable gating networks (e.g., DyFSS), task-specific expert networks (e.g., GraphTCM, DyFSS, and WAS), and inner-optimization algorithms (e.g., AutoSSL and ParetoGNN). Notably, GraphTCM was excluded from the comparison due to its significantly higher computational cost. Specifically, its execution times on the Cora, CiteSeer, Pubmed, DBLP, Photo, and Computers datasets are 716, 1056, 39696, 59109, 29364, and 63116 seconds, respectively.

\textbf{Sensitivity to Hyperparameters.} We evaluate the sensitivity of LSW-ML-GSSL to the data-dependent hyperparameters \( m \) and \( \gamma \). Specifically, we vary \( m \) within the range \([0.10, 0.15, 0.20, 0.25, 0.30]\) and \( \gamma \) within \([1.0, 1.5, 2.0, 2.5, 3.0]\), while keeping all other hyperparameters constant across datasets. Fig. \ref{fig:sensitivity} illustrates the impact of these variations on model performance, assessed on three downstream tasks (i.e., node classification, node clustering, and link prediction) using ACC, NMI, ARI, and ROC-AUC for the Cora and CiteSeer datasets. The results indicate that LSW-ML-GSSL remains highly robust across different hyperparameter settings. As we can see, our approach exhibits small fluctuations in performance across the three downstream tasks. For both datasets, ROC-AUC scores are particularly stable across all values of \( m \) and \( \gamma \), confirming the method's reliability in link prediction tasks. ACC values show a slight increase for higher values of \( \gamma \), particularly for CiteSeer, where the best performance is observed around \( \gamma = 2.5 \). For NMI and ARI, which evaluate clustering quality, the performance of LSW-ML-GSSL exhibits minor variations, especially for smaller values of \( m \) and higher values of \( \gamma \). These results suggest the existence of a trade-off between the two hyperparameters. Interestingly, Cora appears to be more sensitive to changes in \( m \) and \( \gamma \) than CiteSeer, with noticeable performance peaks for certain configurations. Overall, our approach yields consistent performance for a broad range of values. These findings confirm that LSW-ML-GSSL is highly stable and robust to small hyperparameter changes.

In addition, we conduct a sensitivity analysis of LSW-ML-GSSL with respect to the neighborhood size and the number of clusters, and report the results in Fig.~\ref{fig:sensitivity_revision}. Overall, the model exhibits a broad performance plateau on both Cora and CiteSeer. ACC and ROC-AUC remain consistently high across a wide range of values, while NMI/ARI vary only mildly and degrade mainly when neighborhoods become large. These trends suggest that these parameters do not require careful dataset-specific tuning. Consequently, we fix them to moderate values to ensure stable performance while maintaining a consistent accuracy–efficiency trade-off. In contrast, $m$ and $\gamma$ directly affect the objective, making them inherently more data dependent and worth tuning.  

\section{Conclusion}
This work introduces the multi-level GSSL paradigm that captures information from four levels of granularity: node, proximity, cluster, and graph. In single-task learning, we propose a unified framework that can operate seamlessly at different abstraction levels. In multi-task learning, we extend the unified framework by integrating the positive and negative similarity scores across granularity levels. Our multi-task formulation aligns with the multi-level GSSL paradigm. First, the positive and negative scores are linearly combined. Second, we propose a self-weighting mechanism that adaptively prioritizes the scores that deviate significantly from their target values. The self-weighting mechanism enhances the optimization flexibility and defines a more precise convergence target. Extensive experiments show that the proposed single-task and multi-task approaches outperform state-of-the-art methods. In particular, our self-weighted method promotes task generalization and relinquishes the need for inner optimization and task-specific expert networks.

While our framework demonstrates strong performance on homogeneous graphs, many real-world networks exhibit heterogeneous structures with multiple node and edge types. A promising direction is extending the multi-level GSSL paradigm to heterogeneous graphs by replacing the homogeneous encoder with a type- or relation-aware GNN, together with per node category projection modules that embed different entities in a common space, so the unified similarity objective remains well defined. In this setting, the proximity level can be redefined using semantically meaningful meta-paths, and the cluster level can leverage type-aware community structures. Notably, our self-weighting mechanism is well-suited for this extension, as it can naturally balance contributions across both abstraction levels and relation types. A comprehensive heterogeneous treatment and evaluation are left as future work.

\appendices
\section{Convergence Analysis} 
\label{appendix_a}
We analyze the optimization dynamics directly in the similarity-score space $\{s_l^+,s_l^-\}_{l=1}^4$ under a toy setting with a single anchor and one positive/negative sample per level. This formulation enables a rigorous characterization of how the proposed self-weighting mechanism shapes the loss geometry and the resulting training dynamics.

\setcounter{theorem}{0}
\setcounter{proposition}{0}
% Ensure unique PDF anchors after resetting theorem-like counters.
\makeatletter
\renewcommand{\theHtheorem}{appendix.\arabic{theorem}}
\renewcommand{\theHproposition}{appendix.\arabic{proposition}}
\makeatother

\begin{proposition}[Quadratic form of the exponent argument]

By adopting a shifted cosine similarity ($(1+cos)/2$) and applying the reparameterization $\delta^+=1-m$, $\delta^-=m$, $o_l^+=1+m$, and $o_l^-=-m$ for all levels $l$, we can rewrite $\Delta''_l$ as follows:
\[
\Delta''_l
=
(s_l^-)^2 + (1-s_l^+)^2 - 2m^2.
\]
Consequently, the exponent argument of the toy version of Eq.~(\ref{eq.loss_lsw_ml_gssl}) becomes:
\[
\sum_{l=1}^{4} \Delta''_l
=
D - 8m^2,
\qquad
D = \sum_{l=1}^{4}\big[(s_l^-)^2 + (1-s_l^+)^2\big].
\]
\end{proposition}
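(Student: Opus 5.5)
The plan is a direct algebraic substitution into the definition of $\Delta''_l$ in Eq.~(\ref{eq.residual_similarity_score_second}), specialized to the toy setting (one anchor, one positive and one negative sample per level, so the indices $i,j,k$ are dropped). The first and only genuinely non-mechanical step is to remove the cut-off operators $[\cdot]_+$. Under the shifted cosine similarity of Eq.~(\ref{eq:shifted_cosine}) we have $s_l^\pm\in[0,1]$. With $o_l^-=-m$ and $o_l^+=1+m$ and $m>0$, this gives
\[
s_l^- - o_l^- = s_l^- + m \ge m > 0, \qquad o_l^+ - s_l^+ = 1+m-s_l^+ \ge m > 0 .
\]
So both weights are strictly positive on the whole feasible region, and $[\cdot]_+$ acts as the identity. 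This is exactly why the shifted cosine (rather than raw cosine) and the choice of endpoints outside $[0,1]$ are needed. I will state the assumption $m>0$ explicitly, noting that $m\ge 0$ already suffices.

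Next I substitute $\delta^-=m$ and $\delta^+=1-m$:
\[
\Delta''_l = (s_l^-+m)(s_l^- - m) - (1+m-s_l^+)(s_l^+-1+m).
\]
Each product is a difference of squares. The first term is $(s_l^-)^2-m^2$. For the second, I write $u=1-s_l^+$, so that $(1+m-s_l^+)(s_l^+-1+m) = (m+u)(m-u) = m^2-u^2$. Subtracting gives $\Delta''_l=(s_l^-)^2+(1-s_l^+)^2-2m^2$, which is the first claim.

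Finally, I sum this identity over $l=1,\dots,4$. The constant term contributes $4\cdot 2m^2=8m^2$, and the remaining terms give $D$, so $\sum_l\Delta''_l=D-8m^2$, which is the exponent argument of the toy version of Eq.~(\ref{eq.loss_lsw_ml_gssl}). As a consistency remark, setting $D=8m^2$ recovers the boundary in Eq.~(\ref{eq.circle}).

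I expect no real obstacle. The only point requiring care is the justification that the $[\cdot]_+$ cut-offs are inactive, which is what turns the piecewise definition into a global quadratic.
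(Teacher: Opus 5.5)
Your proposal is correct and follows the paper's proof essentially step for step. Both arguments remove the cut-offs using $s_l^\pm\in[0,1]$ together with $o_l^-=-m$ and $o_l^+=1+m$, substitute $\delta^\pm$, factor the second product as a difference of squares (the paper sets $a=s_l^+-1$ where you set $u=1-s_l^+$), and sum over the four levels; your explicit remark that $m\ge 0$ is what keeps the cut-offs inactive usefully states an assumption the paper leaves implicit.
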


%%%%%%%%%%%%%%%%%%%%%%%%%%%%%%%%%%%%%%%%%%%%Proof Prop. 1%%%%

\begin{IEEEproof}
From Eqs.~(\ref{positive_weight}--\ref{negative_weight}), the self-weighting coefficients are
$\alpha_l^+=[o_l^+-s_l^+]_+$ and $\alpha_l^-=[s_l^- - o_l^-]_+$.
Since we use a shifted cosine similarity ($(1+cos)/2$), the similarity scores satisfy $s_l^+,s_l^- \in [0,1]$ for all $l$. Since $s_l^- \in [0,1]$ and $o_l^-=-m$, we have $s_l^- - o_l^- = s_l^-+m >0$, thus $\alpha_l^- = s_l^-+m$.
Similarly, since $s_l^+ \in [0,1]$ and $o_l^+=1+m$, we have $\alpha_l^+ = 1+m-s_l^+$.
Plugging into Eq.~(\ref{eq.residual_similarity_score_second}) and using $\delta^-=m$ and $\delta^+=1-m$: 
\begin{align}
\Delta''_l
&= (s_l^-+m) \,(s_l^- - m) - (1+m-s_l^+) \,\big(s_l^+-(1-m)\big)\nonumber\\
&= (s_l^-)^2 - m^2 - (1+m-s_l^+) \, \big(s_l^+ - 1 + m\big).\label{eq:delta_expand}
\end{align}
Let $a = s_l^+ - 1$. Then $a \in [-1,0]$ and
$(1+m-s_l^+) = m-a$ and $(s_l^+ - 1 + m) = a+m$, hence

\begin{equation}
\begin{aligned}
(1+m-s_l^+) \, \big(s_l^+ - 1 + m\big) 
&= (m-a) \, (m+a)\\
&=m^2-a^2\\
&=m^2-(s_l^+-1)^2.
\end{aligned}
\end{equation}

\noindent Substituting back into \eqref{eq:delta_expand} gives
\begin{equation}
\begin{aligned}
\Delta''_l
&= (s_l^-)^2 - m^2 - \Big(m^2-(s_l^+-1)^2\Big) \\
&= (s_l^-)^2 + (s_l^+-1)^2 - 2m^2 \\
&= (s_l^-)^2 + (1-s_l^+)^2 - 2m^2 .
\end{aligned}
\end{equation}
Summing over $l=1,\dots,4$ yields\\
$\sum_{l=1}^{4}\Delta''_l = \sum_{l=1}^{4}\big[(s_l^-)^2+(1-s_l^+)^2\big] - 8m^2
= D-8m^2$.
\end{IEEEproof}

%%%%%%%%%%%%%%%%%%%%%%Theorem 1%%%%%%%%%%%%%%%%%%%%%%%%%%%%

\begin{theorem}[Error contraction in the toy similarity space]
\label{thm:radial_contraction}
In the same toy setting, we define the error vector: 
\[
\mathbf e =
[s_1^-,\dots,s_4^-,\,1-s_1^+,\dots,1-s_4^+]^\top.
\]
Let
\[
Z = D-8m^2,
\qquad
\mathcal{L} = \log\big(1+\exp(\gamma Z)\big).
\]
Then
\[
\nabla_{\mathbf e}\mathcal{L} = 2\,\gamma\,\sigma(\gamma \, Z)\,\mathbf e,
\quad\text{where}\quad
\sigma(t)=\frac{e^t}{1+e^t}\in(0,1).
\]
Under gradient descent with step size $0<\eta<\frac{1}{2\gamma}$, the error vector contracts multiplicatively:
\[
\mathbf e^{(t+1)} = \Big(1-2 \, \eta \, \gamma \, \sigma(\gamma \, Z^{(t)})\Big)\mathbf e^{(t)},
\]
and therefore:
\[
D^{(t+1)}=
\Big(1-2 \,\eta\,\gamma\,\sigma(\gamma \, Z^{(t)})\Big)^2\,D^{(t)}.
\]
Hence the dynamics perform radial descent toward the ideal point
$s_l^- \to 0$ and $s_l^+ \to 1$ for all $l$.
\end{theorem}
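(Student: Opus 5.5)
The plan is to read the whole statement through Proposition~\ref{prop:quadratic_form}, which already gives the exponent argument as $Z=D-8m^2$ with $D=\|\mathbf e\|_2^2$. The key observation is that $D$ is exactly the squared Euclidean norm of the error vector $\mathbf e$, since its coordinates are $s_l^-$ and $1-s_l^+$. I will therefore treat $\mathcal{L}$ as a function of $\mathbf e$ alone, namely $\mathcal{L}(\mathbf e)=\log(1+\exp(\gamma(\|\mathbf e\|^2-8m^2)))$. This is a radially symmetric function of $\mathbf e$.

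First I compute the gradient by the chain rule. The derivative of $t\mapsto\log(1+e^{t})$ is $\sigma(t)$, and $\nabla_{\mathbf e}(\gamma Z)=\gamma\nabla_{\mathbf e}\|\mathbf e\|^2=2\gamma\mathbf e$. Together these give $\nabla_{\mathbf e}\mathcal{L}=2\gamma\,\sigma(\gamma Z)\,\mathbf e$.

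A point worth stating explicitly is that the gradient descent in the theorem is performed in the similarity-score (equivalently $\mathbf e$) coordinates, not in network parameters. The map from $(s_l^-,s_l^+)$ to $\mathbf e$ is affine with Jacobian $\mathrm{diag}(I,-I)$, so descending in scores is the same as descending in $\mathbf e$. Then
\[
\mathbf e^{(t+1)}=\mathbf e^{(t)}-\eta\nabla_{\mathbf e}\mathcal{L}(\mathbf e^{(t)})=\big(1-2\eta\gamma\sigma(\gamma Z^{(t)})\big)\mathbf e^{(t)}.
\]
Taking squared norms of both sides gives $D^{(t+1)}=c_t^2D^{(t)}$, where $c_t=1-2\eta\gamma\sigma(\gamma Z^{(t)})$.

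Next I control the factor $c_t$. Since $\sigma\in(0,1)$ and $0<\eta<1/(2\gamma)$, we get $0<2\eta\gamma\sigma<1$, hence $c_t\in(0,1)$. So each step is a strict, sign-preserving contraction along the ray through $\mathbf e^{(t)}$. This is the radial-descent claim: the direction $\mathbf e^{(t)}/\|\mathbf e^{(t)}\|$ is invariant and only the norm shrinks.

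To justify convergence to the ideal point $\mathbf e=0$ (i.e.\ $s_l^-\to0$, $s_l^+\to1$), I need $c_t$ bounded away from $1$ uniformly. Since $D^{(t)}$ is nonincreasing, $Z^{(t)}\ge -8m^2$, so $\sigma(\gamma Z^{(t)})\ge\sigma(-8\gamma m^2)>0$. This yields $c_t\le 1-2\eta\gamma\sigma(-8\gamma m^2)<1$, and hence geometric decay of $D^{(t)}$.

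The main obstacle is interpretive rather than computational, and it concerns feasibility of the iterates. I must check that the iterates stay in the region where Proposition~\ref{prop:quadratic_form} applies, so that the weights keep their unclipped linear form. That requires $s_l^\pm\in[0,1]$, or more weakly $s_l^->-m$ and $s_l^+<1+m$. Because $c_t\in(0,1)$, each coordinate of $\mathbf e$ shrinks in magnitude without changing sign. So $s_l^-\in[0,1]$ and $1-s_l^+\in[0,1]$ are preserved, the iterates remain in $[0,1]^8$, and the quadratic form of Proposition~\ref{prop:quadratic_form} holds at every step. I expect this invariance argument to need the most care, together with a clear statement that the dynamics are idealized score-space dynamics.
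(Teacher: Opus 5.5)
Your proposal is correct and matches the paper's proof in substance: the paper computes $\partial\mathcal L/\partial s_l^{\pm}$ by the chain rule, writes the score updates, and observes that every coordinate of $\mathbf e$ is scaled by the same $c_t\in(0,1)$. That is your vector-form computation after your valid orthogonal change of coordinates $\mathrm{diag}(I,-I)$. Your extra steps are genuine tightenings: the uniform bound $c_t\le 1-2\eta\gamma\,\sigma(-8\gamma m^2)$ is what actually gives $D^{(t)}\to 0$ (the paper only infers this from $c_t<1$), and the invariance of $[0,1]^8$ keeps Proposition~\ref{prop:quadratic_form} applicable at every step; note only that $Z^{(t)}\ge -8m^2$ follows from $D^{(t)}\ge 0$, not from $D^{(t)}$ being nonincreasing.
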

%%%%%%%%%%%%%%%%%%%%%%%%%%%%%%%%Proof Theorem 1%%%%%%%%%%%%%%%%%%%%
\begin{IEEEproof}
We first compute the partial derivatives by the chain rule.
Since $\mathcal{L}=\log(1+\exp(\gamma \, Z))$, we have:
\begin{equation}
\frac{\partial \mathcal{L}}{\partial Z}
=
\frac{\gamma \exp(\gamma \, Z)}{1+\exp(\gamma \, Z)}
=
\gamma\,\sigma(\gamma \, Z).
\end{equation}
By definition,
$D=\sum_{l=1}^4\big[(s_l^-)^2+(1-s_l^+)^2\big]$ and $Z=D-8m^2$.
Hence, for each $l$, we have:
\begin{equation}
\frac{\partial Z}{\partial s_l^-} = \frac{\partial D}{\partial s_l^-} = 2 \, s_l^-,
\end{equation}
\begin{equation}
\frac{\partial Z}{\partial s_l^+} = \frac{\partial D}{\partial s_l^+}
= 2 \, (s_l^+-1) = -2 \, (1-s_l^+).
\end{equation}
Combining these gives:
\begin{equation}
\begin{aligned}
\frac{\partial \mathcal{L}}{\partial s_l^-} &= \frac{\partial \mathcal{L}}{\partial Z}\frac{\partial Z}{\partial s_l^-} \\
&= \gamma \, \sigma(\gamma \, Z)\cdot 2 \,s_l^- \\
&= 2 \, \gamma \, \sigma(\gamma \, Z) \, s_l^-,
\end{aligned}
\end{equation}
and
\begin{equation}
\begin{aligned}
\frac{\partial \mathcal{L}}{\partial s_l^+} &= \frac{\partial \mathcal{L}}{\partial Z}\frac{\partial Z}{\partial s_l^+} \\
&= \gamma \, \sigma(\gamma \, Z)\cdot 2 \,(s_l^+-1) \\
&= -2 \, \gamma \, \sigma(\gamma \, Z)(1-s_l^+).
\end{aligned}
\end{equation}

\noindent We now derive the gradient descent updates.
For the negative similarities, we have:
\begin{align}
s_l^{-(t+1)}
&= s_l^{-(t)} - \eta \, \frac{\partial \mathcal{L}}{\partial s_l^-}\Big|_{t}\nonumber\\
&= s_l^{-(t)} - \eta \cdot 2\,\gamma\,\sigma(\gamma \, Z^{(t)})\, s_l^{-(t)}\nonumber\\
&= \Big(1-2\,\eta\,\gamma\,\sigma(\gamma \, Z^{(t)})\Big)s_l^{-(t)}.\label{eq:update_neg}
\end{align}
For the positive similarities:
\begin{align}
s_l^{+(t+1)}
&= s_l^{+(t)} - \eta \, \frac{\partial \mathcal{L}}{\partial s_l^+}\Big|_{t}\nonumber\\
&= s_l^{+(t)} - \eta \, \cdot \Big(-2 \,\gamma \, \sigma(\gamma \, Z^{(t)})(1-s_l^{+(t)})\Big)\nonumber\\
&= s_l^{+(t)} + 2 \, \eta \, \gamma \, \sigma(\gamma \, Z^{(t)})\,(1-s_l^{+(t)}).\label{eq:update_pos}
\end{align}
Define $u_l^{(t)} = 1-s_l^{+(t)}$. From \eqref{eq:update_pos}, we get:
\begin{align}
u_l^{(t+1)}
&= 1-s_l^{+(t+1)}\nonumber\\
&= 1-\Big(s_l^{+(t)} + 2\eta \,\gamma \,\sigma(\gamma \, Z^{(t)})(1-s_l^{+(t)})\Big)\nonumber\\
&= (1-s_l^{+(t)}) - 2\eta \,\gamma \, \sigma(\gamma \,Z^{(t)})(1-s_l^{+(t)})\nonumber\\
&= \Big(1-2 \, \eta \, \gamma \, \sigma(\gamma \, Z^{(t)})\Big)u_l^{(t)}.\label{eq:update_u}
\end{align}
Equations \eqref{eq:update_neg} and \eqref{eq:update_u} show that all eight components
of $\mathbf{e}^{(t)}=[s_1^{-(t)},\dots,s_4^{-(t)},u_1^{(t)},\dots,u_4^{(t)}]^\top$
are multiplied by the same scalar factor
$c_t = 1-2 \, \eta \, \gamma \, \sigma(\gamma \, Z^{(t)})$, i.e.,
$\mathbf{e}^{(t+1)}=c_t \, \mathbf{e}^{(t)}$.

Finally, since $0<\sigma(\cdot)<1$ and $0<\eta<\frac{1}{2\gamma}$,
we have $0<c_t<1$ for all $t$, so $\|\mathbf{e}^{(t+1)}\|_2^2=c_t^2 \|\mathbf{e}^{(t)}\|_2^2$.
Because $D^{(t)}=\|\mathbf{e}^{(t)}\|_2^2$, this yields
$D^{(t+1)}=c_t^2 D^{(t)}$ and proves monotone contraction toward
$\mathbf{e}^\star=\mathbf{0}$, i.e., $s_l^-=0$ and $s_l^+=1$ for all $l$.
\end{IEEEproof}

%%%%%%%%%%%%%%%%%%%%%%%%%%%%%%%%%%%% Prop 2 %%%%%%%%%%%%%%%%%%%%
After establishing that the multiplicative contraction property in Theorem~\ref{thm:radial_contraction} is specific to the self-weighting mechanism, we now show that this behavior does not, in general, hold for the linear multi-level combination in Eq.~(\ref{eq.loss_l_ml_gssl}). To illustrate this point, we consider the same toy setting with fixed coefficients $\beta_l$: 
\begin{equation}
\mathcal{L}_{\mathrm{lin}}
=
\log\Big(1+\exp\big(\gamma S\big)\Big),
\label{eq:L_lin_toy_1}
\end{equation}
\begin{equation}
S = \sum_{l=1}^{4}\beta_l\big(s_l^- - s_l^+ + m\big).
\label{eq:L_lin_toy_2}
\end{equation}
\begin{proposition}[No radial descent / no multiplicative contraction for the linear combination]
The gradient of $\mathcal{L}_{\mathrm{lin}}$ in $\mathbf{e}$-coordinates can be expressed as follows:
\[
\nabla_{\mathbf{e}} \mathcal{L}_{\mathrm{lin}}
=
\gamma\,\sigma(\gamma S)\,
[\beta_1,\dots,\beta_4,\beta_1,\dots,\beta_4]^\top,
\]
which is a fixed direction independent of $\mathbf{e}$ (up to the scalar factor
$\sigma(\gamma S)$). Therefore, in general $\nabla_{\mathbf{e}} \mathcal{L}_{\mathrm{lin}}$
is not parallel to $\mathbf{e}$ and the update is not radial. In particular, there does not exist a scalar sequence $\{c_t\}_t$, such that $\mathbf{e}^{(t+1)}=c_t \, \mathbf{e}^{(t)}$ holds for all initial $\mathbf{e}^{(0)}$ under
gradient descent on $\mathcal{L}_{\mathrm{lin}}$.
\end{proposition}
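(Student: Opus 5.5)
The plan mirrors the proof of Theorem~\ref{thm:radial_contraction}: compute the gradient by the chain rule, write out one gradient-descent step, and then exhibit an initial condition that breaks the required proportionality.

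\textbf{Gradient.} Since $\mathcal{L}_{\mathrm{lin}}=\log(1+\exp(\gamma S))$, we have $\partial\mathcal{L}_{\mathrm{lin}}/\partial S=\gamma\,\sigma(\gamma S)$. Rewrite $S$ in $\mathbf e$-coordinates using $e_l=s_l^-$ and $e_{4+l}=u_l=1-s_l^+$, so that $s_l^+=1-u_l$. This gives
\[
S=\sum_{l=1}^4\beta_l\big(e_l+e_{4+l}+m-1\big).
\]
Hence $\partial S/\partial e_l=\partial S/\partial e_{4+l}=\beta_l$, and the chain rule yields the stated gradient $\gamma\,\sigma(\gamma S)\,\mathbf b$ with $\mathbf b=[\beta_1,\dots,\beta_4,\beta_1,\dots,\beta_4]^\top$. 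The direction $\mathbf b$ does not depend on $\mathbf e$; only the scalar $\sigma(\gamma S)$ does. It follows that $\nabla_{\mathbf e}\mathcal{L}_{\mathrm{lin}}$ is parallel to $\mathbf e$ only when $\mathbf e\in\mathrm{span}(\mathbf b)$, which is a one-dimensional set. For any other $\mathbf e$ the update is not radial.

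\textbf{One descent step.} As in the proof of Theorem~\ref{thm:radial_contraction}, we perform gradient descent in score space and translate it to $\mathbf e$-coordinates. Because $e_{4+l}=1-s_l^+$, descending on $s_l^+$ is the same as descending on $u_l$. The update is therefore
\[
\mathbf e^{(t+1)}=\mathbf e^{(t)}-\eta\,\gamma\,\sigma(\gamma S^{(t)})\,\mathbf b .
\]

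\textbf{Counterexample.} Suppose, for contradiction, that some scalar $c_0$ satisfies $\mathbf e^{(1)}=c_0\,\mathbf e^{(0)}$ for every initialization $\mathbf e^{(0)}$. Then
\[
(1-c_0)\,\mathbf e^{(0)}=\eta\,\gamma\,\sigma(\gamma S^{(0)})\,\mathbf b .
\]
The right-hand side is nonzero, since $\sigma>0$, $\eta,\gamma>0$, and we assume $\mathbf b\neq\mathbf 0$ (some $\beta_l\neq0$, as the "in general" qualifier requires). So $c_0\neq1$ and $\mathbf e^{(0)}$ must be parallel to $\mathbf b$.

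To violate this, choose any $\mathbf e^{(0)}\notin\mathrm{span}(\mathbf b)$. A concrete choice, assuming $\beta_1\neq0$, is to set $s_1^-=1/2$, $s_1^+=1$, and all other errors to zero. Then $\mathbf e^{(0)}=\tfrac12\mathbf{1}_1$ (one half times the first standard basis vector). This vector is not parallel to $\mathbf b$, because $\mathbf b$ has the nonzero entry $\beta_1$ in coordinate $5$, where $\mathbf e^{(0)}$ is zero. The contradiction is already reached at $t=0$.

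\textbf{Main obstacle.} The only delicate point is keeping the quantifiers straight. The claim "no scalar sequence works for all $\mathbf e^{(0)}$" requires just one bad initialization at a single step. We should also state explicitly the degenerate exclusion $\mathbf b=\mathbf 0$, since then the dynamics are trivially stationary with $c_t=1$.
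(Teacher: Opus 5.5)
Your proposal is correct and follows essentially the same route as the paper: you derive the gradient $\gamma\,\sigma(\gamma S)\,[\beta_1,\dots,\beta_4,\beta_1,\dots,\beta_4]^\top$ by the chain rule, observe that a radial step would force $\mathbf e^{(0)}$ to be collinear with this fixed direction, and refute that with a first-coordinate initialization whose fifth coordinate is moved whenever $\beta_1\neq 0$. Your version is slightly tighter than the paper's, since the identity $(1-c_0)\mathbf e^{(0)}=\eta\gamma\,\sigma(\gamma S^{(0)})\,\mathbf b$ replaces its looser ``generally'' and you explicitly exclude the degenerate case $\mathbf b=\mathbf 0$, but the argument is the same.
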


%%%%%%%%%%%%%%%%%%%%%%%%%%%%%%%%%Proof Prop. 2%%%%%%%%%
\begin{IEEEproof}
The derivative of $\mathcal{L}_{\mathrm{lin}}=\log(1+\exp(\gamma \, S))$ w.r.t. $S$ is
$\partial \mathcal{L}_{\mathrm{lin}}/\partial S = \gamma \, \sigma(\gamma \, S)$.
Moreover,
$\partial S/\partial s_l^- = \beta_l$ and $\partial S/\partial s_l^+ = -\beta_l$.
Thus
$\partial \mathcal{L}_{\mathrm{lin}}/\partial s_l^- = \gamma \, \sigma(\gamma \, S) \, \beta_l$
and
$\partial \mathcal{L}_{\mathrm{lin}}/\partial s_l^+ = -\gamma \, \sigma(\gamma \, S) \, \beta_l$.

From the definition of $\mathbf{e}$, its components satisfy $e_l=s_l^-$ and $e_{4+l}=1-s_l^+$ for $l=1,\dots,4$.
By the chain rule,
$\partial \mathcal{L}_{\mathrm{lin}}/\partial e_{4+l}
=
-(\partial \mathcal{L}_{\mathrm{lin}}/\partial s_l^+)
=
\gamma \, \sigma(\gamma \, S) \, \beta_l$.
Hence
$\nabla_{\mathbf{e}} \mathcal{L}_{\mathrm{lin}}
=
\gamma \, \sigma(\gamma \, S) \, [\beta_1,\dots,\beta_4,\beta_1,\dots,\beta_4]^\top$,
which is a fixed direction independent of $\mathbf{e}$ (up to the scalar factor).

To show that multiplicative contraction cannot hold in general, assume by contradiction
that there exists a scalar $c_t$ such that
$\mathbf{e}^{(t+1)}=c_t \, \mathbf{e}^{(t)}$ for all $\mathbf{e}^{(t)}$.
Then the gradient descent step
$\mathbf{e}^{(t+1)}=\mathbf{e}^{(t)}-\eta \, \nabla_{\mathbf{e}} \mathcal{L}
_{\mathrm{lin}}(\mathbf{e}^{(t)})$
implies that $\nabla_{\mathbf{e}} \mathcal{L}_{\mathrm{lin}}(\mathbf{e}^{(t)})$ is always parallel to
$\mathbf{e}^{(t)}$. However, since $\nabla_{\mathbf{e}} \mathcal{L}_{\mathrm{lin}}$ has a fixed direction,
this can only hold when $\mathbf{e}^{(t)}$ is collinear with
$[\beta_1,\dots,\beta_4,\beta_1,\dots,\beta_4]^\top$, which is not true for general
initial conditions.

For an explicit counterexample, take $\mathbf{e}^{(0)}$ with nonzero entries only in the
first level, e.g.,
$\mathbf{e}^{(0)}=[1,0,0,0,\,0,0,0,0]^\top$.
Then $\nabla_{\mathbf{e}} \mathcal{L}_{\mathrm{lin}}(\mathbf{e}^{(0)})$ has nonzero entries in the first and fifth coordinates whenever $\beta_1\neq 0$, so $\mathbf{e}^{(1)}=\mathbf{e}^{(0)}-\eta \, \nabla_{\mathbf{e}} \mathcal{L}_{\mathrm{lin}}(\mathbf{e}^{(0)})$
generally has nonzero entries in multiple coordinates and cannot equal
$c_0 \mathbf{e}^{(0)}$ for any scalar $c_0$.
This proves that the multiplicative contraction property of
Theorem~\ref{thm:radial_contraction} does not hold for the linear combination.
\end{IEEEproof}

\balance
\bibliographystyle{IEEEtran}

\bibliography{references}

\vspace*{-1cm}
%===================== Mahmoud =========================%
\begin{IEEEbiography}[{\includegraphics[width=1in, height=1.25in,clip,keepaspectratio]{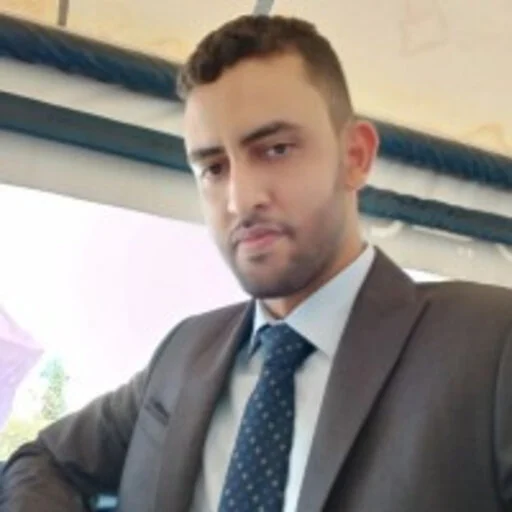}}]
{Mohamed Mahmoud Amar} is a Ph.D. student at the University of Quebec at Montreal (UQAM). His research interests include graph representation learning and multi-task learning.
\end{IEEEbiography}

\vspace*{-1cm}
%===================== Nairouz =========================%
\begin{IEEEbiography}[{\includegraphics[width=1in, height=1.25in,clip,keepaspectratio]{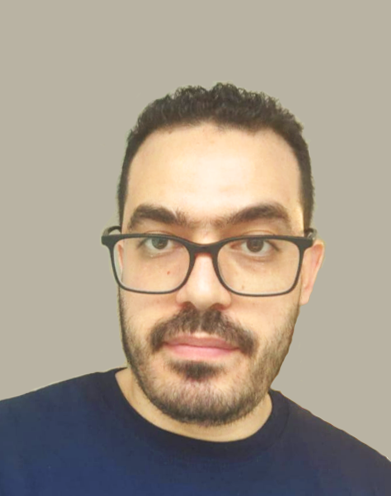}}]
{Nairouz Mrabah} received his Ph.D. degree from the University of Quebec at Montreal (UQAM). His research interests include clustering, graph representation learning, and vision-language models.
\end{IEEEbiography}

\vspace*{-1cm}
%===================== Mohamed =========================%
\begin{IEEEbiography}[{\includegraphics[width=1in,height=1.20in,clip,keepaspectratio]{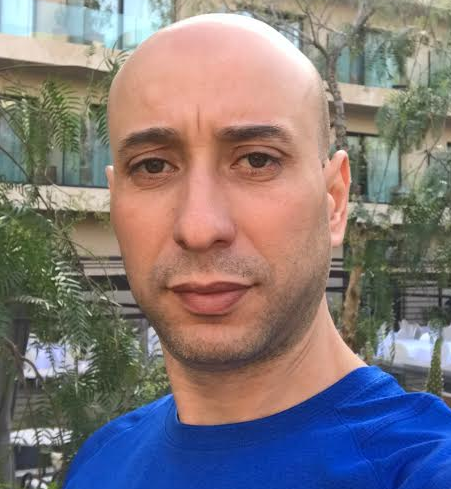}}]{Mohamed Bouguessa}
\scriptsize received the M.Sc. and the Ph.D. degrees, respectively, in 2005 and 2009 from the University of Sherbrooke, Quebec, Canada. He is currently a professor of computer science at the University of Quebec at Montreal (UQAM). His research focuses on graph mining and graph representation learning.
\end{IEEEbiography}
\vspace*{-1cm}

%===================== Abdoulaye =========================%
\begin{IEEEbiography}[{\includegraphics[width=1in,height=1.20in,clip,keepaspectratio]{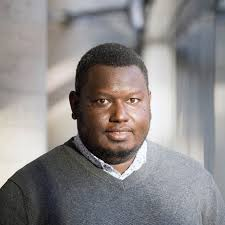}}]{Abdoulaye Baniré Diallo}
\scriptsize received the PhD degree from McGill University, Montreal, QC, Canada, in 2009. He is currently a Professor of Computer Science at the Université du Québec à Montréal (UQAM). His research focuses on the design of algorithmic methods for the analysis of heterogeneous biological data.
\end{IEEEbiography}

\end{document}